\newcommand{\Appendix}[1]{the full version for}
\newcommand{\ltwonorm}[1]{\left\| #1 \right\|_2}
\newtheorem{theorem}{Theorem}[section]
\newtheorem{lemma}[theorem]{Lemma}
\newtheorem{remark}{Remark}
\newtheorem{definition}{Definition}
\newcommand{\C}{\mathcal{C}}
\newcommand{\E}{\mathbb{E}}
\newcommand{\G}{\mathcal{G}}
\newcommand{\I}{\mathbf{I}}
\newcommand{\bI}{\mathbb{I}}
\newcommand{\R}{\mathbb{R}}
\newcommand{\Z}{\mathbb{Z}}
\renewcommand{\comment}[1]{}
\newcommand{\red}[1]{}%
\newcommand{\cA}{\mathcal{A}}
\newcommand{\cB}{\mathcal{B}}
\newcommand{\cD}{\mathcal{D}}
\newcommand{\cH}{\mathcal{H}}
\newcommand{\cL}{\mathcal{L}}
\newcommand{\cP}{\mathcal{P}}
\newcommand{\cY}{\mathcal{Y}}
\newcommand{\cX}{\mathcal{X}}
\newcommand{\bbE}{\mathbb{E}}
\newcommand{\vol}{\mathsf{vol}}
\definecolor{colorY}{rgb}{0.7 , 0.7 , 0.2}
\newenvironment{proofoutline}{\noindent{\emph{Proof Sketch. }}}{\hfill$\square$\medskip}
\title{Efficiently Learning the Graph for Semi-supervised Learning}
\author[1]{\href{mailto:<dravyans@cs.cmu.edu>?Subject=Your UAI 2023 paper}{Dravyansh Sharma}{}}
\author[1]{\href{mailto:<mjones2@andrew.cmu.edu>?Subject=Your UAI 2023 paper}{Maxwell Jones}{}}
\affil[1]{%
    School of Computer Science.\\
    Carnegie Mellon University\\
    Pittsburgh, PA, 15213
}
\begin{document}
\onecolumn %
\maketitle

\begin{abstract}
  Computational efficiency is a major bottleneck in using classic graph-based approaches for semi-supervised learning on datasets with a large number of unlabeled examples. Known techniques to improve efficiency typically involve an approximation of the graph regularization objective, but suffer two major drawbacks – first the graph is assumed to be known or constructed with heuristic hyperparameter values, second they do not provide a principled approximation guarantee for learning over the full unlabeled dataset. Building on recent work on learning graphs for semi-supervised learning from multiple datasets for problems from the same domain, and leveraging techniques for fast approximations for solving linear systems in the graph Laplacian matrix, we propose algorithms that overcome both the above limitations. 
  
  We show a formal separation in the learning-theoretic complexity of sparse and dense graph families. We further show how to approximately learn the best graphs from the sparse families efficiently using the conjugate gradient method. 
    Our approach can also be used to learn the graph efficiently online with sub-linear regret, under mild smoothness assumptions. Our online learning results are stated generally, and may be useful for approximate and efficient parameter tuning in other problems. We implement our approach and demonstrate significant ($\sim$10-100x) speedups over prior work on semi-supervised learning with learned graphs on benchmark datasets.
\end{abstract}

\section{Introduction}\label{sec:intro}

As machine learning finds applications in new domains like healthcare, finance and a variety of industrial sectors \citep{vamathevan2019applications,kumar2022kdd,larranaga2018industrial}, obtaining sufficiently large human-annotated datasets for applying supervised learning is often prohibitively expensive. Semi-supervised learning can solve this problem by utilizing unlabeled data, which is more readily available, together with a small amount of human-labeled data. Graph-based techniques, where the similarity of examples is encoded using a graph, are popular and effective for learning using unlabeled data \citep{zhu2009introduction}. Several heuristic approaches for learning given the graph are known, but the choice of a good graph is strongly dependent on the problem domain. How to create the graph has largely been `more of an art than science' \citep{zhu2005semi}, although recent work proposes how to provably learn the best graph for a given problem domain from the data \citep{balcan2021data}. A key limitation of the proposed techniques is their computational efficiency, as the proposed algorithms take $\Tilde{O}(n^4)$ time which make them impractical to run on real datasets. In this work we propose new and more practical approaches that exploit graph {\it sparsity} and employ {\it approximate optimization} to obtain more powerful graph learning techniques with formal guarantees for their effectiveness, and improved efficiency guarantees.

Past work on improving the efficiency of graph-based semi-supervised learning has focused largely on selecting a subset of `important' unlabeled examples. For example, one may use a greedy algorithm \citep{delalleau2005efficient} or a $k$-means based heuristic \citep{wang2016scalable}, run the graph-based algorithm only on the selected subset of examples and use some local interpolation for remaining nodes. In this work we provide more principled approaches that come with formal near-optimality guarantees, and demonstrate the trade-off between accuracy and efficiency. We focus on the data-driven setting, first studied by \citet{balcan2021data} for this problem, where one repeatedly solves multiple semi-supervised learning problems from the same problem domain, and hopes to learn a common graph that works well over the domain. 

We give tools for analysis of regret of online learning algorithms in data-driven algorithm design, applicable beyond semi-supervised graph learning, and  useful in any problem where the loss functions can be  easily approximated.
We also study sample efficiency of the number of problem samples needed to learning a good parameter when the problems come from a distribution over semi-supervised learning problems.
Our work extends prior theoretical results \citep{balcan2021data} on sample efficiency to additional graph families that capture sparsity, obtaining improved sample complexity for sparse graphs. We further propose algorithms which improve over the running time of previous proposed approaches for learning the graph. We employ the conjugate gradient method to compute fast, approximate inverses and optimize over new multi-parameter graph families that include sparse graphs which can be more efficiently optimized.
Empirically, we observe that our proposed approaches are computationally efficient, while retaining the effectiveness guarantees of learning the best graph for the given problem distribution.

In more detail, we learn the graph `bandwidth' hyperparameter for the commonly used Gaussian kernel, optimizing over a continuous parameter domain. This approach is more powerful than a grid search, which only computes results at some finite set of hyperparameter values. We extend the recent line of work on data-driven algorithm design (Section \ref{sec:rw}) to approximate online feedback, and achieve provably near-optimal hyperparameter selection.\looseness-1

\subsection{Main Contributions}\label{sec:contributions}

\begin{itemize}[leftmargin=*,topsep=4pt,partopsep=1ex,parsep=1ex]\itemsep=-4pt
    \item (Section \ref{sec:addag}) We provide a general analysis for online data-driven algorithm design with approximate loss functions, quantifying the accuracy-efficiency trade-off. While prior work on approximate algorithm selection \citep{balcan2020refined} studies bounded $\ell_\infty$-norm approximations in the distributional setting, we generalize along two axes -- we study a more general approximation class necessary to analyse our semi-supervised learning algorithm, and our results apply to online learning, even in the presence of (the more realistic) partial feedback.%
    \item (Section \ref{sec:sparse}) For graph-based semi-supervised learning, we show a formal gap in the pseudodimension of learning sparse and dense graphs. Concretely, if each graph node is connected to at most $K$ neighbors, the pseudodimension is $O(K+\log n)$, which implies an asymptotic gap relative to $\Omega(n)$ bound for learning complete graphs with Gaussian RBF kernels \citep{balcan2021data}. In particular, Theorem \ref{thm:sigma-pdim} implies that our techniques are also more sample efficient, in addition to being more computational efficient.
    \item (Section \ref{sec:algo}) We propose an efficient algorithm based on approximate Laplacian inverse for approximately computing the hyperparameter intervals where the semi-supervised loss objective is constant. We prove  convergence guarantees for our algorithm%
    , which capture a trade-off between the computational efficiency and the accuracy of loss estimation. We instantiate our approach for approximate graph learning for the classic harmonic-objective algorithm of \citet{zhu2003semi}, as well as the computationally efficient algorithm of \citet{delalleau2005efficient}.
    \item (Section \ref{sec:expt}) We implement our algorithm \footnote{\href{https://github.com/maxwelljones14/Efficient-SSL}{https://github.com/maxwelljones14/Efficient-SSL}} and provide extensive empirical study showing improvement over previously proposed approaches on standard datasets. Specifically, we improve the running time by about 1-2 orders of magnitude, while almost retaining (and in some cases slightly increasing) the accuracy.
\end{itemize}

\subsection{Related Work}\label{sec:rw}

\textbf{Approximate Laplacian inverse}. The {\it conjugate gradient method} \citep{hestenes1952methods} is an iterative algorithm used to approximately solve a system $Ax = b$ for symmetric, positive definite matrices. Starting with the zero vector as an approximate solution, every iteration computes a gradient used to update this approximation in the direction of the exact solution. The exact solution itself is obtained in $n$ steps, but good approximate solutions can be found much sooner for graphs with low condition number $\kappa$ \citep{AXELSSON1976123,vishnoi2012laplacian}. Furthermore, each iteration computes a finite number of matrix-vector products on $A$, yielding good runtime guarantees. Many variants of the Conjugate gradient method exist \citep{hager2006survey}, in this work we use the original version. The conjugate gradient method is a tool in use for calculating fast matrix inverses across machine learning applications, in domains such as deep reinforcement learning \citep{schulman2015trust,rajeswaran2017towards}  and market forecasting \citep{SHEN2015243}. We choose the conjugate gradient method over other iterative techniques to solve $Ax = b$ like Lanczos iteration due to its stability, simplicity, and previous success in other machine learning applications. Further, the conjugate gradient method offers strong theoretical guarantees, leading to fast approximate convergence for our use case. %

\textbf{Semi-supervised learning}. {\it Semi-supervised learning} is a paradigm for learning from labeled and unlabeled data \citep{zhu2009introduction,balcan2010discriminative}. 
A popular approach for semi-supervised learning is to optimize a graph-based objective. %
Several methods have been proposed to predict labels {\it given a graph} including $st$-mincuts \citep{blum2001learning}, soft mincuts that optimize a harmonic objective \citep{zhu2003semi}, and label propagation \citep{zhu2002learning}. %
Prior research for efficient semi-supervised learning has also typically assumed that the graph $G$ is given \citep{delalleau2005efficient,wang2016scalable}.
 All algorithms have comparable performance provided the graph $G$ encodes the problem well \citep{zhu2009introduction}. \citet{balcan2021data} introduce a first approach to learn the graph $G$ with formal guarantees, and show that the performance of all the algorithms depends strongly on the graph hyperparameters. In this work, we provide computationally efficient algorithms for learning the graph parameters. While we focus on learning the graph for the  classical approaches, deep learning based approaches also typically assume a graph is available \citep{kipf2017semi}. %

\textbf{Data-driven algorithm design}. \citet{gupta2017pac} define a formal learning framework for selecting algorithms from a family of heuristics or setting hyperparameters. It is further developed by \citet{balcan2017learning,balcan2018dispersion,balcan2020data}. It has been successfully applied to several problems in machine learning like clustering, linear regression and low rank approximation \citep{balcan2017learning,balcan2022provably,bartlett2022generalization} (to list a few) and for giving powerful guarantees like differential privacy, adaptive learning and adversarial robustness \citep{balcan2018dispersion,sharma2020learning,balcan2020power}.  %
\citet{balcan2018dispersion,dick2020semi} introduce general data-driven design techniques under some smoothness assumptions, and \citet{balcan2020refined} study learning with approximate losses. We extend the techniques to broader problem settings (as noted in Section \ref{sec:contributions}, and detailed in Section \ref{sec:addag}), and investigate the structure of graph-based label learning formulation to apply the new techniques. Computational efficiency is an important concern for practical applicability of data-driven design. This includes recent and concurrent work which use fundamentally different techniques like output-sensitive enumeration from computational geometry \citep{balcan2022faster} and discretization for mechanism design applications \citep{balcan2023learning}. In contrast, we study the effectiveness of approximate loss estimation, as well as graph sparsification, in data-driven graph selection for semi-supervised learning.

\section{Notation and Formal Setup}

We are given some labeled points $L$ and unlabeled points $U$. One constructs a graph $G$ by placing (possibly weighted) edges $w(u,v)$ between pairs of data points $u,v$ which are `similar', and labels for the unlabeled examples are obtained by optimizing some graph-based score. We have an oracle $O$ which on querying provides us the labeled and unlabeled examples, and we need to pick $G$ from some family $\G$ of graphs. We commit to using some algorithm $A(G,L,U)$ (or $A_{G,L,U}$) which provides labels for examples in $U$, and we should pick a $G$ such that $A(G,L,U)$ results in small error in its predictions on $U$. To summarize more formally,

{\it Problem statement}: Given data space $\cX$, label space $\cY$ and an oracle $O$ which yields a number of labeled examples  $\emptyset\ne L\subset  \cX\times\cY$ and some unlabeled examples $\emptyset\ne U\subset \cX$ such that $|L|+|U|=n$. We are further given a parameterized family of graph construction procedures over parameter space $\cP$, $\G:\cP\rightarrow(\cX\times\cX\rightarrow \R_{\ge 0})$, graph labeling algorithm $A_{G,L,U}$ which takes a graph $G$ with labeled nodes $L$ and unlabeled nodes $U$ and provides labels for all unlabeled examples in $U$, a loss function $l:\cY\times\cY\rightarrow [0,1]$ and a target labeling $\tau:U\rightarrow \cY$. We need to select $\rho\in \cP$ such that corresponding graph $G(\rho)$ minimizes $\frac{1}{|U|}\sum_Ul(A_{G(\rho),L,U}(u),\tau(u))$ w.r.t. $\rho$.%

We will now describe graph families $\G$ and algorithms $A_{G,L,U}$ considered in this work. We restrict our attention to binary classification, i.e. $\cY=\{0,1\}$, and note that all proposed algorithms  naturally extend to multiclass problems, using the standard one-vs-all trick. We assume there is a feature based {\it similarity function} $d:\cX\times\cX\rightarrow\R_{\ge 0}$, a metric which monotonically captures similarity between the examples. In Definition \ref{defn:g} we formally introduce parametric families to build a graph using the similarity function, which capture and interpolate well-known approaches such as $k$-nearest neighbor graphs, $r$-neighborhood graphs and Gaussian RBF kernels.
In this work, we will consider three parametric families of graph construction algorithms defined below. $\bI[\cdot]$ is the indicator function taking values in $\{0,1\}$. Let $N_k(v)$ denote the set of nodes of $G$ which are the $k$-nearest neighbors of node $v$ under the metric $d(\cdot,\cdot)$. Define $k$-mutual neighborhood as the set of edges for which each end-point is a $k$-nearest neighbor of the other, i.e. $N'_k=\{(u,v)\mid u\in N_k(v) \text{ and } v\in N_k(u)\}$ \citep{ozaki2011using}.\looseness-1

\begin{definition}\label{defn:g} Sparse graph families.\\
a) Thresholded nearest neighbors, $G(k,r)$, (with $k\in\Z^+,r\in\R^+$): $w(u,v)=\bI[d(u,v)\le r \text{ and } (u,v)\in N_k']$ .\\
b) Gaussian nearest neighbors, $G(k,\sigma)$, (with $k\in[K]$ for $K\in\Z^+$, $\sigma\in\R^+$):  $w(u,v)=e^{-\frac{d(u,v)^2}{\sigma^2}}\bI[(u,v)\in N'_k]$.
\end{definition} 
The thresholded nearest neighbor graph adds unweighted edges to $G(k,r)$ only when the examples are closer than some $r\in\R_{\ge 0}$, and are mutual $k$-nearest neighbors. The Gaussian (or RBF) kernel is more powerful and allows weighted edges that depend on the metric distance and the bandwidth parameter $\sigma$. %
We will use $\rho$ to denote a general graph parameter (e.g. $\rho=(k,r)$ for thresholded nearest neighbors) and denote the general parameterized graph family by $G(\rho)$. %
Once the graph is constructed using one of the above families, we can assign labels using a suitable algorithm $A_{G,L,U}$. A popular and effective approach is by optimizing a quadratic objective $\frac{1}{2}\sum_{u,v}w(u,v)(f_u-f_v)^2=f^T(D-W)f$. Here $f$ may either be discrete $f_v\in\{0,1\}$ which corresponds to finding a graph mincut separating the oppositely labeled vertices \citep{blum2001learning}, or $f$ may be continuous, i.e. $f\in[0,1]$, and we can round $f$ to obtain the labels \citep{zhu2003semi}.\looseness-1 %

In the {\it distributional setting}, we are presented with several instances of the graph semi-supervised learning problem assumed to be drawn from an unknown distribution $\cD$ and want to learn the best value of the graph parameter $\rho$. We also assume we get all the labels for the `training' problem instances. A choice of $\rho$ uniquely determines the graph $G(\rho)$ and we use some algorithm $A_{G(\rho),L,U}$ to make predictions (e.g. minimizing the quadratic penalty score above) and suffers loss $l_{A(G(\rho),L,U)}:=\frac{1}{|U|}\sum_Ul(A_{G(\rho),L,U}(u),\tau(u))$ which we seek to minimize relative to smallest possible loss by some graph in the hypothesis space, in expectation over the data distribution $\cD$. We also define the family of loss functions $\mathcal{H}_{\rho}=\{l_{A(G(\rho),L,U)}\mid \rho\in\cP \}$. For example, $\mathcal{H}_{k,r}=\{l_{A(G(k,r),L,U)}\mid (k,r)\in\Z^+\times\R^+ \}$. As a shorthand, we will often denote the loss on a fixed problem instance as a function of the graph hyperparameter $\rho$ as simply $l(\rho)$, and refer to it as the {\it dual semi-supervised loss}.

Finally we note definitions of some useful learning theoretic complexity measures. First recall the definition of pseudodimension \citep{pollard2012convergence} which generalizes VC dimension to real-valued functions, and is a well-known measure for hypothesis-space complexity in statistical learning theory. %
\begin{definition}[Pseudo-dimension] Let $\cH$ be a set of real valued functions from input space $\cX$. We say that
$C = (x_1, \dots, x_m)\in \cX^m$ is pseudo-shattered by $\cH$ if there exists a vector
$r = (r_1, \dots, r_m)\in\R^m$ (called ``witness”) such that for all
$b= (b_1, \dots, b_m)\in \{\pm 1\}^m $ there exists $h_b\in \cH$ such that $\text{sign}(h_b(x_i)-r_i)=b_i$. Pseudo-dimension of $\cH$  is the cardinality of the largest set
pseudo-shattered by $\cH$.
\end{definition}

We will also need the definition of {\it dispersion} \citep{balcan2018dispersion} which, informally speaking, captures how amenable a non-Lipschitz function is to online learning. As noted by \citet{balcan2018dispersion,sharma2020learning}, dispersion is a sufficient condition for learning piecewise Lipschitz functions online.

\begin{definition}[\label{def:dis}Dispersion]
The sequence of random loss functions $l_1, \dots,l_T$ is $\beta$-{\it dispersed} for the Lipschitz constant $L$ if, for all $T$ and for all $\epsilon\ge T^{-\beta}$, we have that, in expectation, at most
$\Tilde{O}(\epsilon T)$ functions (here $\tilde{O}$ suppresses dependence on quantities beside $\epsilon,T$ and $\beta$, as well as logarithmic terms)
are not $L$-Lipschitz for any pair of points at distance $\epsilon$ in the domain $\C$. That is, for all $T$ and  $\epsilon\ge T^{-\beta}$, $\E\!\!\left[\!
\max\!\!\!\!_{\substack{\rho,\rho'\in\C\\||\rho-\rho'||_2\le\epsilon}}\!\big\lvert
\{ t\!\in\![T] \mid l_t(\rho)\!-\!l_t(\rho')>L||\rho\!-\!\rho'||_2\} \big\rvert \!\right]
\le  \Tilde{O}(\epsilon T)$.
\end{definition}

\section{Approximate %
Data-driven Algorithm Design} \label{sec:addag}

\begin{figure}
\centering

\includegraphics[scale=0.48]{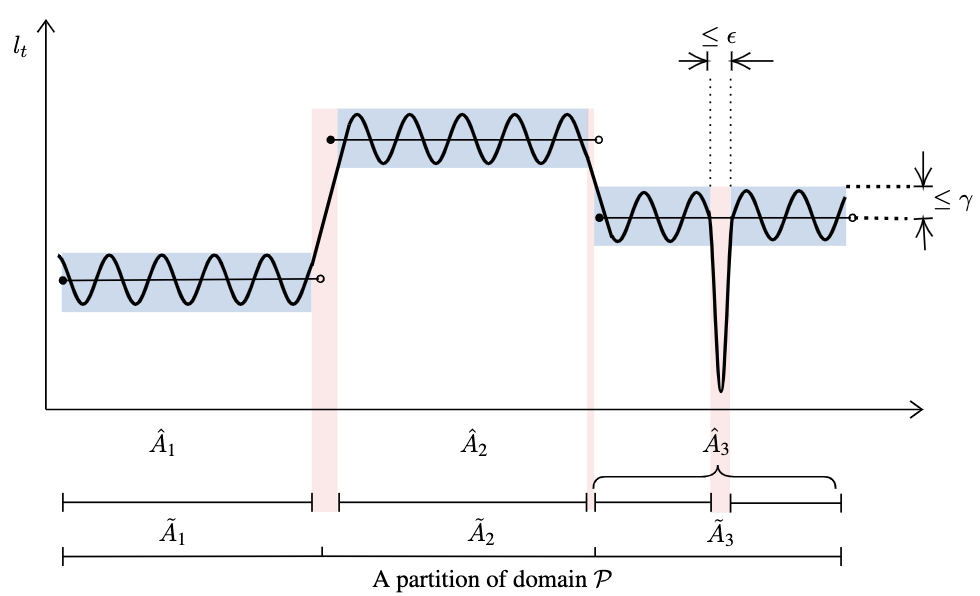}
\caption{A depiction of $(\epsilon,\gamma)$-approximate feedback (Definition \ref{def:fb}) for a one dimensional loss function. Here, the true loss $l_t$ is given by the solid curve, and approximate loss $\Tilde{l}_t$ is piecewise constant.\label{fig:approx-feedback}}

\end{figure}

\citet{balcan2021data} show that the dual loss $l(\rho)$ is a piecewise constant function of the graph hyperparameter $\rho$, for any fixed problem instance. Suppose the problem instances arrive {\it online} in rounds $t=1,\dots,T$, and the learner receives some feedback about her predicted parameter $\rho_t$. A standard performance metric for the online learner is her expected regret, $R_T=\E\left[\sum_{t=1}^Tl_t(\rho_t) -\min_{\rho\in\cP}\sum_{t=1}^Tl_t(\rho)\right]$. Since the {\it full information} setting where all the labels are revealed is not very practical (it assumes all labels of previous problem instances are available), \citet{balcan2021data} also consider the more realistic semi-bandit feedback setting of \citet{dick2020semi}, where only the loss corresponding to hyperparameter $\sigma_t$
selected by the online learner in round $t$ is revealed, along with the end points of the piece $A_t$ containing $\sigma_t$ where $l_t$ is contant. We consider a generalization of this setting where only an approximation to the loss value at $\sigma_t$ is revealed, along with an approximation to the piece $A_t$.\looseness-1

Although our formulation below is motivated by considerations for graph parameter tuning for semi-supervised learning, we provide very general definitions and results that apply to approximate online data-driven parameter selection more generally \citep{balcan2020data}.%

\begin{definition}%
An online optimization problem with loss functions $l_1,l_2,\dots$ is said to have \textbf{$(\epsilon,\gamma)$-approximate semi-bandit feedback} with system size $M$ if for each time $t=1,2,\dots$, there is a partition $\Tilde{A}_t^{(1)},\dots,\Tilde{A}_t^{(M)}$ of the parameter space $\cP\subset\R^d$, called an approximate feedback system, such that if the learner plays point $\rho_t \in \Tilde{A}_t^{(i)}$, she observes the approximate feedback set  $\Tilde{A}_t^{(i)}$%
, and observes approximate loss $\Tilde{l}_t(\rho)$ for all $\rho\in\Tilde{A}_t^{(i)}$ such that $\sup_{\rho\in \hat{A}_t^{(i)}}|\Tilde{l}_t(\rho)-l_t(\rho)|\le \gamma$, for some (unknown) $\hat{A}_t^{(i)}\subseteq \Tilde{A}_t^{(i)}$ with $\Big\lvert\vol\left(\Tilde{A}_t^{(i)}\setminus\hat{A}_t^{(i)}\right)\Big\rvert \le \epsilon$. Here $\vol(A)$ denotes the $d$-dimensional volume of set $A$. We let $\Tilde{A}_t(\rho)$ denote the approximate feedback set that contains $\rho\in\cP$.\looseness-1\label{def:fb}
\end{definition}

For example, let the parameter space $\cP$ be one-dimensional, and in round $t$ the learner plays point $\rho_t\in\cP$. Now suppose the approximate loss functions are also piecewise constant with pieces $\Tilde{A}_t^{(1)},\dots,\Tilde{A}_t^{(M)}$ that partition $\cP$, and she receives information about the constant piece $\Tilde{A}_t(\rho_t)$ containing the played point  by receiving the ends points of interval $\Tilde{A}_t$ %
and approximate loss value $\Tilde{l}_t$ for the observed piece $\Tilde{A}_t$ with $|\Tilde{l}_t-l_t|\le \gamma$ for most of the interval $\Tilde{A}_t$, except possibly finitely many small intervals with total length $\epsilon$, where $l_t$ is the true loss function. This satisfies the definition of $(\epsilon,\gamma)$-approximate semi-bandit feedback. See Figure \ref{fig:approx-feedback} for an illustration. This simple example captures the semi-supervised loss $l_{A(G(\rho),L,U)}$ (where in fact the true loss function is also piecewise constant \citep{balcan2021data}), but our analysis in this section applies to more general {\it piecewise-Lipschitz} loss functions, and for high dimensional Euclidean action space. This approximate feedback model generalizes the ``exact'' semibandit feedback model of \citet{dick2020semi} (which in turn generalizes the standard `full information' setting that corresponds to $M=1$) and is useful for cases where computing the exact feedback set or loss function is infeasible or computationally expensive. Our model also generalizes the approximate loss functions of \citet{balcan2020refined} where positive results (data-dependent generalization guarantees) are shown for $(0,\gamma)$-approximate full-information ($M=1$) feedback in the distributional setting. This extension is crucial for applying our techniques of efficient graph learning by computing approximate loss values for the learned graph.

\begin{algorithm}
\caption{$\textsc{Approximate Continuous Exp3-Set}(\lambda)$}
\label{algorithm: semibandit}
\begin{algorithmic}[1]
\STATE {\bfseries Input:} step size $\lambda\in[0,1]$.
\STATE {Initialize $w_1(\rho)=1$ for all $\rho\in\cP$}.
\FOR{$t=1,\dots,T$}
\STATE{Sample $\rho_t$ according to $p_t(\rho)=\frac{w_t(\rho)}{\int_{\cP}w_t(\rho)d\rho}$.}

\STATE{Play $\rho_t$ and suffer loss $l_t(\rho_t)$.}
\STATE{Observe $(\gamma,\epsilon)$-approximate feedback $\Tilde{l}_t(\rho)$ over set $\Tilde{A}_t$ with $\rho_t\in \Tilde{A}_t$}
\STATE{Update $w_{t+1}(\rho)=w_t(\rho)\exp(-\lambda \hat{l}_t(\rho))$, where $\hat{l}_t(\rho)=\frac{\I\{\rho\in\Tilde{A}_t\}}{\int_{\Tilde{A}_t}p_t(\rho)d\rho}\Tilde{l}_t(\rho)$.}
\ENDFOR
\end{algorithmic}
\end{algorithm}

We give a general online learning algorithm in the presence of approximate semi-bandit feedback, and we show that our algorithm achieves sub-linear regret bounds. In particular, our results indicate how the approximation in the loss function impacts the regret of our algorithm and provides a way to quantify the accuracy-efficiency trade-off (better loss approximation can improve regret in Theorem \ref{thm:approx-feedback}, but at the cost of efficiency in Theorems \ref{thm:harmbound}, \ref{thm:delbound}).

\begin{theorem}\label{thm:approx-feedback}
Suppose $l_1,\dots,l_T:\cP\rightarrow[0,1]$ is a sequence of $\beta$-dispersed loss functions, and the domain  $\cP\subset\R^d$ is contained in a ball of radius $R$. The Approximate Continuous Exp3-Set algorithm (Algorithm \ref{algorithm: semibandit}) achieves expected regret $\tilde{O}(\sqrt{dMT\log(RT)}+T^{1-\min\{\beta,\beta'\}})$ with access to $(\epsilon,\gamma)$-approximate semi-bandit feedback with system size $M$, provided $\gamma \le T^{-\beta'},\epsilon\le {\vol(\cB(T^{-\beta}))} T^{-\beta'}$, where $\cB(r)$ is a $d$-ball of radius $r$.
\end{theorem}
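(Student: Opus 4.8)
The plan is to adapt the standard Exp3-Set / continuous-exponential-weights regret analysis (as in \citet{dick2020semi} and the semi-bandit dispersion analysis of \citet{balcan2021data}) to accommodate the two sources of error introduced by $(\epsilon,\gamma)$-approximate feedback: the loss-value error $\gamma$ and the feedback-set volume error $\epsilon$. First I would set up the usual potential-function argument: let $W_t = \int_\cP w_t(\rho)\,d\rho$, and track $\log(W_{T+1}/W_1)$ from above and below. The lower bound is obtained by restricting the integral to a small ball $\cB(T^{-\beta})$ around an (arbitrary) fixed comparator $\rho^\ast$; dispersion guarantees that on this ball only $\tilde O(T^{1-\beta})$ of the true losses differ from $l_t(\rho^\ast)$ by more than $L\cdot T^{-\beta}$, so $\sum_t l_t(\rho^\ast)$ controls $\sum_t \hat l_t$ on the ball up to an additive $\tilde O(T^{1-\beta}) + LR\cdot T^{1-\beta}$ term (plus a $\log(1/\vol(\cB(T^{-\beta}))) = \tilde O(d\log(RT))$ term from the ball's volume). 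The upper bound on $\log(W_{T+1}/W_1)$ comes from the per-round inequality $e^{-x}\le 1-x+x^2$ applied to $\lambda \hat l_t(\rho)$, summed against $p_t$; here I must use the standard Exp3-Set fact that $\E_{\rho_t\sim p_t}\!\big[\int_{\Tilde A_t(\rho_t)} p_t(\rho)\hat l_t(\rho)\,d\rho\big]$ telescopes correctly and that the "variance" term $\int p_t(\rho)\hat l_t(\rho)^2 d\rho$ has expectation bounded by $M$ (since the estimator inflates by $1/\int_{\Tilde A_t}p_t$ and there are at most $M$ pieces).

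The new ingredient is to replace $l_t$ by $\Tilde l_t$ everywhere the feedback is used and then pay for the substitution. On the good subset $\hat A_t^{(i)} \subseteq \Tilde A_t^{(i)}$ we have $|\Tilde l_t - l_t|\le\gamma$, so summing over $t$ contributes at most $\gamma T \le T^{1-\beta'}$ to the regret after the estimator is unbiased-ly integrated. On the bad leftover $\Tilde A_t^{(i)}\setminus \hat A_t^{(i)}$ we only know the losses lie in $[0,1]$, but this set has volume at most $\epsilon$; the key point is that the played point $\rho_t$ lands in this bad set with probability at most $\int_{\text{bad}} p_t(\rho)d\rho$, and because the comparator ball has volume $\vol(\cB(T^{-\beta}))$, choosing $\epsilon \le \vol(\cB(T^{-\beta}))\,T^{-\beta'}$ ensures the contribution of the bad region to the comparator-side integral (where we need $\hat l_t$ to be a faithful estimate of the true loss near $\rho^\ast$) is at most $T^{1-\beta'}$. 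I would make this precise by splitting $\int_{\cB(\rho^\ast, T^{-\beta})} \hat l_t$ into its intersection with good and bad pieces and bounding the bad part by $\epsilon/\vol(\cB(T^{-\beta})) \le T^{-\beta'}$ per round after normalization.

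Putting the two bounds together and optimizing $\lambda$: the potential argument yields $R_T \le \tilde O\big(\frac{d\log(RT)}{\lambda} + \lambda M T\big) + \tilde O(T^{1-\beta}) + \gamma T + \tfrac{\epsilon}{\vol(\cB(T^{-\beta}))} T$, and setting $\lambda = \sqrt{d\log(RT)/(MT)}$ gives the claimed $\tilde O(\sqrt{dMT\log(RT)} + T^{1-\min\{\beta,\beta'\}})$ once the stated bounds on $\gamma$ and $\epsilon$ are invoked. I expect the main obstacle to be the bookkeeping around the bad set $\Tilde A_t^{(i)}\setminus\hat A_t^{(i)}$: unlike the clean $(0,\gamma)$ full-information case of \citet{balcan2020refined}, here the estimator $\hat l_t$ is defined using the \emph{observed} set $\Tilde A_t$ and approximate loss $\Tilde l_t$, so I need to carefully verify that (i) $\hat l_t$ still has the right conditional expectation up to the $\gamma$ and $\epsilon$ slacks, and (ii) the dispersion-based lower bound on the comparator side is not destroyed when a $\le T^{-\beta'}$ fraction of the comparator ball can carry an arbitrary loss in $[0,1]$. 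Both reduce to the volume condition $\epsilon\le\vol(\cB(T^{-\beta}))T^{-\beta'}$, which is exactly why it appears in the hypothesis; making the dependence on $M$ come out as $\sqrt{M}$ rather than $M$ requires the standard Exp3-Set variance bound and is the one place where the partition structure (as opposed to a single interval) matters.
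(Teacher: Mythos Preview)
Your proposal is correct and follows essentially the same approach as the paper: a potential argument on $\log(W_{T+1}/W_1)$ with the upper bound via $e^{-x}\le 1-x+x^2/2$ and the Exp3-Set variance bound $\E\int p_t\hat l_t^2\le M$, the lower bound via restriction to the comparator ball $\cB(\rho^*,T^{-\beta})$ plus Jensen plus dispersion, and the approximation errors handled exactly by splitting each $\Tilde A_t^{(i)}$ into the good part $\hat A_t^{(i)}$ (contributing $\gamma$ per round) and the bad part of volume $\le\epsilon$ (contributing $M\epsilon$ on the algorithm side and $M\epsilon/\vol(\cB(T^{-\beta}))$ on the comparator side). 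Your anticipated ``main obstacles'' (i) and (ii) are precisely the two places where the paper inserts the $\gamma$- and $\epsilon$-slack terms, and your identification of the volume condition $\epsilon\le\vol(\cB(T^{-\beta}))T^{-\beta'}$ as the reason the comparator-side normalization works matches the paper's final parameter choice verbatim.
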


\begin{proofoutline}
    We adapt the $\textsc{Continuous-Exp3-SET}$ analysis of \citet{alon2017nonstochastic,dick2020semi}. %
Define weights $w_t(\rho)$ over the parameter space $\cP$ as $w_{1}(\rho)=1$ and $w_{t+1}(\rho)=w_{t}(\rho)\exp(-\eta\hat{l}_t(\rho))$ and normalized weights $W_t=\int_{\cP}w_t(\rho)d\rho$. Note that $p_t(\rho)=\frac{w_t(\rho)}{W_t}$. We give upper and lower bounds on  the quantity $\E[\log W_{T+1}/W_{1}]$, i.e. the expected value of the log-ratio of normalized weights, and bound the slackness induced in these bounds due to $(\epsilon,\gamma)$-approximate feedback. Our analysis shows that, provided the error terms $\epsilon,\gamma$ are sub-constant in $T$ as stated, we achieve sublinear expected regret. \red{forward reference the implication on runtime complexity}

In more detail, we obtain the following upper bound on $\E[\log W_{T+1}/W_{1}]$, which quantifies the effect of $(\epsilon,\gamma)$-approximation in the semi-bandit feedback.

$$\E\left[\log \frac{W_{T+1}}{W_{1}}\right]\le -\eta\E\left[\sum_{t=1}^Tl_t(\rho_t)\right] + \eta T(M\epsilon+\gamma)+ \frac{\eta^2MT}{2}.$$

We further obtain a lower bound on $\E[\log W_{T+1}/W_{1}]$, again carefully accounting for the approximate feedback.

\begin{align*}\E\left[\log \frac{W_{T+1}}{W_1}\right]
\ge &d\log \frac{r}{R}-\frac{\eta}{\vol(\cB(\rho^*,r))} \sum_{t=1}^T\int_{\cB(\rho^*,r)}{l}_t(\rho)d\rho-\eta\gamma - \frac{\eta M\epsilon}{\vol(\cB(\rho^*,r))}.\end{align*}

\noindent Here $\rho^*\in\cP$ is a minimizer of $\sum_{i=1}^Tl_t$. By the assumption on the number of $L$-Lipschitz violations from Definition \ref{def:dis}, we get $\sum_tl_t(\rho)\ge \sum_tl_t(\rho^*)-TLr-D_r$, or

\begin{align*}\E\left[\log\frac{W_{T+1}}{W_1}\right] \ge &d\log\frac{r}{R}-\eta\sum_{t=1}^Tl_t(\rho^*)-\eta TLr- \eta D_r-\eta\gamma T-\frac{\eta M\epsilon T}{\vol(\cB(\rho^*,r))}.\end{align*}

\noindent Combining the lower and upper bounds gives

\begin{align*}\E\left[\sum_{t=1}^Tl_t(\rho_t)\right] &-\sum_{t=1}^Tl_t(\rho^*) \le \frac{d}{\eta}\log\frac{R}{r} + \frac{\eta MT}{2} +D_r  +T\left(M\epsilon+2\gamma +Lr+\frac{ M\epsilon }{\vol(\cB(\rho^*,r))}\right). \end{align*}

\noindent Finally, setting $r=T^{-\beta}$, $\eta=\sqrt{\frac{2d\log (RT^\beta)}{TM}}, \gamma\le T^{-\beta'}$ and $\epsilon\le \vol(\cB(r))T^{-\beta'}$, and using that the loss sequence is $\beta$-dispersed, we get the desired regret bound.
\end{proofoutline}

In Theorem \ref{thm:approx-feedback}. $\beta’$ measures the net impact of approximate feedback on the regret of Algorithm \ref{algorithm: semibandit}. In particular, it shows that approximation can affect regret when ($\gamma,\epsilon$ are such that) $\beta’<\beta$ and $\beta’<\frac{1}{2}$. The bound in Theorem \ref{thm:approx-feedback} is good for sufficiently small $\gamma,\epsilon$. However, very small $\gamma,\epsilon$ can come at the expense of speed. In more detail, our results in Section 5 discuss how approximate feedback can be algorithmically implemented and useful to obtain faster runtime (runtime bounds are weaker for smaller $\epsilon$). Together, the results quantify an accuracy-efficiency trade-off, and indicate how to set the approximation parameters to improve the efficiency (of graph hyperparameter tuning) without sacrificing the accuracy.

\section{Learning Sparse Graph Families}\label{sec:sparse}

Using (weighted) edges for the $k$-nearest neighbors to use a sparse graph is well-known as an optimization for computational efficiency in semi-supervised learning \citep{delalleau2005efficient,wang2016scalable}. Here we will show that it also formally reduces the learning theoretic complexity, for the problem of graph hyperparameter tuning. %

We can upper bound the pseudodimension of the class of loss functions for sparse graph families, where only $k$ nearest neighbors are connected, for tunable parameter $k\le K$. This upper bound improves on the $O(n)$ bound from \citet{balcan2021data} since $K\le n$, and involves a more careful argument to bound the number of possible label patterns.\looseness-1

\begin{theorem}\label{thm:sigma-pdim}
The pseudo-dimension of $\mathcal{H}_{k,\sigma}$ is $O(K+\log n)$ when the labeling algorithm $A$ is the mincut approach of \citet{blum2001learning}.
\end{theorem}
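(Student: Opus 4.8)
The plan is to bound the pseudo-dimension by counting the number of distinct loss values achievable as $(k,\sigma)$ ranges over $[K]\times\R^+$ on a single problem instance, and then applying a Sauer-type argument across $m$ instances. Fix a problem instance with $n = |L| + |U|$ points. For a fixed discrete value $k \in [K]$, the only way the graph $G(k,\sigma)$ changes as $\sigma$ varies is through the edge weights $e^{-d(u,v)^2/\sigma^2}$ on the fixed edge set $N'_k$; the mincut value, and hence the predicted labeling and the loss $l_{A(G(k,\sigma),L,U)}$, is determined by the combinatorial structure of which $st$-cut is minimum. First I would argue that, as a function of $\sigma$ (with $k$ fixed), the induced labeling of $U$ changes at most a polynomial (in $n$) number of times: each candidate $st$-cut has value a sum of terms $e^{-d(u,v)^2/\sigma^2}$, and the minimum over cuts is a lower envelope of at most $2^{|U|}$ such functions — too many naively, so here is where the ``more careful argument to bound the number of possible label patterns'' is needed.

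The key structural observation I would exploit is that for the mincut algorithm the realized labeling is not an arbitrary subset of $U$: the min-$st$-cut partitions the graph, and the number of \emph{distinct} labelings of $U$ realizable over all $\sigma$ for a fixed $k$ can be controlled by the number of sign patterns of the pairwise cut-value differences, each of which is a difference of sums of at most $|N'_k| \le Kn/2$ exponentials in the single variable $1/\sigma^2$. A function of the form $\sum_i c_i e^{-a_i t}$ has a bounded number of zeros (bounded by the number of distinct exponents, by Descartes-type / Laguerre bounds on exponential sums), so each pairwise comparison flips sign $O(Kn)$ times, and — crucially — I would not take a union over all $\binom{2^{|U|}}{2}$ pairs but instead bound the number of cells in the arrangement that correspond to \emph{combinatorially distinct} min-cuts. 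The sharper route: the min-cut as $\sigma$ varies is a parametric min-cut problem in one parameter, and parametric max-flow / min-cut theory (Gallo–Grigoriadis–Tarjan style monotonicity) gives that the number of distinct min-cuts is $O(n)$ when the edge capacities are monotone in the parameter; here $e^{-d^2/\sigma^2}$ is monotone in $\sigma$, so for each fixed $k$ the number of distinct labelings of $U$ — hence distinct loss values — is $O(n)$. Summing over $k \in [K]$ gives $O(Kn)$ distinct loss values on one instance, i.e. $O(Kn)$ thresholds of the dual loss $l(\rho)$.

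Next I would convert this per-instance bound into a pseudo-dimension bound. If $C = (x_1,\dots,x_m)$ is pseudo-shattered with witness $r \in \R^m$, then for each instance $x_j$ the loss function $l_{(\cdot)}(x_j)$ takes at most $O(Kn)$ values, so at most $O(Kn)$ of the sign patterns $\text{sign}(l_{\rho}(x_j) - r_j)$ as $\rho$ ranges; more precisely, over the whole parameter space the number of distinct vectors $(\text{sign}(l_\rho(x_1)-r_1),\dots,\text{sign}(l_\rho(x_m)-r_m))$ is at most the number of cells in the common refinement of the $m$ one-dimensional-per-$k$ arrangements, which is at most $\sum_{k\in[K]}(m \cdot O(n)) = O(Kmn)$ — but this must be $\ge 2^m$ for shattering. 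Hence $2^m \le O(Kmn)$, giving $m = O(\log(Kmn)) = O(\log K + \log n + \log m)$, and solving the transcendental inequality yields $m = O(\log K + \log n) = O(K + \log n)$. (The looser $O(K)$ term in the statement presumably comes from a cruder count such as $2^K$ possible neighbor-set configurations; I would keep whichever bookkeeping gives the cleaner argument, noting $\log K \le K$.)

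The main obstacle is the middle step: controlling the number of distinct labelings of $U$ produced by the mincut algorithm as $\sigma$ varies for a fixed $k$, without incurring an exponential-in-$|U|$ blowup. The naive lower-envelope-of-exponential-sums bound is far too weak; the argument has to use either parametric min-cut monotonicity (the cleanest) or a direct argument that the set of $\sigma$-intervals inducing a fixed cut is an interval (so the labelings are ``nested'' in $\sigma$), reducing the count to $O(n)$. Once that combinatorial fact is in hand, the pseudo-dimension computation is the standard ``$2^m \le \text{poly}$'' inversion.
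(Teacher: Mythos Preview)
Your parametric min-cut argument has a genuine gap. The Gallo--Grigoriadis--Tarjan result does \emph{not} say that the number of distinct min-cuts is $O(n)$ whenever all edge capacities are monotone in the parameter; it requires the specific structure that source-adjacent capacities are nondecreasing while sink-adjacent capacities are nonincreasing (with the remaining capacities constant), and it is this opposing monotonicity that forces the min-cuts to be nested. In the present setting every edge weight $e^{-d(u,v)^2/\sigma^2}$ is increasing in $\sigma$, so GGT does not apply and there is no reason the sequence of min-cuts should be nested or even limited to $O(n)$ many. Your fallback --- that the set of $\sigma$ realizing a fixed cut is a single interval --- fails for the same reason: the difference of two cut values is an exponential sum with up to $|N'_k|\le Kn$ distinct exponents and can change sign that many times, so the region where one fixed cut is globally minimal is a priori a union of many intervals. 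A warning sign is that your route would give $O(\log K+\log n)$, strictly tighter than the claimed $O(K+\log n)$, yet you never use the bounded degree $K$ anywhere; that bounded degree is precisely what the paper's argument exploits.

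The paper's proof is \emph{local} rather than global. It fixes a single node $u$ (which has at most $k\le K$ neighbors in $N'_k$) and bounds the number of $\sigma$-values at which $u$'s mincut label can flip. At any such flip point the set of positively-labeled neighbors of $u$ changes from some $S_+\subseteq N'_k(u)$ to some $S_+'$; for each ordered pair $(S_+,S_+')$ the flip point is a root of an exponential equation in at most $2k$ terms, hence has at most $2k$ solutions. Ranging over the at most $\binom{2^k}{2}$ pairs gives $O(K2^{2K})$ flips per node for fixed $k$, hence $O(K^22^{2K})$ over all $k\in[K]$, and summing over $n$ nodes and $m$ instances gives $O(mnK^22^{2K})$ intervals of constant labeling. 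The shattering inequality $2^m\le O(mnK^22^{2K})$ then yields $m=O(K+\log n)$; the $K$ term comes exactly from the $2^{2K}$ enumeration of neighbor-label configurations, which your global approach tried to bypass.
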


\begin{proof}
Consider an arbitrary node $u$ in any fixed problem instance. Also fix $k\in [K]$. Since $f(d)=\exp(-d^2/\sigma^2)$ is monotonic in $d$ for any $\sigma>0$, the set $N_k(u)$ of $k$ nearest neighbors of $u$ is the same for all values $\sigma$. This is true for any $u$,  therefore $N_k$ and also the set of mutual nearest neighbors $N'_k(u)=\{v\in N_k(u)\mid (u,v)\in N_k\}$ is also fixed given the pairwise distances for the instance.

We can show that the label of $u$ can flip for at most $O(K2^{2K})$ distinct values of $\sigma$ for the given instance. Suppose that the label of $u$ flips for $\sigma=\sigma_0$ (as $\sigma$ is increased from 0 to infinity), say from positive to negative (WLOG). Let $S_+,S_+'\subseteq N'_k$ for $G(k,\sigma_0^-)$ and $G(k,\sigma_0^+)$ respectively denote the positively labeled neighbors of $u$ just before and after $\sigma=\sigma_0$. Note that $\sigma_0$ is the root of an exponential equation in at most $2k$ terms and therefore has at most $2k$ possible values (Lemma 26 in \citet{balcan2021data}) obtained by comparing the total weights of edges in $\delta(u,N_k'\setminus S_+)$  and $\delta(u,S_+')$, where $\delta(v,V)$ denotes the set of edges with one end-point $v$ and the other end point in vertex set $V$. Over all possible pairs of $S_+,S_+'$ we have at most $2k{2^k\choose 2}=O(K2^{2K})$ possibilities for $\sigma_0$.

The above bound holds for any fixed $k$. For all $k\in[K]$ there are at most $O(K^22^{2K})$ label flips for any fixed node $u$ (as $\sigma$ is varied). Summing up over all $n$ possible choices of $u$ and over all $m$ problem instances, we have at most $O(mnK^22^K)$ intervals of $\sigma$ such that the labelings of all nodes are identical for all instances, for all values of $k$, within a fixed interval. Using Lemma 2.3 of \citet{balcan2020data} (proof of which involves arguments similar to those used in the proof of Theorem \ref{thm:threshold}), the pseudo-dimension $m$ satisfies $2^m\le O(mnK^22^K)$, or $m=O(K+\log n)$.
\end{proof}

The above argument gives a better sample complexity than dense graphs, for which the pseudo-dimension is known to be $\Theta(n)$ \citep{balcan2021data}. We can also give upper bounds on the pseudo-dimension for $H_{k,r}$, the $k$-nearest neighbor graph that adds edges only in $r$-neighborhood, which implies existence of sample and computationally efficient algorithms for learning the best graph parameter $\rho=(k,r)$ using standard results.

\begin{figure*}
\centering
\subfloat[Gradient Descent Only]{\includegraphics[width = 2.2in]{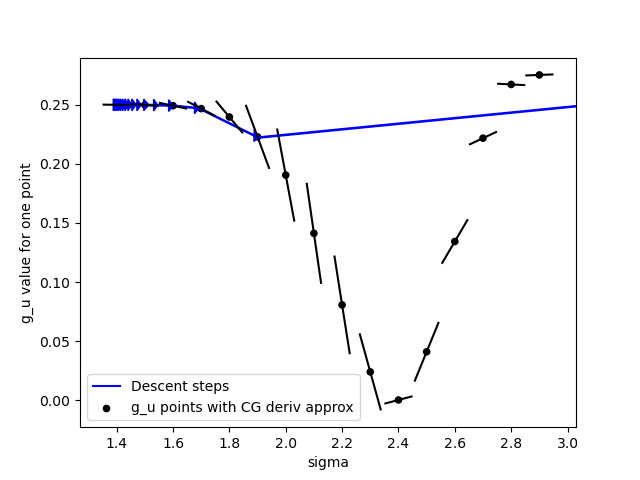}} 
\subfloat[Newton's Method Only]{\includegraphics[width = 2.2in]{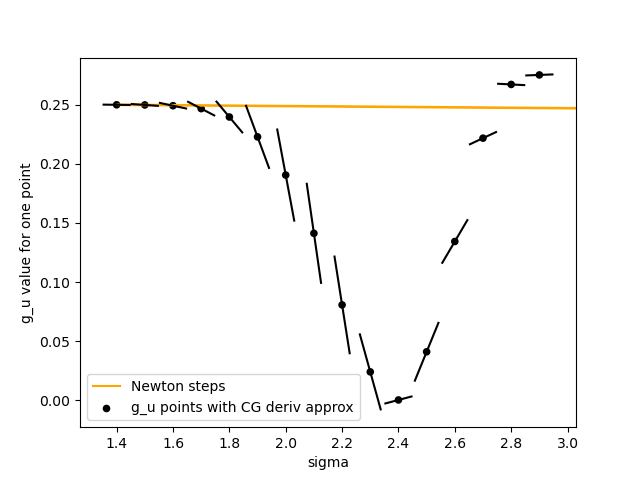}}
\subfloat[Our Method]{\includegraphics[width = 2.2in]{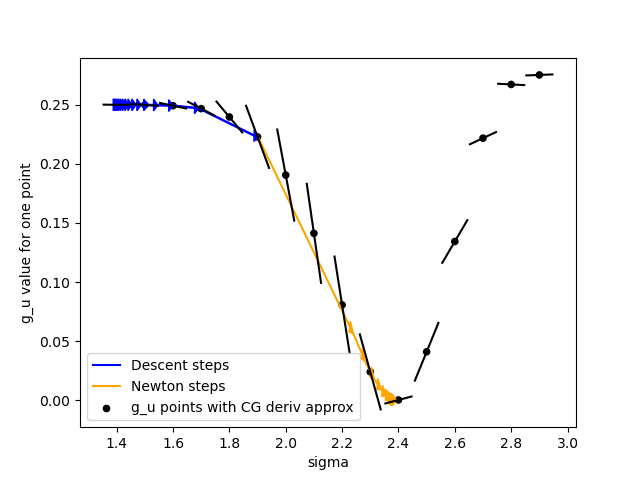}} 
\caption{An instance of a node $u$ for graph $G$ on a subset of the MNIST dataset, where finding local minima of $g_u(\sigma)=(f_u(\sigma)-\frac{1}{2})^2$ is challenging for both Gradient descent and Newton steps. Our method (taking the minimum of the two) finds the local minima}
\label{fig: gd+ns}
\end{figure*}

\begin{theorem}\label{thm:threshold}
The pseudo-dimension of $\mathcal{H}_{k,r}$ is $O(\log n)$ for any labeling algorithm $A$.
\end{theorem}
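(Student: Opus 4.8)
The plan is to reduce Theorem~\ref{thm:threshold} to a counting statement — that on a fixed instance the family $G(k,r)$ induces only polynomially many distinct labelings of $U$ — and then invoke the same pseudo-shattering argument used in the proof of Theorem~\ref{thm:sigma-pdim} (Lemma~2.3 of \citet{balcan2020data}). Since the statement must hold for \emph{any} labeling algorithm $A$, I would not analyze $A$ at all: I only use that $A_{G(\rho),L,U}$ depends on $\rho=(k,r)$ solely through the (unweighted) graph $G(\rho)$, so it suffices to bound the number of distinct graphs in the family on a fixed instance.

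First I would fix a single problem instance on $n$ nodes. The edge set of $G(k,r)$ is $\{(u,v)\in N'_k : d(u,v)\le r\}$. The metric $d$ is fixed for the instance, so as $r$ sweeps $\R^+$ the set $\{(u,v):d(u,v)\le r\}$ changes only at the at most $\binom n2$ distinct pairwise distances; moreover $N_k(v)\subseteq N_{k+1}(v)$ for every $v$, so the mutual-neighbor sets form a monotone chain $N'_1\subseteq N'_2\subseteq\cdots$ and $k$ has at most $n-1$ relevant values. Hence $G(k,r)$, and therefore the dual loss $l(\rho)=l_{A(G(\rho),L,U)}$, is constant on each cell of an $(n-1)\times O(n^2)$ grid partition of $\cP=\{1,\dots,n-1\}\times\R^+$ — and the crucial point is that for \emph{every} fixed $k$ the breakpoints of $r\mapsto G(k,r)$ lie among the same $O(n^2)$ pairwise distances, so refining across values of $k$ does not multiply the number of $r$-breakpoints.

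Next I would pass to $m$ instances and take the common refinement of their grid partitions: there are still only $n-1$ columns (values of $k$), while the union of all instances' breakpoints splits $\R^+$ into at most $O(mn^2)$ intervals, giving $O(mn^3)$ cells on each of which all of $l_1,\dots,l_m$ are simultaneously constant. Consequently, for any witness vector $(z_1,\dots,z_m)$ the sign pattern $\big(\sgn(l_i(\rho)-z_i)\big)_{i=1}^m$ realizes at most $O(mn^3)$ distinct values over $\rho\in\cP$, so pseudo-shattering $m$ instances forces $2^m\le O(mn^3)$, i.e.\ $m=O(\log n)$. The only real subtlety is the argument in the previous paragraph — ensuring the $r$-breakpoints are ``universal'' (always pairwise distances, independent of $k$) and that monotonicity caps the relevant values of $k$, so that the two-dimensional common refinement stays polynomial rather than blowing up multiplicatively in $m$; the remaining counting step and the harmless treatment of the $\sgn(\cdot)=0$ boundary case (e.g.\ by a tiny perturbation of the witnesses) are routine and identical to Theorem~\ref{thm:sigma-pdim}.
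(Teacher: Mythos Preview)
Your proof is correct and follows essentially the same approach as the paper: bound the number of distinct graphs $G(k,r)$ across $m$ instances by $O(mn^3)$ and then apply the pseudo-shattering inequality $2^m \le O(mn^3)$ via Lemma~2.3 of \citet{balcan2020data}. Your counting---observing that the $r$-breakpoints are always among the $\binom{n}{2}$ pairwise distances and hence universal across $k$, giving an $(n-1)\times O(mn^2)$ grid---is slightly cleaner than the paper's (which bounds the breakpoints for each fixed $k$ by $\frac{nk}{2}$ and sums over $k$), but both arrive at the same $O(mn^3)$ cell count and the same conclusion.
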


\begin{proof}
Consider any fixed problem instance with $n$ examples. For any fixed choice of parameter $k$, there are at most $\frac{nk}{2}$ (unweighted) edges in $G(k,r)$ for any value of $r$. Therefore, as $r$ is increased from 0 to infinity, the graph changes only when $r$ corresponds to one of $\frac{nk}{2}$ distinct distances between pairs of data points, and so at most $\frac{nk}{2}+1$ distinct graphs may be obtained for any $k$. Summing over all possible values of $k\in[n]$, we have at most $O(n^3)$ distinct graphs. \looseness-1

\noindent Thus given set $\mathcal{S}$ of $m$ instances $(d^{(i)},L^{(i)},U^{(i)})$, we can partition the real line into $O(mn^3)$ intervals such that all values of $r$ behave identically for all instances, and for all values of $k$, within any fixed interval. Since $A$ and therefore its loss is deterministic once the graph $G$ is fixed, the loss function is identical in each interval. Each piece can have a {\it witness} above or below it as $r$ is varied for the corresponding interval, and so the binary labeling of $\mathcal{S}$ is fixed in that interval. The pseudo-dimension $m$ satisfies $2^m\le O(mn^3)$ and is therefore $O(\log n)$.
\end{proof}

Note that the lower bounds from \citet{balcan2021data} imply that the above bound is asymptotically tight. %
Our bounds in this section imply upper bounds on number of problem instances needed for learning the best parameter values for the respective graph families \citep{balcan2020data} in the distributional setting. More precisely, we can bound the sample complexity of $(\epsilon,\delta)$-uniformly learning (Appendix \ref{app:pac}).\looseness-1

\begin{theorem}[\cite{anthony1999neural}] \label{thm:pdim-generalization}
 Suppose $\cH$ is a class of  functions $\cX\rightarrow[0,1]$ having pseudo-dimension $\textsc{PDim}(\cH)$. For every $\epsilon>0,\delta\in (0,1)$, the sample complexity of $(\epsilon,\delta)$-uniformly learning the class $\cH$ is $O\left(\frac{1}{\epsilon^2}\left(\textsc{PDim}(\cH)\ln\frac{1}{\epsilon}+\ln\frac{1}{\delta}\right)\right)$.
\end{theorem}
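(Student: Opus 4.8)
The plan is to reduce the statement to the classical VC-dimension uniform convergence theorem via the subgraph-set representation of the pseudo-dimension. Here ``$(\epsilon,\delta)$-uniformly learning $\cH$'' asks that, with probability at least $1-\delta$ over an i.i.d.\ sample $S=(x_1,\dots,x_m)$ from the (unknown) distribution, $\sup_{h\in\cH}\big|\tfrac1m\sum_{i=1}^m h(x_i)-\E[h]\big|\le\epsilon$; once this holds, returning the empirical minimizer gives error within $2\epsilon$ of the best in class, so it suffices to control this uniform deviation. First I would use that every $h\in\cH$ maps into $[0,1]$ to write $\E[h]=\int_0^1\Pr[h(X)\ge t]\,dt$ and $\tfrac1m\sum_i h(x_i)=\int_0^1\tfrac1m\sum_i\bI[h(x_i)\ge t]\,dt$, so that
\[
\sup_{h\in\cH}\Big|\tfrac1m\textstyle\sum_i h(x_i)-\E[h]\Big|\ \le\ \sup_{h\in\cH,\,t\in\R}\Big|\tfrac1m\textstyle\sum_i\bI[h(x_i)\ge t]-\Pr[h(X)\ge t]\Big| .
\]
The right-hand side is the empirical-process deviation over the set system $\cA=\{\{x:h(x)\ge t\}:h\in\cH,\ t\in\R\}$.

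Next I would bound $\mathrm{VCdim}(\cA)$ by $d:=\textsc{PDim}(\cH)$: if $\{x_1,\dots,x_k\}$ is shattered by $\cA$, then for every $I\subseteq[k]$ there exist $h,t$ with $h(x_i)\ge t\iff i\in I$, which (up to a harmless perturbation of the threshold) exhibits $(x_1,\dots,x_k)$ as pseudo-shattered by $\cH$ with the constant witness vector $r_i=t$; hence $k\le d$. Only this upper bound is needed (the reverse inequality, giving equality, would use per-point witnesses). Then I would run the standard VC uniform-convergence argument on $\cA$: symmetrize against an independent ghost sample of size $m$; condition on the union of the two samples, so the supremum ranges over at most $\Pi_\cA(2m)\le(2em/d)^{d}$ distinct subsets by Sauer--Shelah; apply Hoeffding's inequality to each fixed subset under the conditional sign/permutation randomness; and union-bound, yielding failure probability at most $4\,(2em/d)^{d}\exp(-m\epsilon^2/8)$.

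Finally I would solve $4\,(2em/d)^{d}\exp(-m\epsilon^2/8)\le\delta$ for $m$: taking logarithms and using the elementary inequality $\ln m\le\alpha m+\ln(1/\alpha)$ (valid for all $m,\alpha>0$, applied with $\alpha$ a suitable multiple of $\epsilon^2/d$) to absorb the $d\ln m$ term into $m\epsilon^2/16$, one finds that $m=O\!\big(\tfrac1{\epsilon^2}(d\ln\tfrac1\epsilon+\ln\tfrac1\delta)\big)$ suffices, and replacing $\epsilon$ by $\epsilon/2$ to pass from uniform convergence to $(\epsilon,\delta)$-learning changes only absolute constants. The main obstacle is pure bookkeeping rather than conceptual: one must make sure the reduction to $\cA$ preserves a $\ln(1/\epsilon)$ (not $\ln m$) dependence — which is exactly why I would work with the subgraph thresholds directly rather than an $\epsilon$-net over $[0,1]$ — and then carry the absolute constants faithfully through the symmetrization/Sauer--Shelah/Hoeffding chain and the final transcendental inequality that isolates $m$; each step is routine, but the constant-chasing must be done carefully to land exactly on the stated form.
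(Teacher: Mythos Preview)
The paper does not prove this theorem at all: it is stated as a citation to \cite{anthony1999neural} and used as a black box to convert the pseudo-dimension bounds of Theorems~\ref{thm:sigma-pdim} and~\ref{thm:threshold} into sample-complexity bounds. There is therefore no ``paper's own proof'' to compare against.

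That said, your sketch is a correct outline of the standard textbook argument (essentially the one in Anthony--Bartlett): reduce real-valued uniform convergence to Boolean uniform convergence over the subgraph class via the layer-cake identity, identify $\mathrm{VCdim}$ of the subgraph class with $\textsc{PDim}(\cH)$, and run symmetrization $+$ Sauer--Shelah $+$ Hoeffding to get the $(2em/d)^d\exp(-c m\epsilon^2)$ failure bound, then invert. The bookkeeping caveats you flag (handling the $d\ln m$ term to land on $d\ln(1/\epsilon)$ rather than $d\ln m$) are the only places requiring care, and your plan for absorbing $\ln m$ via $\ln m\le \alpha m+\ln(1/\alpha)$ is the right one.
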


\section{Scalability with Approximation Guarantees}\label{sec:algo}

We will now present and analyze an algorithm (Algorithm \ref{algorithm: semi harmonic}) for computing approximate semi-bandit feedback for the dual semi-supervised loss $l(\sigma)$ over  $\sigma\in[\sigma_{\min},\sigma_{\max}]$ (we assume number of nearest-neighbors $k$ is a fixed constant in the following), where $\sigma$ is the Gaussian bandwidth parameter (Def. \ref{defn:g}). Our algorithm is a scalable version of Algorithm 4 of \citet{balcan2021data}. %
Our proposed approach involves two main modifications noted below. %

\begin{itemize}[leftmargin=*,topsep=4pt,partopsep=1ex,parsep=1ex]\itemsep=-4pt
    \item Our Algorithm \ref{algorithm: semi harmonic} uses approximate soft labels $f(\sigma)_\epsilon$ and gradients $\frac{\partial f}{\partial \sigma}_\epsilon$. %
    We use the {\it conjugate gradient} method to compute these approximations, and provide implementations for the harmonic objective minimization approach of \citet{zhu2003semi}, as well as the efficient algorithm of \citet{delalleau2005efficient} with time complexity bounds (Algorithms \ref{algorithm: harmonic approx} and \ref{algorithm: delalleau approx} resp.\ below). %
    \item We use the approximate gradients to locate points where $f(\sigma^*)=\frac{1}{2}$, corresponding to $\sigma$ value where the predicted label flips. We use these points to find $(\epsilon, \epsilon)$-approximate feedback sets. We propose the use of smaller of Newton's step and gradient descent for better convergence to these points (line \ref{algSemiHarmonic-linemin} in Algorithm \ref{algorithm: semi harmonic}; \cite{balcan2021data} use only Newton's method). We motivate this step by giving an example (from a real dataset) where the gradients are both too small and too large near the minima (Figure \ref{fig: gd+ns}). This makes convergence challenging for both gradient descent and Newton's method, but the combination is effective even in this setting. We also give convergence guarantees and runtime bounds for Algorithm \ref{algorithm: semi harmonic} in the presence of approximate gradients (Theorems \ref{thm:harmboundmain}, \ref{thm:delbound}).
\end{itemize}

We first describe how to instantiate the sub-routine $\cA$ to compute approximate soft labels in Algorithm \ref{algorithm: semi harmonic} (full details in Appendix B for the interested reader).

\phantomsection\label{summary: harmonic approx}
Algorithm \ref{algorithm: harmonic approx} computes the soft label that optimizes the harmonic function objective \citep{zhu2003semi} and gradient for a given value of graph parameter $\sigma$ for a fixed unlabeled node $u$. This is accomplished by running the conjugate gradient for given number of iterations to solve systems corresponding to the harmonic function objective and its gradient.

\begin{algorithm}[t]
\caption{$\textsc{HarmonicApproximation}(G,f_L,u, \sigma,\epsilon)$}
\label{algorithm: harmonic approx}
\begin{algorithmic}[1]
\STATE {\bfseries Input:} Graph $G$ with labeled nodes $f_L$, unlabeled node $u$, query parameter $\sigma$, error tolerance $\epsilon$.
\STATE {\bfseries Output:} approximate soft label $f_{u, \epsilon}$ and approximate gradient $\frac{\partial f_u}{\partial \sigma}_\epsilon$.
\STATE{Let $\text{CG}(A, b, t)$ represent running the conjugate gradient method for $t$ iterations to solve  $Ax = b$.}
\STATE{Let $t_\epsilon$ indicate the number of iterations sufficient for $\epsilon$-approximation of $f_u(\sigma)\frac{\partial f_u}{\partial \sigma}$ (Theorem \ref{thm:harmapproxmain}).}
\STATE {Let $f_{U, \epsilon}(\sigma)=\text{CG}\left((I - P_{UU}), P_{UL}f_L, t_\epsilon\right)$, where $D_{ij}:=\bI[i=j]\sum_{k}W_{ik}, P=D^{-1}W$}.
\STATE{Let $\frac{\partial f}{\partial \sigma}_{\epsilon} \!=  \text{CG}\!\left((I-P_{UU}),\left(\frac{\partial P_{UU}}{\partial \sigma}\!f_{U,\epsilon}+\frac{\partial P_{UL}}{\partial \sigma}\!f_L\right)\!, t_\epsilon\!\right)$}, where
\begin{align*}
    \frac{\partial P_{ij}}{\partial \sigma}&=\frac{\frac{\partial w(i,j)}{\partial \sigma}-P_{ij}\sum_{k\in L+U}\frac{\partial w(i,k)}{\partial \sigma}}{\sum_{k\in L+U}w(i,k)},\\
    \frac{\partial w(i,j)}{\partial \sigma}&=\frac{2w(i,j)d(i,j)^2}{\sigma^3}.
\end{align*}
\RETURN{$f_{u, \epsilon}(\sigma), \frac{\partial f_u}{\partial \sigma}_\epsilon$.}
\end{algorithmic}
\end{algorithm}

\begin{algorithm}[h]
\caption{$\textsc{NonParametricApproximation}(G,f_L, i, \sigma, \epsilon)[\tilde{U},\lambda]$}
\label{algorithm: delalleau approx}
\begin{algorithmic}[1]
\STATE {\bfseries Input:} Graph $G$ with labeled nodes $f_L$ and set of unlabeled nodes $U$, unlabeled node $i\in U$, query parameter $\sigma$, error tolerance $\epsilon$.
\STATE {\bfseries Hyperparameters:} Small subset $\tilde{U} \subset U$ (e.g. chosen by the greedy approach of \cite{delalleau2005efficient}, or via \cite{wang2016scalable}), labeled loss regularization coefficient $\lambda$ (see \cite{delalleau2005efficient}). Here $W$ refers to the weight matrix of $\Tilde{U}$ and labels $L$.
\STATE {\bfseries Output:} approximate soft label $\tilde{f}_{i, \epsilon}$ and approximate gradient $\frac{\partial \tilde{f}_i}{\partial \sigma}_\epsilon$.
\STATE{Let $\text{CG}(A, b, t)$ represent running the conjugate gradient method for $t$ iterations to solve equation $Ax = b$.}
\STATE{Let $t_\epsilon$ indicate the number of iterations sufficient for $\epsilon$-approximation (Theorem \ref{thm:delapproxmain}).}
\STATE {\label{AlgDelapproxLineWeighted} Let $\tilde{f}_{i, \epsilon}(\sigma)=\frac{\sum_{j \in \tilde{U} \cup L}W_{ij}(\sigma) f_j(\sigma)_\epsilon}{\sum_{j \in \tilde{U} \cup L}W_{ij}(\sigma)}$, where}
\begin{align*}
    f(\sigma)_\epsilon & = \text{CG}(A, \lambda \overrightarrow{y}, t_\epsilon), \\
    A & = \lambda \Delta_L + Diag(W \mathbf{1}_n) - W, \\
    \left(\Delta_L\right)_{ij} &= \delta_{ij}\delta_{i \in L}, \\
    \overrightarrow{y} &= (y_1,...,y_l,0,...,0)^\top.
\end{align*}
\STATE{Let $\frac{\partial \tilde{f}_i}{\partial \sigma}_{\epsilon} = \frac{\sum_{j \in \tilde{U} \cup L}\frac{\partial W_{ij}}{\partial \sigma}f_j(\sigma) + \sum_{j \in \tilde{U} \cup L}W_{ij}(\sigma) \frac{\partial f_j}{\partial \sigma}_\epsilon + \tilde{f}_{i, \epsilon}(\sigma)\sum_{j \in \tilde{U} \cup L}\frac{\partial W_{ij}}{\partial \sigma}}{\sum_{j \in \tilde{U} \cup L}W_{ij}}$}, where
\begin{align*}
\frac{\partial f}{\partial \sigma}_\epsilon &= -\text{CG}(A, \frac{\partial A}{\partial \sigma} f, t_\epsilon), \\
\frac{\partial A}{\partial \sigma} &= \text{Diag}\left(\frac{\partial W}{\partial \sigma}\bf{1}_n \right) - \frac{\partial W}{\partial \sigma}, \\
\frac{\partial W_{ij}}{\partial \sigma} &= \frac{2 W_{ij}d_{ij}^2}{\sigma^3}. 
\end{align*}
\RETURN{$\tilde{f}_{i, \epsilon}(\sigma), \frac{\partial \tilde{f}_i}{\partial \sigma}_\epsilon$.}
\end{algorithmic}
\end{algorithm}

\begin{theorem} \label{thm:harmapproxmain}
    \textit{Suppose the function } $f : \mathbb{R} \rightarrow \mathbb{R}$ \textit{is convex and differentiable, and that its gradient is Lipschitz continuous with constant }$L > 0$\textit{, i.e. we have that }$|f'(x) - f'(y)| \leq L |x - y|$\textit{ for any }$x,y$. \textit{Then for some $\sigma \in [\sigma_{\min}, \sigma_{\max}]$, where $\left|\frac{\partial f}{\partial \sigma}\right| < \frac{1}{\epsilon \lambda_{\min}(I - P_{UU})}$ on $[\sigma_{\min}, \sigma_{\max}]$, $\kappa(A)$ is condition number of matrix $A$ and $\lambda_{\min}(A)$ is the minimum eigenvalue of $A$, we can find an $\epsilon$ approximation of $f_{u}(\sigma)\frac{\partial f_u}{\partial \sigma}$} \textit{achieving $\left|f_{u}(\sigma)\frac{\partial f_u}{\partial \sigma} - \left(f_{u}(\sigma)\frac{\partial f_u}{\partial \sigma}\right)_\epsilon\right| < \epsilon$, where $f_{u}(\sigma), \frac{\partial f_u}{\partial \sigma}$ are as described in Algorithm \ref{algorithm: harmonic approx} using } $O\left(\sqrt{\kappa(I - P_{UU})}\log\left(\frac{n}{\epsilon \lambda_{\min}(I - P_{UU})}\right)\right)$ conjugate gradient iterations.
\end{theorem}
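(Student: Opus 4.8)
The plan is to reduce the statement to the classical geometric convergence rate of the conjugate gradient (CG) method on a symmetric positive definite system, and then to propagate the resulting errors through the two CG calls of Algorithm~\ref{algorithm: harmonic approx} and through the product $f_u(\sigma)\frac{\partial f_u}{\partial\sigma}$. Recall that the harmonic soft labels $f_U$ are the minimizer of the convex quadratic energy $\frac12 f^\top(D-W)f$ with fixed boundary values $f_L$, equivalently the solution of $(I-P_{UU})f_U = P_{UL}f_L$; the (symmetrized) system matrix is SPD, which is exactly where the convexity and $L$-Lipschitz-gradient hypothesis is used, so CG applies and converges at a rate governed by $\kappa(I-P_{UU})$. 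I would first record the standard bound: running CG from the zero initialization for $t$ iterations on an SPD system $Ax=b$ yields $\|x_t-x^\ast\|_2 \le 2\sqrt{\kappa(A)}\left(\frac{\sqrt{\kappa(A)}-1}{\sqrt{\kappa(A)}+1}\right)^{t}\|x^\ast\|_2 \le 2\sqrt{\kappa(A)}\, e^{-2t/(\sqrt{\kappa(A)}+1)}\|x^\ast\|_2$, where the extra $\sqrt{\kappa(A)}$ prefactor comes from converting between the $A$-norm and the Euclidean norm.

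Next I would bound the two solution norms so the above becomes an absolute-error bound with the right logarithmic dependence. Since the harmonic extension of boundary values in $[0,1]$ stays in $[0,1]$ (maximum principle), $\|f_U\|_2\le\sqrt n$, and likewise $\|P_{UL}f_L\|_2\le\sqrt n$ because the rows of $P=D^{-1}W$ are substochastic; hence $\|f_{U,\epsilon}(\sigma)-f_U\|_2\le\delta_1$ after $O\!\bigl(\sqrt{\kappa(I-P_{UU})}\log(\sqrt n/\delta_1)\bigr)$ iterations. For the gradient system $(I-P_{UU})\frac{\partial f_U}{\partial\sigma}=\frac{\partial P_{UU}}{\partial\sigma}f_U+\frac{\partial P_{UL}}{\partial\sigma}f_L$, the hypothesis $\bigl|\frac{\partial f}{\partial\sigma}\bigr|<\frac{1}{\epsilon\lambda_{\min}(I-P_{UU})}$ on $[\sigma_{\min},\sigma_{\max}]$ gives $\|\frac{\partial f_U}{\partial\sigma}\|_2\le\sqrt n/(\epsilon\lambda_{\min}(I-P_{UU}))$, and the perturbation to the right-hand side coming from using $f_{U,\epsilon}$ in place of $f_U$ is controlled by $\|\frac{\partial P_{UU}}{\partial\sigma}\|_2$, which is uniformly bounded on $[\sigma_{\min},\sigma_{\max}]$ via the explicit formula $\frac{\partial w(i,j)}{\partial\sigma}=2w(i,j)d(i,j)^2/\sigma^3$ together with substochasticity of $P$. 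This yields $\bigl\|\frac{\partial f}{\partial\sigma}_\epsilon-\frac{\partial f_U}{\partial\sigma}\bigr\|_2\le\delta_2+O\!\bigl(1/\lambda_{\min}(I-P_{UU})\bigr)\delta_1$ after another $O\!\bigl(\sqrt{\kappa(I-P_{UU})}\log(\cdot)\bigr)$ iterations.

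Then I would combine the two errors through the product rule: $\bigl|f_{u,\epsilon}\,\tfrac{\partial f_u}{\partial\sigma}_\epsilon - f_u\,\tfrac{\partial f_u}{\partial\sigma}\bigr| \le |f_{u,\epsilon}-f_u|\,\bigl|\tfrac{\partial f_u}{\partial\sigma}_\epsilon\bigr| + |f_u|\,\bigl|\tfrac{\partial f_u}{\partial\sigma}_\epsilon-\tfrac{\partial f_u}{\partial\sigma}\bigr|$. Using $|f_u|\le1$ and the derivative bound $\bigl|\tfrac{\partial f_u}{\partial\sigma}_\epsilon\bigr|\lesssim 1/(\epsilon\lambda_{\min}(I-P_{UU}))$, it suffices to take per-coordinate errors $|f_{u,\epsilon}-f_u|\lesssim \epsilon^2\lambda_{\min}(I-P_{UU})$ and $\bigl|\tfrac{\partial f_u}{\partial\sigma}_\epsilon-\tfrac{\partial f_u}{\partial\sigma}\bigr|\lesssim \epsilon$; choosing $\delta_1,\delta_2$ of these orders, both $\log(\sqrt n/\delta_1)$ and $\log(\|\tfrac{\partial f_U}{\partial\sigma}\|_2/\delta_2)$ are $O\!\bigl(\log(n/(\epsilon\lambda_{\min}(I-P_{UU})))\bigr)$, so $t_\epsilon=O\!\bigl(\sqrt{\kappa(I-P_{UU})}\log(n/(\epsilon\lambda_{\min}(I-P_{UU})))\bigr)$ iterations of CG per system suffice, which is the claimed bound.

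I expect the main obstacle to be the bookkeeping in the error-propagation step. One must (i) handle the asymmetry of $I-P_{UU}$ by passing to an equivalent SPD system — e.g.\ $L_{UU}x=D_{UU}b$ or the symmetrized matrix $I-D^{-1/2}WD^{-1/2}$ — and verify that the condition number appearing in the bound is indeed that of this system (its eigenvalues coincide with those of $I-P_{UU}$ by similarity); (ii) bound $\|\frac{\partial P_{UU}}{\partial\sigma}\|_2$ and $\|\frac{\partial P_{UL}}{\partial\sigma}\|_2$ uniformly on $[\sigma_{\min},\sigma_{\max}]$ so the perturbation of the gradient system's right-hand side, and hence the contribution of $\delta_1$ to the gradient error, stays polynomially controlled; and (iii) confirm the convexity / $L$-Lipschitz-gradient hypothesis is precisely what guarantees the $\kappa$-dependent CG rate and the positivity of the relevant eigenvalues. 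Once these norm bounds and the CG rate are in hand, the remaining calculations are routine.
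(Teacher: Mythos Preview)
Your approach is essentially the paper's: invoke the CG geometric rate on the SPD system, bound the solution norms to convert relative to absolute error, and propagate through the product via the triangle/product inequality. Two remarks worth making. First, you attribute SPD-ness of the system to the convexity/$L$-Lipschitz hypothesis on the scalar map $\sigma\mapsto f_u(\sigma)$; that is not where it comes from. The paper obtains positive-definiteness directly from the fact that $\cL_{UU}=D_{UU}-W_{UU}$ is a grounded (Dirichlet) Laplacian, hence PD, and $I-P_{UU}=D_{UU}^{-1/2}\cL_{UU}D_{UU}^{-1/2}$ is congruent to it; the scalar-function hypotheses on $f_u$ play no role here (they are used downstream for the Nesterov/Newton convergence analysis). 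Second, you are actually more careful than the paper in two places: you explicitly track the perturbation of the gradient system's right-hand side caused by substituting $f_{U,\epsilon}$ for $f_U$, and you flag the need to pass to a symmetric system before applying CG --- the paper's argument glosses over both, so your bookkeeping is a genuine improvement, not an obstacle.
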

\begin{proofoutline}
    Here we show that the quadratic objective introduced in \cite{zhu2003semi} as well as its derivative can be solved efficiently using the conjugate gradient method under standard assumptions, given a well conditioned graph (i.e. reasonable minimum/maximum eigenvalues). We apply \cite{AXELSSON1976123}, which states that an $\epsilon$ approximation of the solution $x$ in equation $Ax = b$ can be found in $$O\left(\sqrt{\kappa(A)} \log\frac{1}{\epsilon}\right)$$ iterations using the CG method. In order to find an $\epsilon$-approximation of the product $f \frac{\partial f}{\partial \sigma}$ given we find approximations of $f$ and $\frac{\partial f}{\partial \sigma}$, we show that $f$ and $\frac{\partial f}{\partial \sigma}$ are bounded as well. \\\\
    First, we note that $I - P_{UU}$ is positive definite, as this is required to apply the CG method to solve equation $(I - P_{UU})x = b$ for some vector $b$ \citep{hestenes1952methods}. Next, show that $f$ is bounded by $\frac{1}{\lambda_{\min}(I - P_{UU})}$ and note $\frac{\partial f}{\partial \sigma}$ is bounded by $\frac{1}{\epsilon \lambda_{\min}(I - P_{UU})}$ by the problem statement. 
    An $\epsilon'$ approximation of the CG method gives us $|f_{\epsilon'} - f| \leq \epsilon' f, |\frac{\partial f}{\partial \sigma}_{\epsilon'} - \frac{\partial f}{\partial \sigma}| \leq \epsilon' \frac{\partial f}{\partial \sigma}$. Setting $\epsilon' = \frac{\epsilon^2 \lambda_{\min}(I - P_{UU})}{3}$, we achieve the desired bound of $\left|f_{u}(\sigma)\frac{\partial f_u}{\partial \sigma} - \left(f_{u}(\sigma)\frac{\partial f_u}{\partial \sigma}\right)_\epsilon\right| < \epsilon$\\\\
    By \cite{AXELSSON1976123}, finding $\epsilon'$ approximations using the CG method on positive definite matrix $G$ be done in $$O\left(\sqrt{\kappa(G)} \log \frac{1}{\epsilon'}\right)$$
iterations. Plugging in our $\epsilon'$ value and matrix $I - P_{UU}$, this takes
 $$O\left(\sqrt{\kappa(I - P_{UU})}\log\left(\frac{1}{\epsilon \lambda_{\min}(I - P_{UU})}\right)\right)$$ iterations of the CG method.\\\\
 This bound on the number of iterations of the CG method needed for our desired $\epsilon$ approximation gives theoretical justification for use of the conjugate gradient method with some fixed number of steps in applicable cases. Specifically, we use a fixed number of CG steps to find good approximate intervals in Algorithm \ref{algorithm: semi harmonic}, which uses $f \frac{\partial f}{\partial \sigma}$ on line \ref{line: CGmult}. Full proof of the theorem can be found in Appendix \ref{thm:harmapprox}
\end{proofoutline}

Algorithm \ref{algorithm: delalleau approx} computes the soft label corresponding to the efficient algorithm of \cite{delalleau2005efficient} and gradient for a given value of graph parameter $\sigma$ for a fixed unlabeled node $i$, by running the conjugate gradient for given number of iterations.

\begin{algorithm}[t]
\caption{$\textsc{ApproxFeedbackSet}(G,f_L,\sigma_0,\epsilon, \eta, \cA)$}
\label{algorithm: semi harmonic}
\begin{algorithmic}[1] 
\STATE {\bfseries Input:} Graph $G$ with unlabeled nodes $U$, labels $f_L$, query parameter $\sigma_0$, error tolerance $\epsilon$, learning rate $\eta$, algorithm $\cA$ to estimate soft labels and derivatives at any $\sigma$ (e.g.  Algorithm \ref{algorithm: harmonic approx}).
\STATE {\bfseries Output:} Estimates for piecewise constant interval containing $\sigma_0$, and function value at $\sigma$.
\STATE{Initialize $\sigma_l=\sigma_h=\sigma_0$.}
\FORALL{$u\in U$}
\STATE{Initialize $n=0, \lambda_0 = 1, y_0=\sigma_0$.}
\WHILE{\label{algSemiHarmonic-linewhile} $|\sigma_{n+1}-\sigma_{n}|\ge \epsilon $}
\STATE{Compute $f_{u, \epsilon}(\sigma), \frac{\partial f_u}{\partial \sigma}_\epsilon$ as  $\cA(G,f_L,u,\sigma_n,\epsilon)$%
}
\STATE{Set $g_u(\sigma_n)=(f_{u, \epsilon}(\sigma_n)-\frac{1}{2})^2$, $g_u'(\sigma_n) = 2\left(f_{u, \epsilon}(\sigma_n)-\frac{1}{2}\right)\left(\frac{\partial f_u}{\partial \sigma}_{\epsilon}\right)$ . \label{line: CGmult}}
\STATE{$\xi_\text{GD} \leftarrow \eta  g_u'(\sigma_n)$; $\xi_\text{Newton} \leftarrow 2  \frac{g_u(\sigma_n)}{g_u'(\sigma_n)} $.}
\STATE{\label{algSemiHarmonic-linemin} $y_{n+1}=\sigma_n-\min\{\xi_\text{GD}, \xi_\text{Newton}\}.$}
\IF{ $\xi_\text{GD}<\xi_\text{Newton}$}
\STATE{\label{algSemiHarmonic-linenestupdate} $\lambda_{n + 1} = \frac{1 + \sqrt{1 + 4 \lambda^2_n}}{2}, \gamma_n = \frac{1 - \lambda_{n}}{\lambda_{n + 1}}, \sigma_{n + 1} = (1 - \gamma_n) y_{n + 1} + \gamma_ny_n$}
\ELSE
\STATE\label{algSemiHarmonic-linenewtupdate} {$\sigma_{n + 1} = y_{n + 1}$}
\ENDIF
\STATE{$n\leftarrow n+1$}
\ENDWHILE
\STATE{\label{algSemiHarmonic-lineupdateint} $\sigma_l = \min \{\sigma_l, \sigma_{n+1}\}
, \sigma_h = \max\{\sigma_h, \sigma_{n+1}\}$.}
\ENDFOR
\RETURN{$[\sigma_l,\sigma_h]$, $f_\epsilon(\sigma_0)$.}
\end{algorithmic}
\end{algorithm}

\begin{theorem} \label{thm:delapproxmain}
    Suppose the function  $f : \mathbb{R} \rightarrow \mathbb{R}$ is convex and differentiable, and that its gradient is Lipschitz continuous with constant $L > 0$, i.e. we have that $|f'(x) - f'(y)| \leq L |x - y|$ for any $x,y$. Then for some $\sigma \in [\sigma_{\min}, \sigma_{\max}]$,
    where $\left|\frac{\partial f}{\partial \sigma}\right| \in O\left(\frac{1}{\epsilon \lambda_{\min}(A)}\right)$ on $[\sigma_{\min}, \sigma_{\max}]$, $\kappa(A)$ is 
    condition number of matrix $A$ and $\lambda_{\min}(A)$ is the minimum eigenvalue of $A$, we can find an $\epsilon$ 
    approximation of $\tilde{f}_{u}(\sigma)\cdot\frac{\partial \tilde{f}_u}{\partial \sigma}$ achieving 
    $\left|\tilde{f}_{u}(\sigma)\frac{\partial \tilde{f}_u}{\partial \sigma} - \left(\tilde{f}_{u}(\sigma)\frac{\partial \tilde{f}_u}{\partial \sigma}\right)_\epsilon\right| < \epsilon$, where $\tilde{f}_{u}(\sigma), \frac{\partial \tilde{f}_u}{\partial \sigma}$ are as described in Algorithm \ref{algorithm: delalleau approx} using \\
    $O\left(\sqrt{\kappa(A)}\log\left(\frac{\lambda (|L_{\text{Labels}}| + |\tilde{U}_{\text{Labels}}|)}{\epsilon \sigma_{\min}\lambda_{\min}(A)}\right)\right)$  conjugate gradient iterations. Here $L_{\text{Labels}}$ and $\tilde{U}_{\text{Labels}}$ are sets of labels as described in Algorithm \ref{algorithm: delalleau approx}, and $\lambda$ is the parameter passed into Algorithm \ref{algorithm: delalleau approx}.
\end{theorem}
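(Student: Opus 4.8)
\noindent\textit{Proof proposal.} The plan is to mirror the proof of Theorem~\ref{thm:harmapproxmain}, adapting it to the relevant positive definite matrix $A = \lambda\Delta_L + \diag(W\mathbf{1}_n) - W$ of Algorithm~\ref{algorithm: delalleau approx} and to the fact that the target quantity $\tilde f_u(\sigma)$ is now a weighted average of the coordinates of the linear-system solution $f(\sigma) = A^{-1}(\lambda\vec y)$ rather than a single coordinate of $(I - P_{UU})^{-1}(P_{UL}f_L)$. Thus there is one extra layer of error propagation: I would first control the conjugate-gradient error in the solves $Af = \lambda\vec y$ and $A\,\frac{\partial f}{\partial\sigma} = -\frac{\partial A}{\partial\sigma}f$, and then push that error through the averaging formula $\tilde f_i(\sigma) = \frac{\sum_j W_{ij}f_j}{\sum_j W_{ij}}$ and through its $\sigma$-derivative, the three-term expression in Algorithm~\ref{algorithm: delalleau approx}.

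First I would verify the preconditions for conjugate gradient \citep{hestenes1952methods}: $\diag(W\mathbf{1}_n) - W$ is the PSD combinatorial Laplacian of the graph on $\tilde U\cup L$, and $\lambda\Delta_L$ adds nonnegative (strictly positive on labeled coordinates) diagonal entries, so $A$ is symmetric positive semidefinite, and positive definite under the standard assumption ensuring a unique solution (e.g.\ every connected component contains a labeled node); $\lambda_{\min}(A)$ is then precisely the quantity in the bound. Next I would bound the magnitudes: $\|f\|_2 = \|A^{-1}\lambda\vec y\|_2 \le \lambda\|\vec y\|_2/\lambda_{\min}(A) \le \lambda\sqrt{|L_{\text{Labels}}|}/\lambda_{\min}(A)$, since $\vec y$ has at most $|L_{\text{Labels}}|$ nonzero entries of magnitude $O(1)$; and differentiating $Af = \lambda\vec y$ gives $\frac{\partial f}{\partial\sigma} = -A^{-1}\frac{\partial A}{\partial\sigma}f$, which is controlled using the hypothesized bound $|\frac{\partial f}{\partial\sigma}| = O(1/(\epsilon\lambda_{\min}(A)))$ together with the entrywise estimate $\frac{\partial W_{ij}}{\partial\sigma} = \frac{2W_{ij}d_{ij}^2}{\sigma^3} = O(\sigma_{\min}^{-3})$ valid on $[\sigma_{\min},\sigma_{\max}]$ --- the source of the $\sigma_{\min}$ dependence.

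Then I would propagate these through the averaging step. For the value, $\tilde f_i(\sigma) = \sum_j \frac{W_{ij}}{\sum_k W_{ik}} f_j(\sigma)$ is a convex combination of the $f_j(\sigma)$ and the weights are exact, so a CG solve with relative accuracy $\epsilon'$ (i.e.\ $|f_{j,\epsilon'} - f_j| \le \epsilon'|f_j|$) yields $|\tilde f_{i,\epsilon'} - \tilde f_i| \le \max_j |f_{j,\epsilon'} - f_j| \le \epsilon'\|f\|_2$, needing no lower bound on $\sum_k W_{ik}$. For the derivative, I would bound the three terms of the expansion using $\sum_{j\in\tilde U\cup L} W_{ij} = O(|L_{\text{Labels}}| + |\tilde U_{\text{Labels}}|)$, $|\frac{\partial W_{ij}}{\partial\sigma}| = O(\sigma_{\min}^{-3})$, and the bounds on $f$, $\frac{\partial f}{\partial\sigma}$ and their CG errors, to get $|\frac{\partial\tilde f_i}{\partial\sigma}_{\epsilon'} - \frac{\partial\tilde f_i}{\partial\sigma}| = O\!\big(\epsilon'\cdot\lambda(|L_{\text{Labels}}| + |\tilde U_{\text{Labels}}|)/(\epsilon\sigma_{\min}\lambda_{\min}(A))\big)$. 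Combining with the (polynomially bounded) magnitudes of $\tilde f_u$ and $\frac{\partial\tilde f_u}{\partial\sigma}$ in a triangle-inequality estimate for the product, and choosing $\epsilon'$ polynomially small in $\epsilon$, $\sigma_{\min}$, $\lambda_{\min}(A)$ and inverse-polynomially small in $\lambda$ and $|L_{\text{Labels}}| + |\tilde U_{\text{Labels}}|$, gives $|\tilde f_u\frac{\partial\tilde f_u}{\partial\sigma} - (\tilde f_u\frac{\partial\tilde f_u}{\partial\sigma})_\epsilon| < \epsilon$. Finally, invoking the Axelsson bound \citep{AXELSSON1976123} that CG reaches relative accuracy $\epsilon'$ on $A$ in $O(\sqrt{\kappa(A)}\log(1/\epsilon'))$ iterations and substituting the chosen $\epsilon'$ yields the claimed $O(\sqrt{\kappa(A)}\log(\lambda(|L_{\text{Labels}}| + |\tilde U_{\text{Labels}}|)/(\epsilon\sigma_{\min}\lambda_{\min}(A))))$ iteration count, constants and polynomial factors inside the logarithm being absorbed. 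The hard part is the derivative step: the error-amplifying factors $\|\frac{\partial A}{\partial\sigma}\|$, $\|f\|$, $\|\frac{\partial f}{\partial\sigma}\|$ and the weight sums all multiply together, and one has to check that a single choice of $\epsilon'$ kills all of them while still appearing only logarithmically in the iteration bound.
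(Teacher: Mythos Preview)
Your proposal follows the same overall skeleton as the paper's proof: positive definiteness of $A$, the eigenvalue bound $\|f\|_2\le\lambda\sqrt{|L_{\text{Labels}}|}/\lambda_{\min}(A)$, the convex-combination observation for $\tilde f_i$, a term-by-term error analysis of the three-term derivative formula, and finally the Axelsson iteration bound with the chosen CG tolerance absorbed inside the logarithm. In that sense the approaches match.

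There is one technical step where your sketch diverges from the paper and, as written, would not close. When propagating error through the third term of $\frac{\partial\tilde f_i}{\partial\sigma}$ you must bound the \emph{ratio}
\[
\frac{\sum_{j\in\tilde U\cup L}\frac{\partial W_{ij}}{\partial\sigma}}{\sum_{j\in\tilde U\cup L}W_{ij}}
\;=\;\frac{2}{\sigma^3}\cdot\frac{\sum_j W_{ij}d_{ij}^2}{\sum_j W_{ij}},
\]
and your plan of using the separate estimates $\sum_j W_{ij}=O(|L_{\text{Labels}}|+|\tilde U_{\text{Labels}}|)$ and $|\partial W_{ij}/\partial\sigma|=O(\sigma_{\min}^{-3})$ is an upper bound on a denominator together with an upper bound on a numerator --- it does not control the quotient. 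A lower bound on $\sum_j W_{ij}$ would be exponentially small in $d_{ij}^2/\sigma_{\min}^2$, destroying the logarithmic dependence in the final iteration count. The paper instead bounds the ratio directly by exploiting the Gaussian kernel: since $\partial W_{ij}/\partial\sigma = \tfrac{2}{\sigma^3}W_{ij}d_{ij}^2$ shares the factor $W_{ij}$ with the denominator, one can (after a Cauchy--Schwarz-type step) reduce to controlling $\sum_j d_{ij}\,e^{-d_{ij}^2/(2\sigma^2)}$ and then use that $x\mapsto xe^{-x^2/(2\sigma^2)}$ is maximized at $x=\sigma$ with value $\sigma e^{-1/2}$; this yields the factor $(|L_{\text{Labels}}|+|\tilde U_{\text{Labels}}|)/\sigma_{\min}^2$ without ever lower-bounding $\sum_j W_{ij}$. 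You should replace your separate numerator/denominator bounds with this ratio argument (or an equivalent one) to get the stated $\epsilon'$ and hence the claimed iteration count.
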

\begin{proofoutline}
    Algorithm \ref{algorithm: delalleau approx} computes the soft label and gradient corresponding to the efficient algorithm of \citet{delalleau2005efficient} for a graph $G$ with parameter $\sigma$ for a fixed unlabeled node $i \in U$. Unlike Algorithm \ref{algorithm: harmonic approx}, \citet{delalleau2005efficient} only performs a matrix inversion for a small subset of unlabeled 'training' nodes $\tilde{U} \subset U$ along with a set of labeled nodes $L$. The psuedolabels $\Tilde{f}_u(\sigma)$ of $i \in U \setminus \tilde{U}$ ('testing' nodes) are determined by a weighted sum of labels $f_x(\sigma)$, for $x \in \tilde{U} \cup L$, weighted by $W_{ij}(x, i)$.\\\\
 Similar to Algorithm \ref{algorithm: harmonic approx}, a derivative of these pseudolabels w.r.t $\sigma$ is calculated, and matrix inversion is estimated via some number of conjugate gradient iterations. In a similar fashion to Theorem \ref{thm:harmapproxmain}, we achieve a bound on the number of conjugate gradient iterations in order to achieve an $\epsilon$ approximation of  $\tilde{f}_{u}(\sigma)\cdot\frac{\partial \tilde{f}_u}{\partial \sigma}$. Notice that the bound $$O\left(\sqrt{\kappa(A)}\log\left(\frac{\lambda (|L_{\text{Labels}}| + |\tilde{U}_{\text{Labels}}|)}{\epsilon \sigma_{\min}\lambda_{\min}(A)}\right)\right)$$ differs from the bound in Theorem \ref{thm:harmapproxmain} in that the numerator of the log term is linear with respect to the small subset $\Tilde{U}$, as opposed to linear with the number of unlabeled nodes $n$. This is due to the size of the inverted matrix, which is now of size $(|L_{\text{Labels}}| + |\tilde{U}_{\text{Labels}}|)$ instead of size $|U| = n$. The full proof is presented in Appendix \ref{thm:delapprox}. 
\end{proofoutline}

Our main result is the following guarantee on the performance of Algorithm \ref{algorithm: semi harmonic}, which captures the approximation-efficiency trade-off for the algorithm. Compared to the $\Tilde{O}(n^4)$ running time of the approach in \citet{balcan2021data}, our algorithm runs in time $\Tilde{O}(n^2)$ for sparse kNN graphs (i.e. $k$-nearest neighbors with small constant $k$). To achieve this speedup, we replace an $O(n^3)$ matrix inverse for a given unlabeled point with a fixed number of Conjugate Gradient iterations taking time $O(|E_G|)$, where $|E_G|$ is the number of edges for graph $G$ corresponding to the matrix being inverted. Combined with our general algorithm for approximate data-driven algorithm design (Theorem \ref{thm:approx-feedback}), we obtain $\Tilde{O}(\sqrt{T})$ expected regret for online graph parameter tuning with approximate semi-bandit feedback, provided we run Algorithm \ref{algorithm: semi harmonic} with $\epsilon\le\frac{1}{\sqrt{T}}$. For our proof, we will assume that the soft label function $f_u(\sigma)$ is convex and smooth (i.e. derivative w.r.t. $\sigma$ is Lipschitz continuous) for estimating the convergence rates. In Section \ref{sec:expt}, we observe that our algorithm works well in practice even when these assumptions on $f$ are not satisfied, and it would be interesting to extend our analysis to weaker assumptions on the soft label.

\begin{figure*}[t]
\centering
\subfloat[MNIST]{\includegraphics[width = 2.2in]{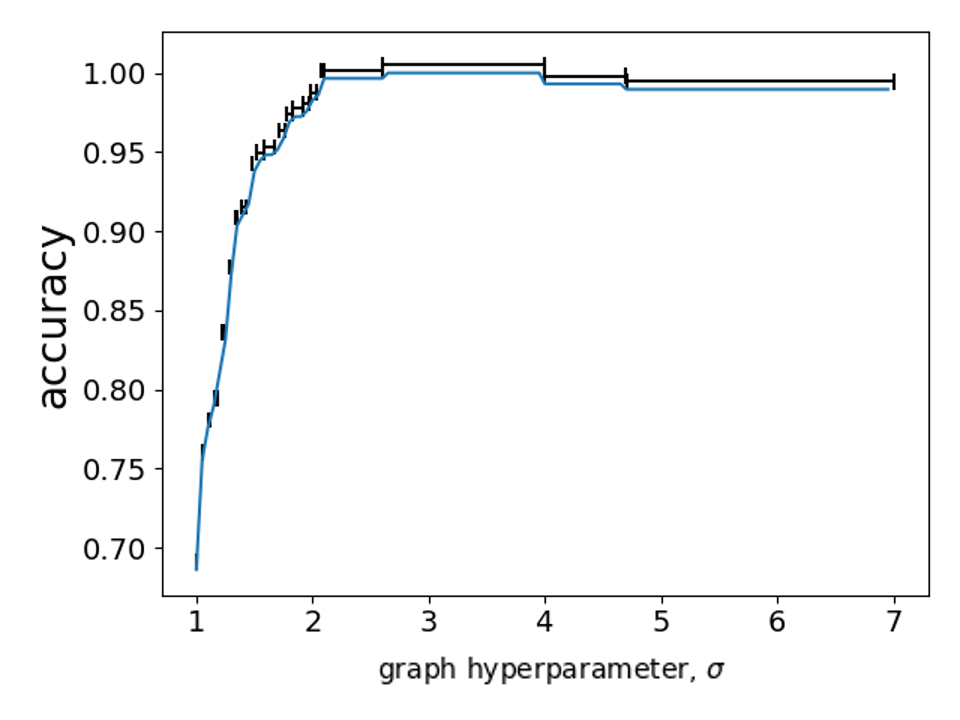}}
\subfloat[Fashion-MNIST]{\includegraphics[width = 2.2in]{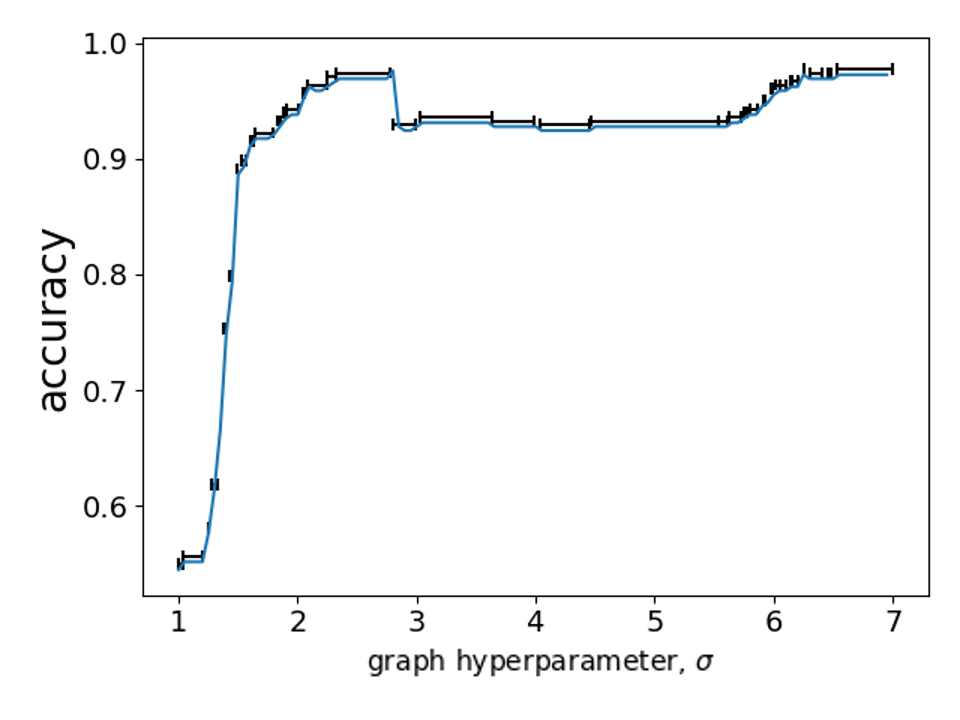}}
\subfloat[USPS]{\includegraphics[width = 2.2in]{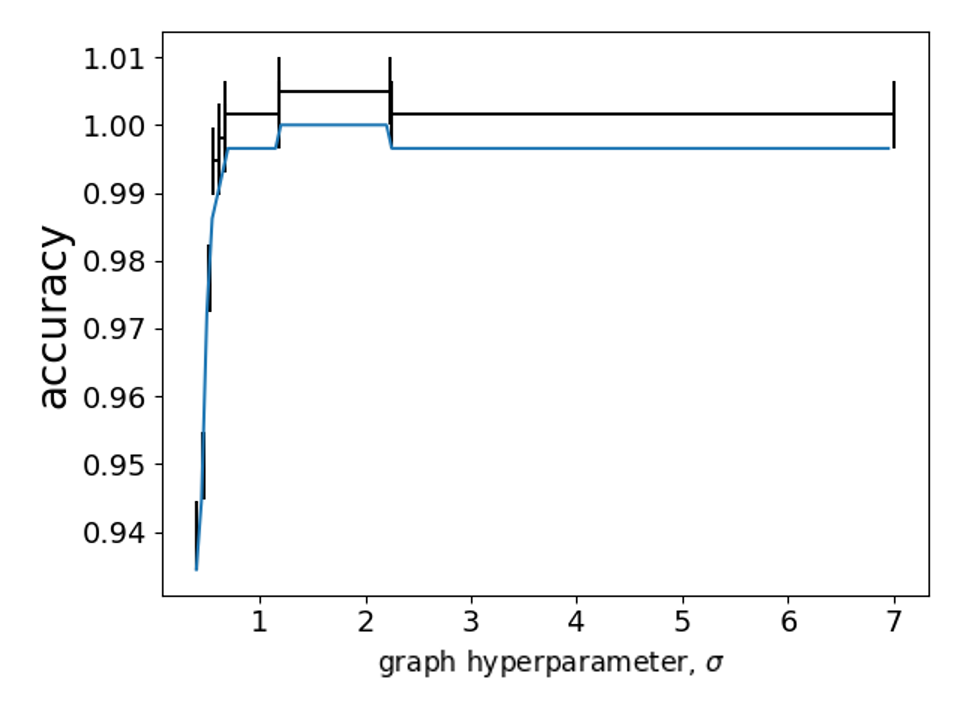}}
\caption{Loss intervals calculated with approximate soft-labels via Algorithm \ref{algorithm: harmonic approx}, kNN = 6, $|U| = 300$. Blue line corresponds to true loss, black intervals are estimated constant loss intervals.}
\label{fig:mainint}
\end{figure*}

\begin{theorem}\label{thm:harmboundmain}
    Using Algorithm \ref{algorithm: harmonic approx} for computing $\epsilon$-approximate soft labels and gradients for the harmonic objective of \citet{zhu2003semi}, if  $f_u(\sigma)$ is convex and smooth, Algorithm \ref{algorithm: semi harmonic} computes $(\epsilon,\epsilon)$-approximate semi-bandit feedback for the semi-supervised loss $l(\sigma)$ %
    in time $O\!\left(\!|E_G|n\sqrt{\kappa(\cL_{UU})}\log\!\left(\!\frac{n\Delta}{\epsilon \lambda_{\min}(\cL_{UU})}\!\right)\log\log\!\frac{1}{\epsilon}\!\right)$, where $|E_G|$ is the number of edges in graph $G$, $\cL_{UU}=I - P_{UU}$ is the normalized grounded graph Laplacian (with labeled nodes grounded), $\Delta=\sigma_{\max} - \sigma_{\min}$ is the size of the parameter range and $\kappa(M)=\frac{\lambda_{\max}(M)}{\lambda_{\min}(M)}$ denotes the condition number of matrix $M$.
\end{theorem}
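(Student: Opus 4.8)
The plan is to split the argument into a correctness claim — that Algorithm~\ref{algorithm: semi harmonic} with $\cA = $ Algorithm~\ref{algorithm: harmonic approx} (run at an internal tolerance scaled so that its soft labels and gradients are $\epsilon$-accurate via Theorem~\ref{thm:harmapproxmain}) returns a genuine $(\epsilon,\epsilon)$-approximate semi-bandit feedback set together with an $\epsilon$-accurate loss value — and a running-time bound, which I obtain by multiplying three quantities: the $|U|\le n$ outer iterations, the number of inner-loop iterations per node, and the cost of one call to $\cA$.

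For correctness I would start from the fact (\citet{balcan2021data}) that, for a fixed instance, the true dual loss $l(\sigma)$ is piecewise constant in $\sigma$ with breakpoints exactly where some node's harmonic soft label $f_u(\sigma)$ crosses $\tfrac12$. Algorithm~\ref{algorithm: semi harmonic} searches node by node for these crossings near $\sigma_0$: its inner loop runs a hybrid step that takes the smaller of a Newton step and an accelerated gradient step on $g_u(\sigma)=(f_{u,\epsilon}(\sigma)-\tfrac12)^2$, fed by the $\epsilon$-accurate soft label $f_{u,\epsilon}$ and its gradient from Algorithm~\ref{algorithm: harmonic approx}. Using convexity and smoothness of $f_u$, I would argue that on each side of the interior maximizer of $g_u$ this iteration converges to a zero of $f_u-\tfrac12$, and that the stopping test $|\sigma_{n+1}-\sigma_n|<\epsilon$ certifies the returned point lies within $O(\epsilon)$ of a true breakpoint (or is pinned near a range endpoint when $f_u$ does not cross $\tfrac12$). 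Taking $\sigma_l,\sigma_h$ to be the extreme returned points over all $u$ together with $\sigma_0$, the interval contains $\sigma_0$ and, after removing $O(\epsilon)$-length neighborhoods of its two endpoints, the remaining set $\hat A$ contains no breakpoint of $l$, so $l\equiv l(\sigma_0)$ on $\hat A$ while $\vol([\sigma_l,\sigma_h]\setminus\hat A)=O(\epsilon)$; finally the reported value $f_\epsilon(\sigma_0)$ — the loss obtained by rounding the $\epsilon$-accurate soft labels at $\sigma_0$ — is within $\epsilon$ of $l(\sigma_0)$. Rescaling $\epsilon$ by the accumulated constants gives exactly $(\epsilon,\epsilon)$-approximate semi-bandit feedback; the global partition, hence the system size $M$, is recovered by sweeping $\sigma_0$ over $[\sigma_{\min},\sigma_{\max}]$ and is polynomial in $n$ for constant $k$ by the piecewise-constant structure of the harmonic loss in \citet{balcan2021data}.

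For the running time, each of the $n$ outer iterations makes, per inner step, one call to Algorithm~\ref{algorithm: harmonic approx}, which performs a constant number of conjugate-gradient solves against $\cL_{UU}=I-P_{UU}$ and its derivative system. By Theorem~\ref{thm:harmapproxmain} each solve uses $O\!\big(\sqrt{\kappa(\cL_{UU})}\log(n\Delta/(\epsilon\lambda_{\min}(\cL_{UU})))\big)$ CG iterations — the factor $\Delta=\sigma_{\max}-\sigma_{\min}$ entering through the bound on $|\partial f_u/\partial\sigma|$ over the range implied by smoothness and boundedness of $f_u$ — and each CG iteration costs $O(|E_G|)$, being a single sparse matrix--vector product with (a submatrix of) the weight matrix plus the $O(|E_G|)$ right-hand-side assembly. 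The remaining ingredient is that the inner loop for a node terminates in $O(\log\log\tfrac1\epsilon)$ iterations: once the iterate enters the quadratic-convergence basin of Newton's method around a simple zero of $f_u-\tfrac12$ (where $g_u$ is locally a strictly convex quadratic), the error squares each step, so $O(\log\log\tfrac1\epsilon)$ more steps make $|\sigma_{n+1}-\sigma_n|<\epsilon$; the accelerated gradient-descent branch (lines~\ref{algSemiHarmonic-linenestupdate}) supplies the global progress that reaches this basin in $O(1)$ steps with respect to $\epsilon$ — this is exactly the behaviour illustrated in Figure~\ref{fig: gd+ns} and the reason for taking the min of the two step sizes. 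Multiplying $n\cdot\log\log\tfrac1\epsilon\cdot\sqrt{\kappa(\cL_{UU})}\log(n\Delta/(\epsilon\lambda_{\min}(\cL_{UU})))\cdot|E_G|$ yields the claimed bound.

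I expect the main obstacle to be the correctness half, specifically ensuring $[\sigma_l,\sigma_h]$ is not too large: with only convexity of $f_u$, $g_u$ can vanish on both sides of $\sigma_0$ and the hybrid iteration converges to just one zero, so one must either invoke monotonicity of $f_u$ in $\sigma$ (one crossing per node), run the search in both directions, or show the ``missed'' side cannot shrink the true constancy interval of $l$ around $\sigma_0$. A secondary difficulty is making the $O(\log\log\tfrac1\epsilon)$ inner-iteration bound rigorous in the presence of only $\epsilon$-approximate gradients: one must control how fast the hybrid step enters Newton's basin using inexact derivatives, and verify that the error in $g_u'$ neither triggers the stopping test $|\sigma_{n+1}-\sigma_n|<\epsilon$ prematurely nor prevents it from ever triggering.
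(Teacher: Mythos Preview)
Your proposal follows essentially the same decomposition as the paper: correctness via the breakpoint characterization $f_u(\sigma)=\tfrac12$, then runtime as $n\times(\text{inner steps})\times(\text{CG cost per call})$, with the CG count from Theorem~\ref{thm:harmapproxmain} and the per-iteration cost $O(|E_G|)$. Two specific points differ from the paper's proof and are worth noting.

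First, your account of where the factor $\Delta$ enters is not the paper's. The paper obtains $\Delta$ from the Nesterov analysis (Theorem~\ref{thm:nst}, via \citet{d2008smooth}): to guarantee $\epsilon$-closeness to the root one needs the inner-product error $\big|\langle (\partial g/\partial\sigma)_{\epsilon'}-\partial g/\partial\sigma,\,y-z\rangle\big|\le \epsilon/6$ uniformly on $[\sigma_{\min},\sigma_{\max}]$, and since $|y-z|\le\Delta$ this forces $\epsilon'=O(\epsilon/\Delta)$ inside Algorithm~\ref{algorithm: harmonic approx}. It is this scaling of the CG tolerance, not a direct bound on $|\partial f_u/\partial\sigma|$, that inserts $\Delta$ into the $\log$.

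Second, one of the difficulties you flag at the end---termination when the gradient is large enough to invalidate the hypothesis of Theorem~\ref{thm:nst}---is resolved explicitly in the paper's proof: when $|\partial f_u/\partial\sigma|>1/(\epsilon\lambda_{\min}(\cL_{UU}))$, the (factor-$2$) Newton step $2g_u/g_u'=(f_u-\tfrac12)/(\partial f_u/\partial\sigma)$ is below $\epsilon$, so the stopping test fires on the next iteration. This short case analysis is the only substantive piece your sketch is missing. Conversely, your treatment of the $(\epsilon,\epsilon)$-feedback correctness (building $\hat A$ by trimming $O(\epsilon)$-neighborhoods of the returned endpoints) is more explicit than the paper, which essentially asserts it; your caveat about the iteration converging to only one of two possible zeros is a real gap in both arguments.
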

\begin{proofoutline}
As noted in \citet{balcan2021data}, the loss $l(\sigma)$ is discontinuous at $\sigma^*$ only if $f_{u}(\sigma^*) = \frac{1}{2}$. Algorithm \ref{algorithm: semi harmonic} finds these critical points by finding roots/zeros of $\left(f_u(\sigma) - \frac{1}{2}\right)^2$.  We show (Appendix \ref{thm:nst}) that if $f$ is convex and smooth, Nesterov's accelerated descent \citep{nesterov1983method}  quadratically converges to within $\epsilon$ of such roots, given $O(\frac{\epsilon}{\Delta})$-approximations of $f \frac{\partial f}{\partial \sigma}$ and $\left|\frac{\partial f}{\partial \sigma}\right| < \frac{1}{\epsilon \lambda_{\min}(G_A)}$.   Newton's method   converges quadratically to within $\epsilon$, given $\epsilon$-approximations of $f \frac{\partial f}{\partial \sigma}$ (Appendix \ref{thm:ns}). We use Algorithm \ref{algorithm: harmonic approx} to find suitable $\epsilon$-approximations of $f \frac{\partial f}{\partial \sigma}$ in time $O\left(\sqrt{\kappa(\cL_{UU})}\log\left(\frac{n\Delta}{\epsilon \lambda_{\min}(\cL_{UU})}\right)\right)$ (Theorem \ref{thm:harmapproxmain}). We argue that if the derivative $\frac{\partial f}{\partial \sigma}$ is large (i.e. the condition on $\frac{\partial f}{\partial \sigma}$ for Theorem \ref{thm:nst} does not hold), then the Newton step will be less than $\epsilon$. Since the algorithm uses the smaller of the Newton and Nesterov updates, Algorithm \ref{algorithm: semi harmonic} will terminate for given $u \in U$. By quadratic convergence, we need $O(\log \log \frac{1}{\epsilon})$ iterations in Algorithm \ref{algorithm: semi harmonic} for each of the $O(n)$ unlabeled elements. Finally, noting that the Conjugate Gradient method takes $O(|E_G|)$ time per iteration, we obtain the claimed bound on runtime. For proof details, see Appendix \ref{thm:harmbound}.\looseness-1 %
\end{proofoutline}

Above analysis can be adapted to obtain the following guarantee for tuning $\sigma$ in the efficient algorithm of \citet{delalleau2005efficient}. While the above result guarantees a running time of $\Tilde{O}(n^2)$ for kNN graphs, learning the graph can be done even more efficiently for the scalable approach of \citet{delalleau2005efficient}. Their algorithm minimizes a proxy for the harmonic objective given by

$$\frac{1}{2}\sum_{u,v\in\Tilde{U}}w(u,v)(f(u)-f(v))^2+\lambda\sum_{w\in L}(f(w)-y_w)^2$$

\noindent where $\Tilde{U}\subset U$ and $\lambda$ are hyperparameters. In particular, one chooses a small set $\Tilde{U}$ with $|\Tilde{U}|\ll n$ and efficiently extrapolates the harmonic labels on $\Tilde{U}$ to the rest of $U$ using a Parzen windows based extrapolation. As before, the success of this more efficient approach also depends on the choice of the graph $G$ used. Our Algorithm \ref{algorithm: semi harmonic} obtains good theoretical guarantees in this case as well, with appropriate choice of algorithm $\cA$ (namely Algorithm \ref{algorithm: delalleau approx}).

\begin{figure*}[t]
\centering
\subfloat[MNIST]{\includegraphics[width = 2.2in]{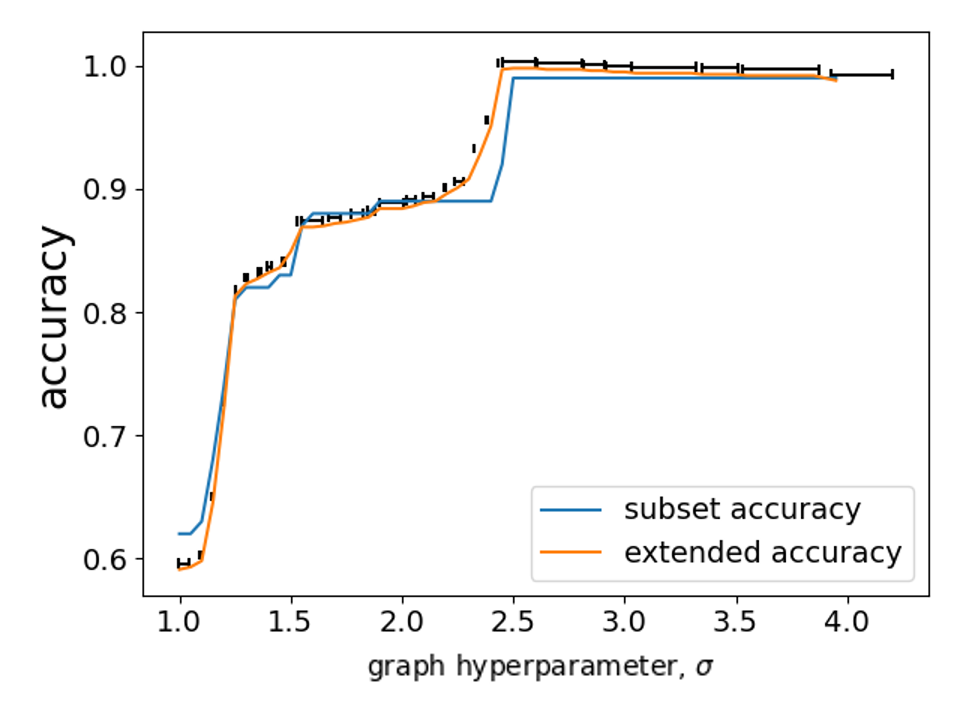}}
\subfloat[Fashion-MNIST]{\includegraphics[width = 2.2in]{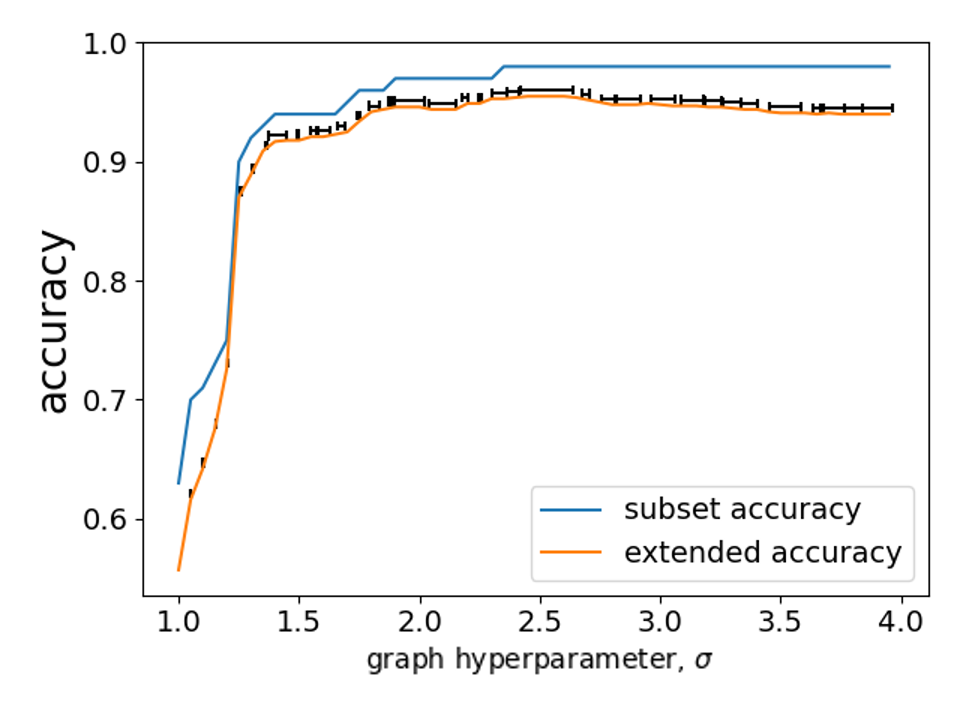}}
\subfloat[USPS]{\includegraphics[width = 2.2in]{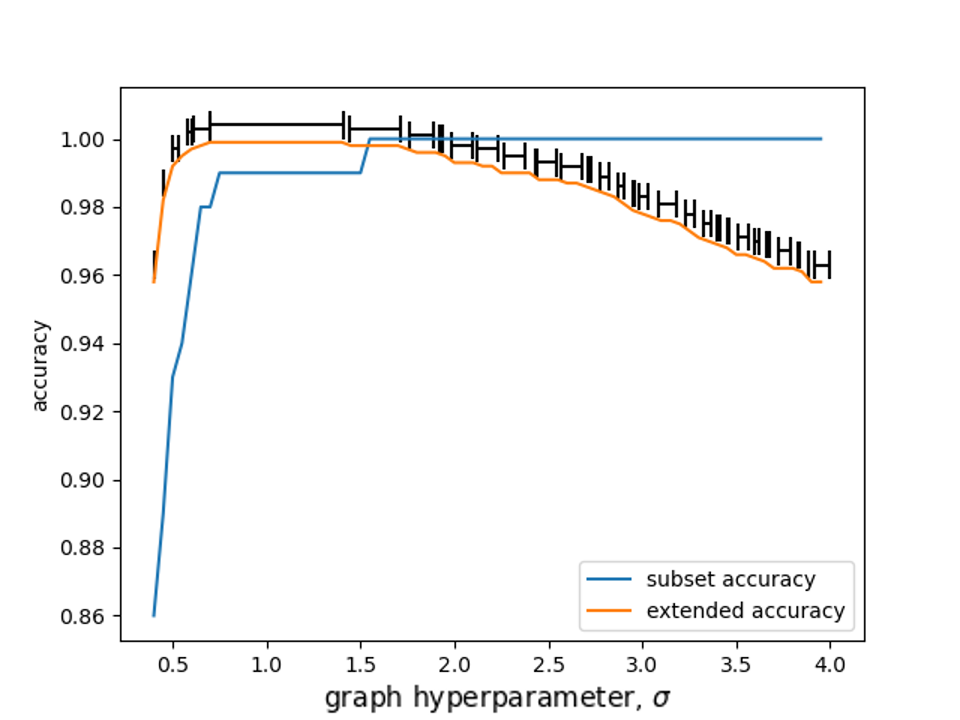}}
\caption{Interval calculation with labeling via Algorithm \ref{algorithm: delalleau approx}. 
$|L| = 10, |U| = 100, |\Tilde{U}| = 1000$.}
\label{fig:del}
\end{figure*}

\begin{theorem}\label{thm:delbound} (Informal)
Given an algorithm for computing $\epsilon$-approximate soft labels and gradients for the efficient semi-supervised learning algorithm of \citet{delalleau2005efficient} (Algorithm \ref{algorithm: delalleau approx}),
Algorithm \ref{algorithm: semi harmonic} computes $(\epsilon,\epsilon)$-approximate semi-bandit feedback for the semi-supervised loss $l(\sigma)$ %
    in time $O\!\!\left(\!|E_{G_{\tilde{U}}}|n\sqrt{\kappa(\cL_{A})}\!\log\!\!\left(\!\frac{\lambda(|L_{\text{Labels}}| + |\tilde{U}_{\text{Labels}}|)\Delta}{\epsilon \sigma_{\min}\lambda_{\min}(\cL_{A})}\!\right)\!\log\log\!\frac{1}{\epsilon}\!\right)$,  where all values are defined as in Algorithm \ref{algorithm: delalleau approx}
\end{theorem}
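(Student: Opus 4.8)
The plan is to mirror the proof of Theorem \ref{thm:harmboundmain} essentially step for step, swapping in Algorithm \ref{algorithm: delalleau approx} for Algorithm \ref{algorithm: harmonic approx} as the subroutine $\cA$, and tracking how the relevant matrix dimensions and condition numbers change. First I would recall the structural fact (due to \citet{balcan2021data}) that the dual semi-supervised loss $l(\sigma)$ can only be discontinuous at a value $\sigma^*$ where some extrapolated soft label $\tilde f_u(\sigma^*)=\tfrac12$; hence to compute an $(\epsilon,\epsilon)$-approximate feedback set it suffices to locate, to within $\epsilon$, all roots of $g_u(\sigma)=(\tilde f_u(\sigma)-\tfrac12)^2$ for each $u\in U$. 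The convexity-and-smoothness assumption on $\tilde f_u$ is what lets us invoke the convergence analysis behind line \ref{algSemiHarmonic-linemin} of Algorithm \ref{algorithm: semi harmonic}: Newton's method converges quadratically given $\epsilon$-accurate values of $\tilde f_u \frac{\partial \tilde f_u}{\partial\sigma}$ (Appendix \ref{thm:ns}), and Nesterov's accelerated descent converges quadratically given $O(\epsilon/\Delta)$-accurate values together with the bound $\bigl|\frac{\partial \tilde f_u}{\partial\sigma}\bigr|<\frac{1}{\epsilon\lambda_{\min}(\cL_A)}$ (Appendix \ref{thm:nst}); taking the smaller of the two steps, one shows that whenever the derivative is too large for the Nesterov guarantee the Newton step is already below $\epsilon$, so the \textbf{while} loop at line \ref{algSemiHarmonic-linewhile} always terminates. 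This part is verbatim the argument of Theorem \ref{thm:harmboundmain}.

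Next I would plug in the cost of the subroutine. By Theorem \ref{thm:delapproxmain}, Algorithm \ref{algorithm: delalleau approx} returns an $\epsilon$-approximation of $\tilde f_u(\sigma)\frac{\partial \tilde f_u}{\partial\sigma}$ in $O\!\bigl(\sqrt{\kappa(\cL_A)}\log(\tfrac{\lambda(|L_{\text{Labels}}|+|\tilde U_{\text{Labels}}|)}{\epsilon\sigma_{\min}\lambda_{\min}(\cL_A)})\bigr)$ conjugate-gradient iterations, where $\cL_A = A = \lambda\Delta_L + \diag(W\mathbf 1)-W$ is the (small) regularized Laplacian on $\tilde U\cup L$. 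Since we need $O(\epsilon/\Delta)$-accuracy for the Nesterov branch, the $\epsilon$ inside the log is replaced by $\epsilon/\Delta$, which only changes the constant and introduces the $\Delta$ factor appearing in the statement. Each CG iteration costs $O(|E_{G_{\tilde U}}|)$ time — one matrix-vector product against $A$, whose sparsity pattern is that of the kNN graph on $\tilde U\cup L$ — and the Parzen-windows extrapolation in line \ref{AlgDelapproxLineWeighted} adds only $O(\deg(i))$ per queried node, which is dominated. Quadratic convergence gives $O(\log\log\tfrac1\epsilon)$ outer iterations per node, and there are $O(n)$ nodes in $U$; multiplying the three factors yields the claimed runtime $O\!\bigl(|E_{G_{\tilde U}}|\, n\,\sqrt{\kappa(\cL_A)}\log(\tfrac{\lambda(|L_{\text{Labels}}|+|\tilde U_{\text{Labels}}|)\Delta}{\epsilon\sigma_{\min}\lambda_{\min}(\cL_A)})\log\log\tfrac1\epsilon\bigr)$.

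The remaining task — and the one genuinely new wrinkle relative to Theorem \ref{thm:harmboundmain} — is verifying that the output intervals $[\sigma_l,\sigma_h]$ actually constitute $(\epsilon,\epsilon)$-approximate semi-bandit feedback in the sense of Definition \ref{def:fb} when the soft labels are the \emph{extrapolated} Delalleau labels rather than the harmonic labels. Here I would argue that, because each $\tilde f_u$ is a fixed positive-weighted combination of the $\tilde U\cup L$ soft labels (with weights $W_{ij}(\sigma)$ that are themselves smooth, monotone functions of $\sigma$), an $\epsilon$-error in $\tilde f_u$ near a root translates into an $O(\epsilon)$-error in the located crossing point, and the at-most-finitely-many crossings whose locations are uncertain contribute total length $O(\epsilon)$ to the symmetric difference $\tilde A_t\setminus \hat A_t$; on the complement, the piecewise-constant true loss agrees with the reported value so $|\tilde l_t - l_t|\le\epsilon$. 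I expect the main obstacle to be exactly this bookkeeping: controlling how approximation error in the weighted-average label propagates to the feedback-set endpoints, and ruling out pathological accumulations of near-$\tfrac12$ crossings, since the extrapolation couples all nodes through the same small solve and the per-node root-counting bound used in the harmonic case must be re-derived for $\tilde f_u$. Everything else is a direct transcription, and the full details go to Appendix \ref{thm:delbound}.
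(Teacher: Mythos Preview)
Your proposal is correct and follows essentially the same approach as the paper: replicate the argument of Theorem \ref{thm:harmboundmain} step for step, substituting the CG-iteration bound of Theorem \ref{thm:delapproxmain} (for the matrix $A$ on $\tilde U\cup L$) in place of Theorem \ref{thm:harmapproxmain}, and accounting for the per-iteration cost $O(|E_{G_{\tilde U}}|)$. The paper's own proof is in fact just a two-line sketch stating exactly this analogy; your write-up is already more detailed than what appears there. The ``new wrinkle'' you flag --- checking that the approximate roots of the \emph{extrapolated} soft labels $\tilde f_u$ still yield $(\epsilon,\epsilon)$-approximate feedback sets --- is a legitimate point of care, but the paper does not treat it separately: it simply asserts that the proof ``follows in the same manner'' and leaves that bookkeeping implicit, just as it does in the harmonic case.
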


\begin{proofoutline}
    The proof follows in the same manner as Theorem \ref{thm:harmboundmain}, except we now use Theorem \ref{thm:delapproxmain} to bound  the number of iterations of the CG method. 
\end{proofoutline}

We empirically observe (Figure 5, Appendix E) that sparsity (using only kNN edges, and nodes in $\tilde{U}$) results in well-conditioned matrices (bounded $\sqrt{\kappa(A)}$) in the considered parameter range $[\sigma_{\min},\sigma_{\max}]$.
\begin{remark}
    In this work we have focused on efficient graph learning for the harmonic objective approach of \citet{zhu2003semi} and the efficient algorithm of \citet{delalleau2005efficient}. Developing approaches that work for other efficient algorithms from the literature \citep{sinha2009semi,wang2016scalable} constitutes interesting future work.
\end{remark}

\section{Experiments}\label{sec:expt}

In this section we evaluate computational speedups as a result of using the conjugate gradient method and implement Algorithm \ref{algorithm: semi harmonic} to compute pieces of the loss function under different labeling techniques.

\noindent \textit{Setup}: We consider the task of semi-supervised binary classification (classes 0 and 1) on image datasets. As in \citet{delalleau2005efficient}, we pre-process data instances via Principal Component Analysis, keeping the first 45 principal components. We measure {\it distance} between any pairs of images by $L_2$ distance between principal components, and set weights via Gaussian Kernel parameterized by $\sigma$. %
When testing computational speedup using the CG method, we draw random subsets of the full dataset at varying sizes of $n$, with labeled set size $L = \frac{n}{10}$. When computing approximate matrix inverses, we use $t = 20$ conjugate gradient iterations.

 \noindent\textit{Datasets}:
We use three established benchmark image datasets -- MNIST, Fashion-MNIST, and USPS. Both the MNIST dataset (handwritten digits, \cite{726791}) and the FashionMNIST dataset (mock fashion items, \cite{xiao2017fashion}) consist of 28 by 28 grayscale images with 10 classes, and 6000 images per class. The US Postal Service (USPS) dataset \citep{291440} has 7291 handwritten digits downscaled to 16 by 16 grayscale images. For MNIST and USPS, binary classification between classes 0 and 1 corresonds to classifying between handwritten 0s and 1s. For FashionMNIST, it corresponds to classifying between classes T\_shirt and Trouser.

\subsection{Efficient Feedback Set Computation  (Algorithm \ref{algorithm: semi harmonic})}\label{sec:FeedbackSet}

\begin{table*}[t]
\centering
\caption{Time (in seconds) per Interval (TpI) and Number of Intervals ($M$) using both the Conjugate Gradient method (CG, $t=20$ iterations) and Matrix Inverse (MI) on full graphs or kNN, $k = 6$ graphs using Algorithm \ref{algorithm: semi harmonic}. Approximate soft labels are computed using one of Algorithms \ref{algorithm: harmonic approx}, \ref{algorithm: delalleau approx}.}

\begin{tabular}{ |c|c||c|c|c|c||c|c|c|c||c|c| } 
\hline
\multicolumn{1}{|c|}{\multirow{3}{*}{Dataset}} & \multicolumn{1}{c||}{\multirow{3}{*}{Size}} & \multicolumn{4}{c||}{Algorithm \ref{algorithm: harmonic approx}} & \multicolumn{4}{c||}{Algorithm \ref{algorithm: harmonic approx} (kNN)} &  \multicolumn{2}{c|}{Algorithm \ref{algorithm: delalleau approx} (kNN)} \\ \cline{3-12}

\multicolumn{1}{|c|}{} & \multicolumn{1}{c||}{} & \multicolumn{2}{c|}{TpI} & \multicolumn{2}{c||}{$M$} & \multicolumn{2}{c|}{TpI} & \multicolumn{2}{c||}{$M$} & \multicolumn{1}{c|}{TpI} & \multicolumn{1}{c|}{$M$} \\  \cline{3-12}

\multicolumn{1}{|c|}{} & \multicolumn{1}{c||}{} & \multicolumn{1}{c|}{CG} & \multicolumn{1}{c|}{MI} & \multicolumn{1}{c|}{CG} & \multicolumn{1}{c||}{MI} & \multicolumn{1}{c|}{CG} & \multicolumn{1}{c|}{MI} & \multicolumn{1}{c|}{CG} & \multicolumn{1}{c||}{MI} & \multicolumn{2}{c|}{CG} \\ \hline

\multirow{4}{4em}{MNIST}
& 100 &  \textbf{1.50} & 3.22  & 38.9 & 26.0 &  6.33 & 10.11 & 17.36 & 5.7 &  2.22 & 11.5\\
& 300  &  15.87 & 346.46 & 22.1 &  26.8 &  19.98 & 2405.70 & 22.1 & 7.9 &  \textbf{5.98} & 26.2\\
& 500  &  57.90 & 818.11  & 26.6 & 23.6 & 20.99  & 6791.79 & 26.6 & 7.9 & \textbf{7.60} & 36.1\\
& 12615  &  - & -  & - & - & -  & - & - & - & 46.36 & -\\
\hline
\multirow{4}{4em}{Fashion-MNIST} 
& 100  &  \textbf{1.79} & 3.56 & 39.0 & 23.9 & 11.79 & 12.49 & 18.55 & 8.3 &  2.91 & 12.7\\
& 300   &  11.73 & 268.20 & 45.6 & 38.9 & 21.08 & 1447.56 & 35.9 & 21.2 &  \textbf{7.24} & 33.6\\
& 500  &  39.98 & 766.73 & 50.9 & 37.6 & 35.91 & 6311.03 & 38.7 & 29.0 &  \textbf{9.21} & 44.6\\
& 11950  &  - & -  & - & - & -  & - & - & - & 32.3 & -\\
\hline
\multirow{4}{4em}{USPS} 
& 100  & \textbf{1.58} & 3.47 & 25.4 & 18.6 &  6.91 & 12.53 & 4.7 & {1.3} &  2.12 & 6.67\\
& 300  & 16.63 & 238.16 & 30.1 & 18.8 & 29.86 & 68.70 & 5.6 & {1.2} & \textbf{6.34} & 16.14 \\
& 500 & 57.50 & 755.94 & 39.1 & 17.8 & 63.18 & 31.54 & 6.1 & {1.0} &  \textbf{12.37} & 20.4\\
& 2149  &  - & -  & - & - & -  & - & - & - & 27.28 & -\\
\hline

\end{tabular}   
\label{table:regularIntervals}
\end{table*}

We consider the problem of finding approximate intervals of the piecewise constant loss $l(\sigma)$ using Algorithm \ref{algorithm: harmonic approx} with the number of unlabeled points $n \in \{100, 300, 500\}$. By finding a set of these piecewise constant componenets, we are able to search the continuous paramter space exhaustively, with an optimal hyperparameter being any parameter in the loss interval with lowest loss. We do this for both the complete graph, as well as kNN with $k=6$, setting number of labeled examples $|L|=10$.
We do the same with Algorithm \ref{algorithm: delalleau approx} as algorithm $\cA$, with (uniformly random) subset $\Tilde{U}$ size 50 and hyperparameter $\lambda=1.4$.

We motivate design choices in Algorithm \ref{algorithm: semi harmonic} by examining $(\epsilon,\epsilon)$-approximate semi-bandit feedback for the semi-supervised loss $l(\sigma)$ produced by the algorithm.
For full implementation details, see Appendix E.1.

\noindent \textit{Results: } Figure \ref{fig:mainint} (as well as Fig.\ 2, 3, and 4 in Appendix) indicates that the CG method can be used to find accurate piecewise intervals on real loss functions. We see over 10x speedup for using the CG method as opposed to using matrix inversion for computing soft labels via Algorithm \ref{algorithm: harmonic approx} (Table \ref{table:regularIntervals}). This is due to the speedup in inversion time between a full matrix inverse and the CG method. For specific times for these two inversion techniques, see Tables 1-4 in Appendix. When using Algorithm \ref{algorithm: delalleau approx} to compute labels, we see a speedup of 100x with slower asymptotic increase over time in comparison to the full inverse using Algorithm \ref{algorithm: harmonic approx}. Here the speedup is due to the fact that the CG method is only run for size 50 subsets of $U$, with $O(|\Tilde{U}|)$ algebraic steps afterwards to find soft labels/gradients for a given unlabeled point in $U$. %
We find that the calculation of feedback sets in kNN graphs takes longer to find a single interval on smaller data subsets, but the runtime asymptotically grows slower than for complete graphs. This is likely due to longer piecewise constant intervals for kNN graphs, leading to a larger amount of matrix inverse calculations performed per interval (consistent with $M$ values in Table \ref{table:regularIntervals}). %
We also find that the CG method is more robust to higher condition number than the full inverse for low $\sigma$ values, indicating that the CG method is not only faster, but also can be more stable than full matrix inversion for ill-conditioned grounded Laplacians.\looseness-1 %

\paragraph{CG Method Details.}%
    We run $t$ iterations of the CG method to approximate  grounded Laplacian inverses for finding soft labels in Algorithm \ref{algorithm: harmonic approx}. We consider $t \in \{5,10, 20\}, \sigma' \in [1,7]$ when finding optimal values in terms of unlabeled classification accuracy. %
    We use SciPy \citep{2020SciPy-NMeth} for performing the conjugate gradient method as well as storing nearest-neighbor matrices sparsely for time speedup. For all three datasets, we  find parameters to closely match/surpass the performance of the  harmonic solution with matrix inverse (i.e. prior work) for optimal $\sigma$ in both the complete graph and kNN setting with an order of magnitude or more speedup (Tables 1 and 2 in the appendix). We find that there is little time difference between $t = 5, 15$ and $20$ conjugate gradient iterations. %
    We also find slight speedup %
    for kNN graphs.

\section{Conclusion}

We provide a general analysis for approximate data-driven parameter tuning in the presence of approximate feedback. We show how this approximate feedback may be efficiently implemented for learning the graph in semi-supervised learning using the conjugate gradient method. We further show the significance of using sparse nearest neighborhood graphs for semi-supervised learning -- formally they need provably fewer samples for learning compared to using the complete graph, and moreover, in practice, they lead to better conditioned matrices for which our approach converges faster. We quantify the efficiency vs.\ accuracy trade-off for our approach, and empirically demonstrate its usefulness in more efficiently learning the graph for classic harmonic objective based algorithms \citep{zhu2003semi,delalleau2005efficient}. We believe this is an important step in making the data-driven approach practical for semi-supervised learning, and would potentially be useful for making data-driven algorithm design more useful for other problems. Interesting future directions include extending this approach to learning the graph for modern graph neural network based methods (which still assume a given graph) for semi-supervised learning, and applications to other parameter tuning problems where exact feedback may be computationally expensive to obtain.\looseness-1

\section{Acknowledgement}

We thank Nina Balcan for suggesting the problem and for useful discussions. We also thank David Tang and Advait Nene for useful discussions.

\bibliography{sharma_554}

\clearpage

\appendix

\section{Proof of Theorem \ref{thm:approx-feedback}}\label{app:proof of approx feedback}

\textbf{Theorem 3.1 (restated).} {\it 
Suppose $l_1,\dots,l_T:\cP\rightarrow[0,1]$ is a sequence of $\beta$-dispersed loss functions, and the domain  $\cP\subset\R^d$ is contained in a ball of radius $R$. The Approximate Continuous Exp3-Set algorithm (Algorithm \ref{algorithm: semibandit}) achieves expected regret $\tilde{O}(\sqrt{dMT\log(RT)}+T^{1-\min\{\beta,\beta'\}})$ with access to $(\epsilon,\gamma)$-approximate semi-bandit feedback with system size $M$, provided $\gamma \le T^{-\beta'},\epsilon\le {\vol(\cB(T^{-\beta}))} T^{-\beta'}$, where $\cB(r)$ is a $d$-ball of radius $r$.
}

\begin{proof}[Proof of Theorem \ref{thm:approx-feedback}]
We adapt the $\textsc{Continuous-Exp3-SET}$ analysis of \cite{alon2017nonstochastic,dick2020semi}. %
Define weights $w_t(\rho)$ over the parameter space $\cP$ as $w_{1}(\rho)=1$ and $w_{t+1}(\rho)=w_{t}(\rho)\exp(-\eta\hat{l}_t(\rho))$ and normalized weights $W_t=\int_{\cP}w_t(\rho)d\rho$. Note that $p_t(\rho)=\frac{w_t(\rho)}{W_t}$. We will give upper and lower bounds on  the quantity $\E[\log W_{T+1}/W_{1}]$, i.e. the expected value of the log-ratio of normalized weights.

\noindent Using $\exp(-x) \le 1 - x + x^2/2$ for all $x \ge 0$, we get
\begin{align*}
    \frac{W_{t+1}}{W_t} &= \int_\cP p_t(\rho)\exp(-\eta\hat{l}_t(\rho))d\rho \\&\le 1-\eta \int_\cP p_t(\rho)\hat{l}_t(\rho)d\rho + \frac{\eta^2}{2}\int_\cP p_t(\rho)\hat{l}_t^2(\rho)d\rho.
\end{align*}
Computing the oscillating product and using $1-x\le \exp(-x)$ for all $x\ge 0$, we get
\begin{align*}
    \frac{W_{T+1}}{W_{1}}\le \exp\Bigg(&-\eta \sum_{t=1}^T\int_\cP p_t(\rho)\hat{l}_t(\rho)d\rho + \frac{\eta^2}{2}\sum_{t=1}^T\int_\cP p_t(\rho)\hat{l}_t^2(\rho)d\rho\Bigg).
\end{align*}

\noindent Taking logarithm and expectations on both sides we get

\begin{align*}
    \E\left[\log \frac{W_{T+1}}{W_{1}}\right]\le &-\eta\sum_{t=1}^T\E\left[\int_\cP p_t(\rho)\hat{l}_t(\rho)d\rho\right]+\frac{\eta^2}{2}\sum_{t=1}^T\E\left[\int_\cP p_t(\rho)\hat{l}_t^2(\rho)d\rho\right].
\end{align*}

\noindent %
We have, by the definitions of expectation and approximate semi-bandit feedback,

\begin{align*}\E_t\left[l_t(\rho_t)\right]&=
\int_{\cP}p_t(\rho){l}_t(\rho)d\rho\\
&=
\sum_{i=1}^M \int_{\tilde{A}_t^{(i)}}p_t(\rho){l}_t(\rho)d\rho
\\
&=
\sum_{i=1}^M \left[\int_{\hat{A}_t^{(i)}}p_t(\rho){l}_t(\rho)d\rho+\int_{\tilde{A}_t^{(i)}\setminus \hat{A}_t^{(i)}}p_t(\rho){l}_t(\rho)d\rho\right]\\
&\le \sum_{i=1}^M \int_{\hat{A}_t^{(i)}}p_t(\rho)(\tilde{l}_t(\rho)+\gamma)d\rho+M\epsilon \qquad\qquad\qquad\qquad\qquad(\because\; p_t(\rho){l}_t(\rho)\le 1 \;\forall\;\rho)\\
&\le \sum_{i=1}^M \int_{\tilde{A}_t^{(i)}}p_t(\rho)(\tilde{l}_t(\rho)+\gamma)d\rho+M\epsilon \\
&=\int_{\cP}p_t(\rho)\tilde{l}_t(\rho)d\rho+\gamma  + M\epsilon \qquad\qquad\qquad\qquad\qquad\qquad\quad(\because\; \int_{\cP}p_t(\rho)d\rho=1).%
\end{align*}

Moreover, \begin{align*}\E\left[\int_\cP p_t(\rho)\hat{l}_t(\rho)d\rho\right]&=\E_{<t}\E_t\left[\int_\cP p_t(\rho)\hat{l}_t(\rho)d\rho\right]\\&=\E_{<t}\left[\int_\cP p_t(\rho)\tilde{l}_t(\rho)d\rho\right],\end{align*} using the definition of $\hat{l}_t$ in Algorithm \ref{algorithm: semibandit}. Plugging this in above, we get

\begin{align*}\E\left[l_t(\rho_t)\right]&=\E_{<t}\E_t\left[l_t(\rho_t)\right]\\&\le \E_{<t}\left[\int_\cP p_t(\rho)\tilde{l}_t(\rho)d\rho\right] + \gamma+M\epsilon \\&= \E\left[\int_\cP p_t(\rho)\hat{l}_t(\rho)d\rho\right]+\gamma +M\epsilon,\end{align*}

\noindent and, further,
\begin{align*}
   \E_t[\hat{l}_t(\rho)^2] &=
    \int_\cP p_t(\rho')\left(\frac{\I[\rho\in \Tilde{A}_t(\rho')]}{p_t(\Tilde{A}_t(\rho'))}\tilde{l}_t(\rho)\right)^2d\rho'\\
    &= \left(\frac{\tilde{l}_t(\rho)}{p_t(\Tilde{A}_t(\rho))}\right)^2\int_{\Tilde{A}_t(\rho)}p_t(\rho')d\rho'\\
    &\le\frac{1}{p_t(\Tilde{A}_t(\rho))}.
\end{align*}

\noindent Therefore, $$\E\left[\int_{\cP}p_t(\rho)\hat{l}_t(\rho)^2d\rho\right]\le \E\left[\int_{\cP}p_t(\rho)\cdot \frac{1}{p_t(\Tilde{A}_t(\rho))}d\rho\right]=M.$$ Putting together, we get

$$\E\left[\log \frac{W_{T+1}}{W_{1}}\right]\le -\eta\E\left[\sum_{t=1}^Tl_t(\rho_t)\right] + \eta T(M\epsilon+\gamma)+ \frac{\eta^2MT}{2}.$$

\noindent We can also adapt the argument of \cite{dick2020semi} to obtain a lower bound for $\frac{W_{T+1}}{W_1}$ in terms of $D_r$, the number of $L$-Lipschitz violations between the worst pair of points within distance $r$ across the $T$ loss functions. We have

\begin{align*}\frac{W_{T+1}}{W_1} &=\frac{1}{\vol(\cP)}\int_{\cP}w_{T+1}(\rho)d\rho \\&\ge \frac{1}{\vol(\cP)}\int_{\cB(\rho^*,r)}w_{T+1}(\rho)d\rho. \end{align*}

\noindent Taking the log and applying Jensen's inequality gives

\begin{align*}\log \frac{W_{T+1}}{W_1}\ge& \log \frac{\vol(\cB(\rho^*,r))}{\vol(\cP)}-\frac{\eta}{\vol(\cB(\rho^*,r))} \int_{\cB(\rho^*,r)}\sum_{t=1}^T\hat{l}_t(\rho)d\rho.\end{align*}

\noindent Taking expectations w.r.t. the randomness in Algorithm \ref{algorithm: semibandit} (but for any loss sequence $l_1,\dots,l_t$) and using the fact that $\cP$ is contained in a ball of radius $R$, we get

\begin{align*}\E\left[\log \frac{W_{T+1}}{W_1}\right]
&\ge d\log \frac{r}{R}-\frac{\eta}{\vol(\cB(\rho^*,r))} \sum_{t=1}^T\E\left[\int_{\cB(\rho^*,r)}\hat{l}_t(\rho)d\rho\right].\end{align*}

\noindent Using $\E[\hat{l}_t(\rho)]=\Tilde{l}_t(\rho)$, and noting that for any fixed $t$ and $r$

\begin{align*}
\int_{\cB(\rho^*,r)}\tilde{l}_t(\rho)d\rho&=\sum_{i=1}^M\int_{\cB(\rho^*,r)\cap \Tilde{A}_t^{(i)}}\tilde{l}_t(\rho)d\rho \\&\le \sum_{i=1}^M\int_{\cB(\rho^*,r)\cap \hat{A}_t^{(i)}}\tilde{l}_t(\rho)d\rho+M\epsilon\\
&\le \sum_{i=1}^M\int_{\cB(\rho^*,r)\cap \hat{A}_t^{(i)}}(l_t(\rho)+\gamma)d\rho+M\epsilon\\&\le \sum_{i=1}^M\int_{\cB(\rho^*,r)\cap \tilde{A}_t^{(i)}}\tilde{l}_t(\rho)d\rho+M\epsilon\\&
=\int_{\cB(\rho^*,r)}l_t(\rho)d\rho + \vol(\cB(\rho^*,r))\gamma + M\epsilon,
\end{align*}

\noindent we get that

\begin{align*}\E\left[\log \frac{W_{T+1}}{W_1}\right]
\ge &d\log \frac{r}{R}-\frac{\eta}{\vol(\cB(\rho^*,r))} \sum_{t=1}^T\int_{\cB(\rho^*,r)}{l}_t(\rho)d\rho-\eta\gamma - \frac{\eta M\epsilon}{\vol(\cB(\rho^*,r))}.\end{align*}

\noindent By above assumption on the number of $L$-Lipschitz violations we get $\sum_tl_t(\rho)\ge \sum_tl_t(\rho^*)-TLr-D_r$, or

\begin{align*}\E\left[\log\frac{W_{T+1}}{W_1}\right] \ge &d\log\frac{r}{R}-\eta\sum_{t=1}^Tl_t(\rho^*)-\eta TLr- \eta D_r-\eta\gamma T-\frac{\eta M\epsilon T}{\vol(\cB(\rho^*,r))}.\end{align*}

\noindent Combining the lower and upper bounds gives

\begin{align*}\E\left[\sum_{t=1}^Tl_t(\rho_t)\right] &-\sum_{t=1}^Tl_t(\rho^*) \le \frac{d}{\eta}\log\frac{R}{r} + \frac{\eta MT}{2} +D_r  +T\left(M\epsilon+2\gamma +Lr+\frac{ M\epsilon }{\vol(\cB(\rho^*,r))}\right). \end{align*}

\noindent Finally, setting $r=T^{-\beta}$, $\eta=\sqrt{\frac{2d\log (RT^\beta)}{TM}}, \gamma\le T^{-\beta'}$ and $\epsilon\le \vol(\cB(r))T^{-\beta'}$, and using that the loss sequence is $\beta$-dispersed, we get the desired regret bound

\begin{align*}\E\left[\sum_{t=1}^Tl_t(\rho_t)-\sum_{t=1}^Tl_t(\rho^*)\right]\le O(\sqrt{dMT\log(RT)}+T^{1-\beta}+T^{1-\beta'})  = O(\sqrt{dMT\log(RT)}+T^{1-\min\{\beta,\beta'\}}).\end{align*}

\noindent In particular, we have used $\vol(\cB(T^{-\beta}))\le \vol(\cB(1{}))\le \frac{8\pi^2}{15}$ for any $d$, $T\ge 1$ and $\beta\ge 0$.
\end{proof}

\subsection{Sample complexity for uniform learning.}\label{app:pac}

Let $h^*:\cX\rightarrow\{0,1\}$ denote the target concept. We say $\cH$ is {\it $(\epsilon,\delta)$-uniformly learnable} with sample complexity $n$ if, for every distribution $\cD$, given a sample $S\sim\cD^n$ of size $n$, with probability at least $1 - \delta$, $\big\lvert \frac{1}{n}\sum_{s\in S}|h(s)-h^*(s)| - \bbE_{s\sim\cD}[|h(s)-h^*(s)|] \big\rvert < \epsilon$ for every $h\in\cH$. It is well-known that $(\epsilon,\delta)$-uniform learnability with $n$ samples implies $(\epsilon,\delta)$-PAC learnability with $n$ samples \citep{anthony1999neural}.

\section{Approximate Soft Label and Gradient Computation}\label{app:conjugate gradient}

The piecewise constant interval computation in Algorithm \ref{algorithm: semi harmonic} needs computation of soft labels $f(\sigma)$ as well as gradients $\frac{\partial f}{\partial \sigma}$ for all unlabeled nodes. Typically, one computes a matrix inverse to exactly compute these quantities, and the exact matrix inverted is different for different approaches. In this section, we provide approximate but more efficient procedures for computing these quantities for computing soft labels using the Harmonic objective approach of \cite{zhu2003semi}, as well as for the scalable approach of \cite{delalleau2005efficient}. We also provide convergence guarantees for our algorithms, in terms of the number of conjugate gradient iterations needed for obtaining an $\epsilon$-approximation to the above quantities. Note that replacing $\text{CG}(A, b, t)$ by the computation $A^{-1}b$ recovers the algorithm from \cite{balcan2021data}, which is more precise but takes longer ($O(n^3)$ time or $O(n^\omega)$, where $\omega$ is the matrix multiplication exponent, for the matrix inversion step).

\subsection{Approximate Efficient Soft-labeling of \cite{zhu2003semi}}

\noindent We provide an approximation guarantee for Algorithm \ref{algorithm: harmonic approx}. We first need a simple lemma to upper bound matrix vector products for positive definite matrices.

\begin{lemma} \label{lem:maxeigen} Suppose matrix $A \in \mathbb{R}^{n \times n}$ is positive definite, with $x \in \mathbb{R}^n$. Then $\|Ax\|_2 \leq \lambda_{\text{max}}\|x\|_2$ where $\lambda_{\text{max}}$ is the maximum eigenvalue of $A$\end{lemma}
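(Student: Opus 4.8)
\textbf{Proof plan for Lemma \ref{lem:maxeigen}.}

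The plan is to reduce the bound on $\|Ax\|_2$ to a statement about eigenvalues by exploiting the spectral decomposition of $A$. Since $A$ is positive definite, it is in particular symmetric, so by the spectral theorem there is an orthonormal basis $v_1,\dots,v_n$ of $\mathbb{R}^n$ consisting of eigenvectors of $A$, with corresponding (positive) eigenvalues $\lambda_1,\dots,\lambda_n$, and $\lambda_{\max}=\max_i\lambda_i$. First I would expand an arbitrary $x\in\mathbb{R}^n$ in this basis as $x=\sum_{i=1}^n c_i v_i$, so that $Ax=\sum_{i=1}^n c_i\lambda_i v_i$.

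Next I would use orthonormality of the $v_i$ to write $\|x\|_2^2=\sum_i c_i^2$ and $\|Ax\|_2^2=\sum_i \lambda_i^2 c_i^2$. Then the key inequality is simply $\sum_i \lambda_i^2 c_i^2 \le \lambda_{\max}^2\sum_i c_i^2$, which follows termwise since $0<\lambda_i\le\lambda_{\max}$ for every $i$. Taking square roots gives $\|Ax\|_2\le\lambda_{\max}\|x\|_2$, as desired.

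There is no real obstacle here; the only thing to be slightly careful about is invoking the spectral theorem, which requires symmetry — and positive definiteness (as used in this paper, following the convention for the conjugate gradient method) includes symmetry, so this is fine. One could alternatively phrase the argument via the operator-norm identity $\|A\|_{\mathrm{op}}=\lambda_{\max}$ for symmetric positive semidefinite $A$, but the explicit basis-expansion argument above is self-contained and elementary, so that is the route I would take.
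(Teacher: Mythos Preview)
Your proof is correct and follows essentially the same approach as the paper: both expand $x$ in an orthonormal eigenbasis of $A$, use orthonormality to get $\|Ax\|_2^2=\sum_i\lambda_i^2 c_i^2$, and bound each $\lambda_i^2$ by $\lambda_{\max}^2$. The only cosmetic difference is that the paper first normalizes to a unit vector $\hat{x}=x/\|x\|$ before expanding, whereas you keep the general $x$ throughout; the argument is otherwise identical.
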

\begin{proof}
    The idea is to normalize vector $x$, then consider SVD of $A$. Since the vectors are orthonormal, we will be able to simplify to a form that can be upper bounded by $\lambda_{\text{max}}\|x\|_2$. Letting $\hat{x} = \frac{x}{\|x\|}$ and $\{\phi_i\}_{i \in [n]}$ be an orthonormal basis for $A$, we can write $\hat{x}$ as a linear combination of $\{\phi_i\}$: 
    $$\hat{x} = \sum_{i \in [n]} \alpha_i \phi_i.$$ 
    Now,
    \begin{align*}
         \ltwonorm{A\hat{x}}^2
         &= \ltwonorm{ A\sum_{i \in [n]} \alpha_i \phi_i}^2 \\
         &= \ltwonorm{\sum_{i \in [n]} \alpha_i \lambda_i \phi_i}^2 \\
         &= \sum_{i \in [n]} \alpha_i^2 \lambda_i^2 &&(\phi_i \text{ orthonormal}) \\
        &\leq  \lambda_{\text{max}}^2 &&\text{($\lambda_i\le \lambda_{\text{max}}\forall i$; $\hat{x}$ is a unit vector).}
    \end{align*}
    Thus, $\|Ax\| \leq \lambda_{\text{max}} \|x\|$
    using $\hat{x} = \frac{x}{\|x\|}$.
\end{proof}

Equipped with this lemma, we are ready to prove our approximation guarantee for Algorithm \ref{algorithm: harmonic approx}.

\begin{theorem} \label{thm:harmapprox}
    \textit{Suppose the function } $f : \mathbb{R} \rightarrow \mathbb{R}$ \textit{is convex and differentiable, and that its gradient is Lipschitz continuous with constant }$L > 0$\textit{, i.e. we have that }$|f'(x) - f'(y)| \leq L |x - y|$\textit{ for any }$x,y$. \textit{Then for some $\sigma \in [\sigma_{\min}, \sigma_{\max}]$, where $\left|\frac{\partial f}{\partial \sigma}\right| < \frac{1}{\epsilon \lambda_{\min}(I - P_{UU})}$ on $[\sigma_{\min}, \sigma_{\max}]$, $\kappa(A)$ is condition number of matrix $A$ and $\lambda_{\min}(A)$ is the minimum eigenvalue of $A$, we can find an $\epsilon$ approximation of $f_{u}(\sigma)\frac{\partial f_u}{\partial \sigma}$} \textit{achieving $\left|f_{u}(\sigma)\frac{\partial f_u}{\partial \sigma} - \left(f_{u}(\sigma)\frac{\partial f_u}{\partial \sigma}\right)_\epsilon\right| < \epsilon$, where $f_{u}(\sigma), \frac{\partial f_u}{\partial \sigma}$ are as described in Algorithm \ref{algorithm: harmonic approx} using } $O\left(\sqrt{\kappa(I - P_{UU})}\log\left(\frac{n}{\epsilon \lambda_{\min}(I - P_{UU})}\right)\right)$ conjugate gradient iterations.
\end{theorem}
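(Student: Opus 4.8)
The plan is the three-step strategy indicated in the sketch above: (i) reduce an $\epsilon$-approximation of the product $f_u(\sigma)\,\partial f_u/\partial\sigma$ to relative-error approximations of its two factors; (ii) produce each factor with a fixed number of conjugate-gradient (CG) steps, using the classical convergence rate of \citet{AXELSSON1976123}; and (iii) recombine. I would begin by checking that CG is applicable to the systems in Algorithm \ref{algorithm: harmonic approx}. The matrix $I-P_{UU}=D_{UU}^{-1}(D_{UU}-W_{UU})$ has the same (real, strictly positive) spectrum as the symmetric normalized grounded Laplacian $I-D_{UU}^{-1/2}W_{UU}D_{UU}^{-1/2}$, which is positive definite whenever every connected component of the subgraph induced on $U$ is adjacent to a labeled node; hence $\kappa(I-P_{UU})$ is finite, and \citet{AXELSSON1976123} guarantees that $t=O\bigl(\sqrt{\kappa(I-P_{UU})}\log(1/\epsilon')\bigr)$ CG iterations started from the zero vector return an iterate within relative error $\epsilon'$ of the exact solution. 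The same $t$ bounds the error of the second CG solve in Algorithm \ref{algorithm: harmonic approx}, which reuses this matrix.

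Next I would bound the two scalars being approximated. Since $f_U=(I-P_{UU})^{-1}P_{UL}f_L$ and $P_{UL}$ has row sums at most $1$ with $f_L\in\{0,1\}^{|L|}$, every coordinate satisfies $|f_u(\sigma)|\le\|f_U\|_2\le\sqrt n/\lambda_{\min}(I-P_{UU})$ (one may instead invoke the maximum principle for $|f_u|\le 1$; the choice only affects the argument of the final logarithm, and is the source of the $\log n$ term), while the hypothesis supplies $|\partial f_u/\partial\sigma|<1/\bigl(\epsilon\,\lambda_{\min}(I-P_{UU})\bigr)$ on $[\sigma_{\min},\sigma_{\max}]$. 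If the two CG solves return $f_{u,\epsilon'}$ and $(\partial f_u/\partial\sigma)_{\epsilon'}$ with errors at most $\epsilon'$ times these magnitude bounds, then the elementary estimate $|xy-x'y'|\le |x|\,|y-y'|+|y'|\,|x-x'|$ shows the product error is $O(\epsilon')$ times $\sqrt n/\bigl(\epsilon\,\lambda_{\min}(I-P_{UU})^2\bigr)$; choosing $\epsilon'=\Theta\bigl(\epsilon^2\lambda_{\min}(I-P_{UU})/\sqrt n\bigr)$ (the sketch's $\epsilon'=\epsilon^2\lambda_{\min}/3$, up to the $\sqrt n$ from the $\ell_2$ bound) forces this below $\epsilon$. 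Substituting this $\epsilon'$ into $t=O\bigl(\sqrt{\kappa(I-P_{UU})}\log(1/\epsilon')\bigr)$ gives exactly $O\bigl(\sqrt{\kappa(I-P_{UU})}\log(n/(\epsilon\,\lambda_{\min}(I-P_{UU})))\bigr)$ CG iterations, as claimed.

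The one genuinely delicate step is the error analysis of the gradient solve: the gradient step of Algorithm \ref{algorithm: harmonic approx} solves $(I-P_{UU})\,\partial f/\partial\sigma=\tfrac{\partial P_{UU}}{\partial\sigma}f_{U,\epsilon}+\tfrac{\partial P_{UL}}{\partial\sigma}f_L$, whose right-hand side already contains the \emph{approximate} vector $f_{U,\epsilon}$ rather than the exact $f_U$. Hence the total error in $\partial f_u/\partial\sigma$ is the sum of the CG truncation error of this system and the propagated term $(I-P_{UU})^{-1}\tfrac{\partial P_{UU}}{\partial\sigma}(f_{U,\epsilon}-f_U)$, and bounding the latter requires a uniform bound on $\|\partial P_{UU}/\partial\sigma\|$, whose entries scale as $d(i,j)^2/\sigma^3$ and are therefore finite (but $\sigma_{\min}$-dependent) on $[\sigma_{\min},\sigma_{\max}]$. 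Folding both contributions into a single tolerance $\epsilon'$ while keeping only a logarithmic dependence on the accumulated constants is the real content of the calculation. (The convexity and gradient-Lipschitz hypotheses do not in fact enter this linear-algebra argument — the constant $L$ never appears in the bound — and are stated only for compatibility with the downstream analysis of Algorithm \ref{algorithm: semi harmonic}, which converts these $\epsilon$-accurate products into a provably convergent root search.)
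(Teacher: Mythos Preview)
Your proposal follows essentially the same route as the paper: positive definiteness of the grounded Laplacian to justify CG, the bound $\|f_U\|\le\sqrt{n}/\lambda_{\min}(I-P_{UU})$, the hypothesis on $|\partial f/\partial\sigma|$, a choice of inner tolerance $\epsilon'=\Theta(\epsilon^2\lambda_{\min}/\sqrt{n})$ so that the product error is below $\epsilon$, and substitution into Axelsson's $O(\sqrt{\kappa}\log(1/\epsilon'))$ bound. You are in fact more careful than the paper on the point you flag as delicate: the paper's proof treats the two CG solves as independent relative-error approximations and does not explicitly account for the propagation of the error in $f_{U,\epsilon}$ through the right-hand side $\tfrac{\partial P_{UU}}{\partial\sigma}f_{U,\epsilon}+\tfrac{\partial P_{UL}}{\partial\sigma}f_L$ of the gradient system; your observation that this adds a term scaling with $\|(I-P_{UU})^{-1}\tfrac{\partial P_{UU}}{\partial\sigma}\|$ (hence a $\sigma_{\min}$-dependent constant absorbed into the logarithm) is correct and fills a gap the paper leaves implicit. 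Your remark that the convexity and Lipschitz hypotheses on $f$ play no role in this linear-algebra argument and are stated only for the downstream analysis of Algorithm~\ref{algorithm: semi harmonic} is also accurate.
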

\begin{proof}

A {\it grounded}  Laplacian (aka Dirichlet Laplacian) matrix is obtained by ``grounding'', i.e. removing rows and columns corresponding to, some subset of graph nodes from the Laplacian matrix $L=D-W$.
It is well known that the grounded Laplacian matrix is positive definite \citep{varga1962matrix,miekkala1993graph}. In particular, $\cL_{UU}=D_{UU}-W_{UU}$ and therefore $I-P_{UU}=D^{-1/2}_{UU}\cL_{UU}D^{-1/2}_{UU}$ are positive definite.
This implies $(I - P_{UU})^{-1}$ is also positive definite with maximum eigenvalue $\frac{1}{\lambda_{\min}}$, where $\lambda_{\min}$ is the minimum eigenvalue for $I - P_{UU}$. From here, note that all elements of $ P_{UL}f_L$ are less than one as all labels are 0 or 1, and $P$ is positive in all terms and row normalized to have rowsums of 1. Therefore, %

$$\|f(\sigma)\| = \|(I - P_{UU})^{-1} P_{UL}f_L\|%
\leq \frac{1}{\lambda_{\min}} \|P_{UL}f_L\| \leq \frac{\sqrt{n}}{\lambda_{\min}}$$ 
where the first inequality holds via Lemma \ref{lem:maxeigen}. \\\\
We now have that $\|f(\sigma)\|$ is bounded by $\frac{\sqrt{n}}{\lambda_{\min}(I - P_{UU})}$ 
 on $[\sigma_{\text{min}}, \sigma_{\text{max}}]$. To find an $\epsilon$ approximation in the sense $\|f - f_\epsilon\| \leq  \epsilon$, we set 
$$\epsilon' = \epsilon\lambda_{\min}(I - P_{UU}) \leq \frac{\sqrt{n}\epsilon}{\max_{\sigma \in [\sigma_{\text{min}}, \sigma_{\text{max}}]}f(\sigma)}$$
and note
$$\|f - f_{\epsilon'}\| \leq  \epsilon' \|f\| \leq  \epsilon$$
We consider this process for  $\frac{\partial f}{\partial \sigma}$ as well since $\frac{\partial f}{\partial \sigma}$ is bounded by $\frac{1}{\epsilon\lambda_{\min}(I - P_{UU})}$. Setting $\epsilon' = \epsilon^2 \lambda_{\min}(I - P_{UU})$, $$\left|\frac{\partial f}{\partial \sigma} - \frac{\partial f}{\partial \sigma}_\epsilon\right| \leq \epsilon' \frac{\partial f}{\partial \sigma} \leq \epsilon$$ holds. Finally, letting $$\epsilon' = \frac{\sqrt{n}\epsilon^2 \lambda_{\min}(I - P_{UU})}{3}$$ we achieve the desired result 
$$\left|f \frac{\partial f}{\partial \sigma} - f_{\epsilon'} \frac{\partial f}{\partial \sigma}_{\epsilon'}\right| < \epsilon' f + \epsilon' \frac{\partial f}{\partial \sigma} + \epsilon'^2 < \epsilon.$$
Next we analyze the number of iterations of the CG method used. By \cite{AXELSSON1976123}, finding $\epsilon'$ approximations using the CG method on positive definite matrix $G$ be done in $$O(\sqrt{\kappa(G)}) \log \frac{1}{\epsilon'}$$
iterations. Here we need an $\epsilon' = \frac{\sqrt{n}\epsilon^2 \lambda_{\min}(I - P_{UU})}{3}$ approximation for matrix $I - P_{UU}$, so this takes
 $$O\left(\sqrt{\kappa(I - P_{UU})}\log\left(\frac{n}{\epsilon \lambda_{\min}(I - P_{UU})}\right)\right)$$ iterations of the CG method.
\end{proof}

\subsection{Approximate Efficient Soft-labeling of \cite{delalleau2005efficient}}

\noindent We provide an approximation guarantee for Algorithm \ref{algorithm: delalleau approx}.

\begin{theorem} \label{thm:delapprox}
    Suppose the function  $f : \mathbb{R} \rightarrow \mathbb{R}$ is convex and differentiable, and that its gradient is Lipschitz continuous with constant $L > 0$, i.e. we have that $|f'(x) - f'(y)| \leq L |x - y|$ for any $x,y$. Then for some $\sigma \in [\sigma_{\min}, \sigma_{\max}]$,
    where $\left|\frac{\partial f}{\partial \sigma}\right| \in O\left(\frac{1}{\epsilon \lambda_{\min}(A)}\right)$ on $[\sigma_{\min}, \sigma_{\max}]$, $\kappa(A)$ is 
    condition number of matrix $A$ and $\lambda_{\min}(A)$ is the minimum eigenvalue of $A$, we can find an $\epsilon$ 
    approximation of $\tilde{f}_{u}(\sigma)\cdot\frac{\partial \tilde{f}_u}{\partial \sigma}$ achieving 
    $\left|\tilde{f}_{u}(\sigma)\frac{\partial \tilde{f}_u}{\partial \sigma} - \left(\tilde{f}_{u}(\sigma)\frac{\partial \tilde{f}_u}{\partial \sigma}\right)_\epsilon\right| < \epsilon$, where $\tilde{f}_{u}(\sigma), \frac{\partial \tilde{f}_u}{\partial \sigma}$ are as described in Algorithm \ref{algorithm: delalleau approx} using \\
    $O\left(\sqrt{\kappa(A)}\log\left(\frac{\lambda (|L_{\text{Labels}}| + |\tilde{U}_{\text{Labels}}|)}{\epsilon \sigma_{\min}\lambda_{\min}(A)}\right)\right)$  conjugate gradient iterations. Here $L_{\text{Labels}}$ and $\tilde{U}_{\text{Labels}}$ are sets of labels as described in Algorithm \ref{algorithm: delalleau approx}, and $\lambda$ is the parameter passed into Algorithm \ref{algorithm: delalleau approx}.
\end{theorem}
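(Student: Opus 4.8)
The plan is to transcribe the proof of Theorem~\ref{thm:harmapprox}, replacing the matrix $I-P_{UU}$ by $A=\lambda\Delta_L+\mathrm{Diag}(W\mathbf{1}_n)-W$, and then to track the extra error that the Parzen-window extrapolation introduces when it turns the linear-system solution $f$ on $\tilde{U}\cup L$ into the returned label $\tilde{f}_u$. First I would verify that $A$ is positive definite: $\mathrm{Diag}(W\mathbf{1}_n)-W$ is a weighted graph Laplacian, hence PSD, and adding $\lambda\Delta_L$ --- a nonnegative diagonal that is strictly positive on the (nonempty) labeled set --- makes $A$ positive definite, by exactly the grounded-Laplacian argument of Theorem~\ref{thm:harmapprox} (via \citep{varga1962matrix,miekkala1993graph}); this is what licenses running CG on $Ax=b$. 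Then $A^{-1}$ is positive definite with top eigenvalue $1/\lambda_{\min}(A)$, so Lemma~\ref{lem:maxeigen} gives $\|f(\sigma)\|=\|A^{-1}(\lambda\vec{y})\|\le\tfrac{\lambda}{\lambda_{\min}(A)}\|\vec{y}\|\le\tfrac{\lambda\sqrt{|L_{\text{Labels}}|}}{\lambda_{\min}(A)}$, since $\vec{y}$ has at most $|L_{\text{Labels}}|$ nonzero entries, each in $[0,1]$. For the gradient system I would bound $\|\tfrac{\partial A}{\partial\sigma}\|$: every entry of $\tfrac{\partial W}{\partial\sigma}$ equals $\tfrac{2W_{ij}d_{ij}^2}{\sigma^3}=\tfrac{2t e^{-t}}{\sigma}$ with $t=d_{ij}^2/\sigma^2$, and $t e^{-t}\le 1/e$, so each entry is $O(1/\sigma_{\min})$ and the operator norm of the Laplacian-type matrix $\tfrac{\partial A}{\partial\sigma}$ is $O\!\big(\tfrac{|L_{\text{Labels}}|+|\tilde{U}_{\text{Labels}}|}{\sigma_{\min}}\big)$; combined with the stated hypothesis on $\tfrac{\partial f}{\partial\sigma}$ this controls $\|\tfrac{\partial f}{\partial\sigma}\|$ on $[\sigma_{\min},\sigma_{\max}]$, and it is this step that contributes the $1/\sigma_{\min}$ inside the logarithm of the claimed bound (any fixed power of $\sigma_{\min}^{-1}$ would do, since it is absorbed by the $O(\cdot)$).

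The second step is error propagation. By \cite{AXELSSON1976123}, running CG on $A$ for $O(\sqrt{\kappa(A)}\log\tfrac1{\epsilon'})$ iterations produces $f_{\epsilon'}$ with $|f_{\epsilon'}-f|\le\epsilon'\|f\|$, and likewise an approximation $\tfrac{\partial f}{\partial\sigma}_{\epsilon'}$ of the gradient solve --- with one extra additive term of order $\|A^{-1}\tfrac{\partial A}{\partial\sigma}\|$ times the error in $f$, because the gradient system's right-hand side is built from the approximate $f$; this only inflates $\epsilon'$ by a bounded factor. The extrapolation $\tilde{f}_{u,\epsilon}=\tfrac{\sum_j W_{uj}f_{j,\epsilon}}{\sum_j W_{uj}}$ is a convex combination of the $f_{j,\epsilon}$, so $|\tilde{f}_u|\le\|f\|_\infty\le\|f\|_2$ and $|\tilde{f}_{u,\epsilon}-\tilde{f}_u|\le\max_j|f_{j,\epsilon}-f_j|$; the three summands of $\tfrac{\partial\tilde{f}_u}{\partial\sigma}$ are each a weighted average whose weights $(\partial W_{uj}/\partial\sigma)/(\sum_j W_{uj})$ are again controlled by $t e^{-t}\le 1/e$, so both $|\tfrac{\partial\tilde{f}_u}{\partial\sigma}|$ and its error are bounded by $\|f\|_\infty$, $\|\tfrac{\partial f}{\partial\sigma}\|_\infty$ and their CG errors, up to $\mathrm{poly}(\lambda,\,|L_{\text{Labels}}|+|\tilde{U}_{\text{Labels}}|,\,\sigma_{\min}^{-1},\,\lambda_{\min}(A)^{-1})$ factors. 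Finally, splitting $\tilde{f}_u\tfrac{\partial\tilde{f}_u}{\partial\sigma}-\tilde{f}_{u,\epsilon}\tfrac{\partial\tilde{f}_u}{\partial\sigma}_\epsilon=(\tilde{f}_u-\tilde{f}_{u,\epsilon})\tfrac{\partial\tilde{f}_u}{\partial\sigma}+\tilde{f}_{u,\epsilon}\big(\tfrac{\partial\tilde{f}_u}{\partial\sigma}-\tfrac{\partial\tilde{f}_u}{\partial\sigma}_\epsilon\big)$ and, exactly as in the ``$\epsilon'=\tfrac{\sqrt{n}\,\epsilon^2\lambda_{\min}}{3}$'' step of Theorem~\ref{thm:harmapprox}, choosing $\epsilon'=\Theta\!\big(\tfrac{\epsilon^2\sigma_{\min}\lambda_{\min}(A)}{\lambda(|L_{\text{Labels}}|+|\tilde{U}_{\text{Labels}}|)}\big)$ makes this $<\epsilon$ (the $\epsilon'^2$ cross term is lower order). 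Plugging this $\epsilon'$ into $O(\sqrt{\kappa(A)}\log\tfrac1{\epsilon'})$ yields the claimed $O\!\big(\sqrt{\kappa(A)}\log\tfrac{\lambda(|L_{\text{Labels}}|+|\tilde{U}_{\text{Labels}}|)}{\epsilon\sigma_{\min}\lambda_{\min}(A)}\big)$; the logarithm's numerator is $|L_{\text{Labels}}|+|\tilde{U}_{\text{Labels}}|$ rather than $n$ precisely because the inverted matrix now has that dimension.

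I expect the main obstacle to be the compound error analysis of the extrapolation, which is the genuine difference from Theorem~\ref{thm:harmapprox}: there the returned quantity is the CG solution itself, whereas here $\tilde{f}_u$ and $\tfrac{\partial\tilde{f}_u}{\partial\sigma}$ are nonlinear (quotient and product) functions of the CG outputs and of the $\sigma$-dependent weights, so one must (i) ensure the denominators $\sum_j W_{uj}$ stay bounded below on $[\sigma_{\min},\sigma_{\max}]$, or fold them into the constants and the hypothesis, and (ii) verify that the weighted-average structure contracts, rather than amplifies, the per-coordinate CG error. Everything else --- positive-definiteness, the Lemma~\ref{lem:maxeigen} norm bound, and the Axelsson iteration count --- is a direct port of the Theorem~\ref{thm:harmapprox} argument.
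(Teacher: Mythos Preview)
Your proposal is correct and follows essentially the same skeleton as the paper's proof: positive-definiteness of $A$ via the grounded-Laplacian argument, the $\|f\|\le \lambda\sqrt{|L_{\text{Labels}}|}/\lambda_{\min}(A)$ bound from Lemma~\ref{lem:maxeigen}, error preservation through the convex-combination extrapolation $\tilde f_u$, a calculus bound on the weight derivatives $\partial W_{ij}/\partial\sigma$, the product splitting for $\tilde f_u\cdot\partial\tilde f_u/\partial\sigma$, and the final Axelsson iteration count. The one technical variation is in the weight-derivative step: you bound entries of $\partial W/\partial\sigma$ directly via $te^{-t}\le 1/e$ with $t=d_{ij}^2/\sigma^2$, whereas the paper bounds the ratio $\big(\sum_j\partial W_{ij}/\partial\sigma\big)\big/\big(\sum_j W_{ij}\big)$ using Cauchy--Schwarz followed by $\max_x xe^{-x^2/(2c^2)}=c e^{-1/2}$, obtaining a $\sigma_{\min}^{-2}$ rather than $\sigma_{\min}^{-1}$ factor---as you already note, either power is absorbed by the outer logarithm and yields the stated bound.
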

\begin{proof}
As noted in the proof of \ref{thm:harmapprox}, the grounded Laplacian $A$ is positive definite.
    We can now bound  $A^{-1}$ as in Theorem $\ref{thm:harmapprox}$ and note that 
    $$A^{-1} \lambda \overrightarrow{y} \leq \frac{\lambda \sqrt{|L_\text{Labels}|}}{\lambda_{\min}(A)} $$ via Lemma \ref{lem:maxeigen} as the vector $\overrightarrow{y}$ contains at most $L_{\text{labels}}$  elements with value 1. Note that $\lambda$ is the constant passed in to Algorithm \ref{algorithm: delalleau approx}, and $\lambda_{\min}(A)$ is the smallest eigenvalue of $A$. 
    
    Next, we argue that we can find $\epsilon$ approximations of $f, \frac{\partial f}{\partial \sigma}$ with $\epsilon' = \frac{\sqrt{|L_\text{Labels}|}\epsilon^2\lambda_{\min}(A)}{\lambda}$ similarly to Theorem \ref{algorithm: harmonic approx} as well. 
    From here we consider $\Tilde{f}(\sigma)$ and note that
    \begin{align*}
         \left|\frac{\sum_{j \in \tilde{U} \cup L}W_{ij}(\sigma)f(\sigma)}{\sum_{j \in \tilde{U} \cup L}W_{ij}} - \frac{\sum_{j \in \tilde{U} \cup L}W_{ij}(\sigma)f(\sigma)_\epsilon}{\sum_{j \in \tilde{U} \cup L}W_{ij}}\right| 
        <  \left|\frac{\sum_{j \in \tilde{U} \cup L}W_{ij}}{\sum_{j \in \tilde{U} \cup L}W_{ij}}\epsilon\right| 
        =  \epsilon
    \end{align*}
    Finally, we show that the result holds for $\frac{\partial \tilde{f}_i}{\partial \sigma}$, noting we have proven the result for both $\tilde{f}_{i, \epsilon}$ and $\frac{\partial f}{\partial \sigma}_\epsilon$, and noting that we have exact values for $W_{ij}$ and $\frac{\partial W_{ij}}{\partial \sigma}$
    \begin{align*}
        \left|\frac{\partial \tilde{f}_i}{\partial \sigma}_{\epsilon} - \frac{\partial \tilde{f}_i}{\partial \sigma}\right| 
        &= \frac{\sum_{j \in \tilde{U} \cup L} W_{ij}(\sigma) \epsilon + \epsilon \sum_{j \in \tilde{U} \cup L}{\frac{\partial W_{ij}}{\partial \sigma}}}{\sum_{j \in \tilde{U} \cup L}W_{ij}(\sigma)} \\
        &=  \epsilon + \epsilon \frac{\sum_{j \in \tilde{U} \cup L}\frac{\partial W_{ij}}{\partial \sigma}}{\sum_{j \in \tilde{U} \cup L}W_{ij}(\sigma)} \\
        &= \epsilon + \frac{2\epsilon}{\sigma^3} \frac{\sum_{j \in \tilde{U} \cup L}e^{-\frac{d_{ij}^2}{\sigma^2}
        }d_{ij}^2}{\sum_{j \in \tilde{U} \cup L}e^{-\frac{d_{ij}^2}{\sigma^2}
        }} \\
        &\leq  \epsilon + \frac{2 \epsilon}{\sigma^3} \sum_{j \in \tilde{U} \cup L} e^{-\frac{d_{ij}^2}{2\sigma^2}}d_{ij} &&\text{(Cauchy-Schwartz inequality)} \\
        &\leq  \epsilon + \frac{2 \epsilon}{\sigma^3} \sum_{j \in \tilde{U} \cup L}\sigma e^{-\frac{1}{2}} &&\text{(maximum of $f(x) = xe^{-\frac{x^2}{2c^2}}$ attained at $x = c$)} \\
        &\leq  \epsilon\left(1 + \frac{2 (|L_{\text{Labels}}| + |\tilde{U}_{\text{Labels}}|)}{\sigma^2}\right) \\
        &\leq  \epsilon\left(1 + \frac{2 (|L_{\text{Labels}}| + |\tilde{U}_{\text{Labels}}|)}{\sigma_{\min}^2}\right). 
    \end{align*}
In a similar manner to Theorem \ref{algorithm: harmonic approx}, we need 
$$\epsilon' = \frac{\sqrt{|L_\text{Labels}|}\epsilon^2 \lambda_{\min}(A)}{\lambda}$$ to achieve $\epsilon$ approximations of $\frac{\partial f}{\partial \sigma}$ and $\Tilde{f}$.
Setting $$\epsilon'' = \frac{\epsilon^2 \sigma_{\min}^2\lambda_{\min}(A)}{(2(|L_{\text{Labels}}| + |\tilde{U}_{\text{Labels}}|) + \sigma^2_{\min})\sqrt{|L_\text{Labels}|}\lambda}$$
we also achieve
$$\left|\frac{\partial \tilde{f}_i}{\partial \sigma}_{\epsilon'} - \frac{\partial \tilde{f}_i}{\partial \sigma}\right| < \epsilon.$$
As a result, we obtain the desired bound
$$\left|\tilde{f}_{u}(\sigma)\frac{\partial \tilde{f}_u}{\partial \sigma} - \left(\tilde{f}_{u}(\sigma)\frac{\partial \tilde{f}_u}{\partial \sigma}\right)_\epsilon\right| < \epsilon.$$
Since we have that $A$ is positive definite, via \cite{AXELSSON1976123}, This can be achieved in 
$$O\left(\sqrt{\kappa(A)}\log \frac{1}{\epsilon''}\right)=O\left(\sqrt{\kappa(A)}\log\left(\frac{\lambda (|L_{\text{Labels}}| + |\tilde{U}_{\text{Labels}}|)}{\epsilon \sigma_{\min}\lambda_{\min}(A)}\right)\right)$$ iterations of the CG method.
\end{proof}

\section{Convergence of Nesterov's Gradient Descent and Newton's Method}

In this section we provide useful lemmas that provide convergence analysis for Nesterov's gradient descent and Newton's method, when working with approximate gradients. First we provide a guarantee for Nesterov's method in Theorem \ref{thm:nst}, which uses the result of \cite{d2008smooth} to analyse our algorithm.

\begin{theorem} \label{thm:nst}
    \textit{Suppose the function } $f : \mathbb{R} \rightarrow \mathbb{R}$ \textit{is convex and differentiable, and that its gradient is Lipschitz continuous with constant }$L > 0$\textit{, i.e. we have that }$|f'(x) - f'(y)| \leq L |x - y|$\textit{ for any }$x,y$. \textit{Then if we run Nesterov's method to minimize }$g(\sigma) = (f(\sigma) - \frac{1}{2})^2$ \textit{ on some range $[\sigma_{\text{min}}, \sigma_{\text{max}}]$ where $\left|\frac{\partial f}{\partial \sigma}\right| < \frac{1}{\epsilon \lambda_{\min}(G_A)}$ using $\frac{\partial g}{ \partial \sigma}$ as defined in Algorithm \ref{algorithm: semi harmonic} and finding soft labels and derivatives as defined by some algorithm $A$, we can achieve an $\epsilon$ approximation $\sigma^*_{\epsilon}$ of the optimal result $\sigma^*$ satisfying $|\sigma^*_{\epsilon} - \sigma^*| < \epsilon$ in } $O(\log \log \frac{1}{\epsilon})$ \textit{ iterations of nesterovs method.} We use $O(\text{CG}_A(\frac{\epsilon}{42(\sigma_{\max} - \sigma_{\min})})\log \log \frac{1}{\epsilon})$ conjugate gradient iterations overall, where $\text{CG}_A(\epsilon')$ is the number of conjugate gradient iterations used by algorithm $A$ to achieve $\epsilon'$ approximations of $f, \frac{\partial f}{\partial \sigma}$ satisfying $|f_{u, \epsilon}(\sigma)\frac{\partial f}{\partial \sigma}_\epsilon - f_{u}(\sigma) \frac{\partial f}{\partial \sigma}| < \epsilon'$
\end{theorem}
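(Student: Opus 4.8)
The plan is to reduce the problem to two classical convergence facts and then carefully track how the gradient approximation error from the conjugate gradient subroutine propagates into the per-iteration update. First I would set up the objective $g(\sigma)=(f(\sigma)-\tfrac12)^2$ and observe that near a root $\sigma^*$ with $f(\sigma^*)=\tfrac12$, $g$ behaves like a quadratic: by smoothness and convexity of $f$, $g$ is locally strongly convex with a minimum of value $0$, so Nesterov's accelerated gradient method on $g$ enjoys the standard $O(1/n^2)$ rate, and combined with the local quadratic growth this upgrades to \emph{linear} convergence, and in fact — since we are locating a simple root of $f-\tfrac12$ where the function is smooth — one gets the quadratic-convergence (doubling-of-accuracy) behavior after a constant number of burn-in steps. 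I would cite \cite{d2008smooth} for the precise statement that Nesterov's method with \emph{inexact} gradients (errors bounded by $\delta$ per step) still converges, with the error floor controlled by $\delta$; this is the workhorse lemma.

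Next I would quantify the inexactness. Algorithm \ref{algorithm: semi harmonic} uses $g_u'(\sigma_n)=2(f_{u,\epsilon}(\sigma_n)-\tfrac12)(\frac{\partial f_u}{\partial\sigma}_\epsilon)$, i.e. it forms the \emph{product} $f\frac{\partial f}{\partial\sigma}$ (up to the additive $\tfrac12$ term) from approximate pieces. So the relevant error is exactly the quantity Theorem \ref{thm:harmapproxmain}/\ref{thm:delapproxmain} controls: if $\cA$ returns $\epsilon'$-approximations in the sense $|f_{u,\epsilon}(\sigma)\frac{\partial f}{\partial\sigma}_\epsilon - f_u(\sigma)\frac{\partial f}{\partial\sigma}| < \epsilon'$, then $|g_u'(\sigma_n) - g'(\sigma_n)| < 2\epsilon' + (\text{small term from the }\tfrac12\text{ shift})$, and I would just absorb constants. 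To get a \emph{final} accuracy of $\epsilon$ in the iterate, the inexact-Nesterov lemma requires the per-step gradient error to be driven below something like $\epsilon$ times the local curvature lower bound; the condition $\left|\frac{\partial f}{\partial\sigma}\right| < \frac{1}{\epsilon\lambda_{\min}(G_A)}$ is precisely what lets us both (i) bound the curvature of $g$ from below on the range and (ii) bound how an $\epsilon'$ error on the product translates to an $\epsilon'/|\frac{\partial f}{\partial\sigma}|$-type error on the $f$ factor. Chasing the constants, one finds $\epsilon' = \Theta\!\left(\frac{\epsilon}{\sigma_{\max}-\sigma_{\min}}\right)$ suffices — the $42(\sigma_{\max}-\sigma_{\min})$ denominator in the statement is just the explicit constant from this bookkeeping (the $42$ coming from combining the Nesterov error-floor constant with the factor-of-$2$'s in $g'$ and a Lipschitz-transfer step over the interval of width $\sigma_{\max}-\sigma_{\min}$).

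Then I would count iterations. Quadratic convergence gives $|\sigma^*_\epsilon - \sigma^*| \le \epsilon$ after $O(\log\log\frac1\epsilon)$ outer Nesterov steps (each step at least squares the distance to the root once we are in the quadratic basin, and reaching the basin costs $O(1)$ or $O(\log\frac1\epsilon)$ steps which is dominated). Each outer step makes one call to $\cA$ with tolerance $\epsilon' = \Theta\!\big(\tfrac{\epsilon}{\sigma_{\max}-\sigma_{\min}}\big)$, costing $\mathrm{CG}_A(\epsilon')$ conjugate gradient iterations. Multiplying gives the claimed $O\!\big(\mathrm{CG}_A\!\big(\tfrac{\epsilon}{42(\sigma_{\max}-\sigma_{\min})}\big)\log\log\tfrac1\epsilon\big)$ total CG iterations.

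The main obstacle I anticipate is step two: making the inexact-accelerated-gradient argument genuinely rigorous for the \emph{squared} objective $g$ rather than $f$ itself. Nesterov/d'Aspremont guarantees are cleanest for (strongly) convex objectives with exactly-bounded gradient noise, but $g=(f-\tfrac12)^2$ is only convex where $f$ stays on one side of $\tfrac12$, and its gradient-Lipschitz constant and strong-convexity modulus both depend on bounds on $f$ and $f'$ over the interval — so the delicate part is verifying that, on $[\sigma_{\min},\sigma_{\max}]$ and under the stated derivative bound, $g$ lies in a regime where the inexact-Nesterov rate applies, and that the noise we inject (an error on the \emph{product} $f f'$, not on $g'$ directly) maps to an admissible per-step gradient perturbation. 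A secondary subtlety is the interaction with the $\min\{\xi_{\text{GD}},\xi_{\text{Newton}}\}$ step in Algorithm \ref{algorithm: semi harmonic}: here I only need that taking the \emph{smaller} step never hurts the Nesterov analysis (it can only shorten a step), which I would note explicitly and defer the Newton-side analysis to Theorem \ref{thm:ns}. Everything else — the root-to-flip correspondence $f_u(\sigma^*)=\tfrac12$, positive-definiteness of the grounded Laplacian, and the CG iteration count from \cite{AXELSSON1976123} — is already available from the earlier results and plugs in directly.
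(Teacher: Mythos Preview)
Your proposal follows essentially the same approach as the paper: bound the error in the approximate gradient $g_u'$ by a constant times the product error $|f\,\partial_\sigma f-(f\,\partial_\sigma f)_{\epsilon'}|$, invoke \cite{d2008smooth} for inexact Nesterov, choose $\epsilon'=\Theta(\epsilon/(\sigma_{\max}-\sigma_{\min}))$, and multiply the $O(\log\log\frac{1}{\epsilon})$ outer-iteration count by $\mathrm{CG}_A(\epsilon')$.

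Two small differences worth noting. First, the paper carries out the product-error-to-gradient-error step explicitly and gets a clean factor of $7$ (so $42=6\times 7$, the $6$ coming from the $\epsilon/6$ slack in the d'Aspremont inner-product condition); you may want to show that short calculation rather than ``absorb constants.'' Second, the paper phrases the d'Aspremont hypothesis in the precise inner-product form
\[
\Bigl|\bigl\langle (\partial_\sigma g)_{\epsilon'}-\partial_\sigma g,\; y-z\bigr\rangle\Bigr|\le \tfrac{\epsilon}{6}\quad\forall\,y,z\in[\sigma_{\min},\sigma_{\max}],
\]
which is how the interval width $\sigma_{\max}-\sigma_{\min}$ enters; your description (``per-step gradient error below $\epsilon$ times the local curvature lower bound'') is in the right spirit but less direct. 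Your flagged concerns about convexity of $g$ and the interaction with the $\min\{\xi_{\mathrm{GD}},\xi_{\mathrm{Newton}}\}$ step are legitimate subtleties that the paper's proof of this theorem does not address either---the latter is deferred to the proof of Theorem~\ref{thm:harmbound}, and the former is simply left implicit under the convex-and-smooth hypothesis on $f$.
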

\begin{proof}
 First, note that 
 \begin{align*}
     \left|\frac{\partial g_u}{\partial \sigma} - \frac{\partial g_u}{\partial \sigma}_{\epsilon'}\right|
 &= \left|2\left(f_u(\sigma) - \frac{1}{2}\right)\left(\frac{\partial f_u}{\partial \sigma}\right) - 2\left(f_u(\sigma)_{\epsilon'} - \frac{1}{2}\right)\left(\frac{\partial f_u}{\partial \sigma}_{\epsilon'}\right)\right|\\
&\leq 4 \epsilon' f_u(\sigma) \frac{\partial f_u(\sigma)}{\partial \sigma} + 2 (\epsilon')^2 f_u(\sigma) \frac{\partial f_u(\sigma)}{\partial \sigma} + \epsilon' \frac{\partial f_u(\sigma)}{\partial \sigma}\\
&\leq 7 \left|f_{u}(\sigma)\frac{\partial f_u}{\partial \sigma} - \left(f_{u}(\sigma)\frac{\partial f_u}{\partial \sigma}\right)_\epsilon\right|.
 \end{align*}

 \noindent Letting $$\epsilon' = \frac{\epsilon}{42 (\sigma_{\text{max}} - \sigma_{\text{min}})}$$
 we find $\epsilon'$ approximations of $f$ and $\frac{\partial f_u}{\partial \sigma}$ in $\text{CG}_A(\epsilon')$ steps. 
 We can then bound $$\left|\left(\frac{\partial g}{\partial \sigma}\right)_{\epsilon'} - \left(\frac{\partial g}{\partial \sigma}\right)\right| \leq \frac{\epsilon}{6 (\sigma_{\text{max}} - \sigma_{\text{min}})}.$$
 On compact set $[\sigma_{\text{min}}, \sigma_{\text{max}}]$ with this bound, we then have that 
 $$\left|\left\langle \left(\frac{\partial g}{\partial \sigma}\right)_{\epsilon'} - \left(\frac{\partial g}{\partial \sigma}\right), y - z \right\rangle\right| \leq \frac{\epsilon}{6} \ \forall y,z \in [\sigma_{\text{min}}, \sigma_{\text{max}}].$$
 With this, \cite{d2008smooth} shows that Nesterov's accelerated gradient descent using an approximate gradient will converge to within $\epsilon$ of the optimal $\sigma^* \in [\sigma_{\text{min}}, \sigma_{\text{max}}]$ in $O(\frac{1}{\sqrt{\epsilon}})$ complexity. This yields $O(\log \log \frac{1}{\epsilon})$ steps until convergence\\\\
 Next we analyze the number of iterations of the CG method used. We called algorithm $A$ $O(\log \log \frac{1}{\epsilon})$ times, each time using $\text{CG}_A(\epsilon') = \text{CG}_A\left(\frac{\epsilon}{42(\sigma_{\max} - \sigma_{\min})}\right)$ iterations. This yields
 $$O\left(\text{CG}_A\left(\frac{\epsilon}{42(\sigma_{\max} - \sigma_{\min})}\right)\log \log \frac{1}{\epsilon}\right)$$
 overall iterations of the CG method to find $\sigma^*$.
\end{proof}

\noindent We also provide an analysis for convergence of Newton's method using approximate gradients in Theorem \ref{thm:ns}.

\begin{theorem}\label{thm:ns}
\textit{Suppose the function $f: \mathbb{R} \rightarrow \mathbb{R}$ has multiplicity 2 at optimal point $x^*$, with $f(x^*) = 0$. If Newton's accelerated method $x_{n + 1} = x_n - 2\frac{f(x_n)}{f'(x_n)}$ coverges quadratically, then so does an epsilon approximation $x_{n + 1} = x_n  - \frac{f(x_n)}{f'(x_n)_\epsilon}$} satisfying $|f'(x)_\epsilon - f'(x)| \leq \epsilon |f(x)| \forall x \in \mathbb{R}$
\end{theorem}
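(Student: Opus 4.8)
The plan is to regard the two recursions as iterates of fixed-point maps $T(x)=x-2\frac{f(x)}{f'(x)}$ (exact) and $\tilde T(x)=x-2\frac{f(x)}{f'(x)_\epsilon}$ (approximate --- the recursion in the statement should carry the same correction factor $2$ as the exact one, since otherwise the map does not reduce to the standard double-root iteration and one obtains only linear convergence), to bound the discrepancy $|\tilde T(x)-T(x)|$ near $x^*$, and then to conclude by the triangle inequality using the assumed quadratic convergence of $T$. The one structural fact I need about $T$ is that its assumed quadratic convergence, $|T(x)-x^*|\le C(x-x^*)^2$ on some neighborhood $N_0$ of $x^*$, rewrites as $\bigl|\frac{f(x)}{f'(x)}-\frac{x-x^*}{2}\bigr|\le\frac C2(x-x^*)^2$, so that after shrinking $N_0$ to a neighborhood $N_1$ of radius at most $1/C$ we have $\bigl|\frac{f(x)}{f'(x)}\bigr|\le |x-x^*|$ for $x\in N_1$ (in particular $f'$ does not vanish on $N_1\setminus\{x^*\}$, which is already implicit in $T$ being well defined there). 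This plays the role that ``$|f(x)|=\Theta((x-x^*)^2)$ while $|f'(x)|=\Theta(|x-x^*|)$ near the double root'' would play --- namely $|f(x)|$ is smaller than $|f'(x)|$ by a factor $O(|x-x^*|)$.

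Next I would bound the gap between the maps. For $x\in N_1$,
\[
|\tilde T(x)-T(x)|=2|f(x)|\left|\frac{1}{f'(x)_\epsilon}-\frac1{f'(x)}\right|=\frac{2|f(x)|\,|f'(x)_\epsilon-f'(x)|}{|f'(x)|\,|f'(x)_\epsilon|}\le\frac{2\epsilon\, f(x)^2}{|f'(x)|\,|f'(x)_\epsilon|},
\]
using the hypothesis $|f'(x)_\epsilon-f'(x)|\le\epsilon|f(x)|$. The key point is that this derivative error is \emph{second order} in $|x-x^*|$: from $|f'(x)_\epsilon|\ge|f'(x)|-\epsilon|f(x)|=|f'(x)|\bigl(1-\epsilon\,|f(x)/f'(x)|\bigr)\ge|f'(x)|(1-\epsilon|x-x^*|)$ we get $|f'(x)_\epsilon|\ge\frac12|f'(x)|$ (hence $f'(x)_\epsilon$ is nonzero with the right order of magnitude) once $|x-x^*|\le\frac1{2\epsilon}$, so on the further-shrunk neighborhood $N_2$ (of radius depending on $\epsilon$),
\[
|\tilde T(x)-T(x)|\le\frac{4\epsilon\, f(x)^2}{|f'(x)|^2}=4\epsilon\left|\frac{f(x)}{f'(x)}\right|^2\le 4\epsilon(x-x^*)^2 .
\]

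Combining, for $x\in N_2$ we get $|\tilde T(x)-x^*|\le|T(x)-x^*|+|\tilde T(x)-T(x)|\le(C+4\epsilon)(x-x^*)^2$. The final step is the standard ``invariant basin'' argument: shrinking $N_2$ to radius $r$ with $(C+4\epsilon)r<1$, any $x_0$ with $|x_0-x^*|\le r$ stays in $[x^*-r,x^*+r]$ under $\tilde T$ and satisfies $|x_{n+1}-x^*|\le(C+4\epsilon)(x_n-x^*)^2$ for all $n$, i.e. the approximate iteration converges quadratically, with a rate constant that is only an $O(\epsilon)$ additive perturbation of the exact one. The crux of the argument --- and the reason the theorem holds --- is the bound $|f'(x)_\epsilon|\ge\frac12|f'(x)|$ near the double root: it is exactly the peculiar error model $|f'(x)_\epsilon-f'(x)|\le\epsilon|f(x)|$ (derivative error measured against $|f(x)|$, which near $x^*$ is a factor $O(|x-x^*|)$ smaller than $|f'(x)|$) that makes the relative derivative error $O(\epsilon|x-x^*|)\to0$; under a cruder model such as $|f'(x)_\epsilon-f'(x)|\le\epsilon$, or even $\le\epsilon|f'(x)|$, the same computation would leave a first-order-in-$|x-x^*|$ term and yield only linear convergence. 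I expect the only real care needed beyond this is bookkeeping of which neighborhood shrinkings depend on $\epsilon$, which is harmless for a purely local convergence statement.
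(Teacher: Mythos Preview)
Your argument is correct and follows the same high-level strategy as the paper: write the approximate update as the exact update plus a correction, and show the correction is $O((x-x^*)^2)$ so that the quadratic-convergence constant is only perturbed by $O(\epsilon)$. The paper arrives at essentially the same final inequality $e_{n+1}\le (L+O(\epsilon))\,e_n^2$.

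The execution differs in one useful way. The paper obtains the $O(e_n^2)$ bound on the correction by Taylor expanding $f$ about $x^*$ to get $f(x_n)\approx \tfrac12 f''(\xi)(x_n-x^*)^2$, which tacitly requires $f\in C^2$ with $f''$ bounded near $x^*$. You instead extract the single structural fact you need, $|f(x)/f'(x)|\le |x-x^*|$, directly from the \emph{assumed} quadratic convergence of the exact map $T$, and then bound $|\tilde T-T|\le 4\epsilon\,(f/f')^2\le 4\epsilon(x-x^*)^2$. This is cleaner: it uses only the stated hypotheses, avoids the extra smoothness assumption, and makes transparent why the peculiar error model $|f'(x)_\epsilon-f'(x)|\le\epsilon|f(x)|$ (rather than, say, $\le\epsilon|f'(x)|$) is exactly what is needed to keep the relative derivative error of order $|x-x^*|$. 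Your observation about the missing factor $2$ in the approximate recursion is also on point; the paper's proof silently restores it in the first line of the displayed computation.
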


\begin{proof}
First, quadratic convergence of accelerated Newton's method gives us $e_{n + 1} \leq L e_n^2$ for some constant $L$, where $e_n = x^* - x_n$ is the error for the accelerated Newton's method update, and $x_{n + 1} = x_n - 2\frac{f(x_n)}{f'(x_n)}$.

Using the Lagrange form of the Taylor series expansion, we see that
$$f(x_n) = f(x^*) + f'(x^*)(x^* - x_n) + (x^* - x_n)f''(\xi)$$
with $\xi$ between $x^k$ and $x^*$. 
Letting $x^*$ be the optimal point with $f(x^*) = 0, f'(x^*) = 0$ by multiplicity 2, we see that $f(x_k) =  (x^* - x_n)f''(\xi)$. 
Now to handle the $\epsilon$-approximate case note that
\begin{align*}
     e_{n + 1} 
     &= x^* - x_n - 2\frac{f(x^k)}{f'(x_n)_\epsilon} \\
    &\leq  x^* - x_n - 2\frac{f(x^k)}{f'(x_n)(1 + \epsilon)} \\
    &= x^* - x_n - \frac{2f(x^k)}{f'(x_n)} + \frac{2\epsilon}{1 + \epsilon}f(x_n) \\
    &\leq  L e_n^2 + \frac{2\epsilon}{1 + \epsilon}f(x_n) \\
    &\leq  L e_n^2 + \frac{2\epsilon}{1 + \epsilon} (x^* - x_n)^2f''(\xi)\\
    &\leq  \left(L + \frac{2\epsilon}{1+ \epsilon}f''(\xi)\right)e_n^2
\end{align*}
as $x_n \rightarrow x^*$, we see that this is quadratic convergence if we are sufficiently close to $x^*$ ($f''(\xi) < C f''(x^*) \forall \xi \in [x_n, x^*]$). 
\end{proof}

\section{Proof Details from Section \ref{sec:algo}}

\begin{theorem}\label{thm:harmbound}
    Given an algorithm for computing $\epsilon$-approximate soft labels and gradients for the harmonic objective of \citet{zhu2003semi} (\hyperref[summary: harmonic approx]{\textsc{ZGL03Approx}}), if the soft label function $f_u(\sigma)$ is convex and smooth, Algorithm \ref{algorithm: semi harmonic} computes $(\epsilon,\epsilon)$-approximate semi-bandit feedback for the semi-supervised loss $l(\sigma)$ %
    in time $O\left(|E_G|n\sqrt{\kappa(\cL_{UU})}\log\left(\frac{n\Delta}{\epsilon \lambda_{\min}(\cL_{UU})}\right)\log\log\frac{1}{\epsilon}\right)$, where $|E_G|$ is the number of edges in graph $G$, $\cL_{UU}=I - P_{UU}$ is the normalized grounded graph Laplacian (with labeled nodes grounded), $\Delta=\sigma_{\max} - \sigma_{\min}$ is the size of the parameter range and $\kappa(M)=\frac{\lambda_{\max}(M)}{\lambda_{\min}(M)}$ denotes the condition number of matrix $M$.
\end{theorem}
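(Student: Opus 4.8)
The plan is to fill in the three ingredients behind the proof sketch of the main-text counterpart: the reduction of approximate semi-bandit feedback to approximate root-finding, the convergence rate of the inner loop of Algorithm \ref{algorithm: semi harmonic}, and the per-iteration cost of the conjugate gradient calls. \textbf{Step 1 (reduction to root-finding).} I would first recall from \citet{balcan2021data} that, for a fixed problem instance, the dual loss $l(\sigma)$ is piecewise constant and is discontinuous at a value $\sigma^\ast$ only if $f_u(\sigma^\ast)=\tfrac12$ for some unlabeled $u$ (a label flip). Hence, to produce $(\epsilon,\epsilon)$-approximate feedback around a queried $\sigma_0$ it suffices, for each $u\in U$, to locate to within $\epsilon$ the nearest $\sigma$ above and below $\sigma_0$ at which $f_u$ crosses $\tfrac12$; the interval $[\sigma_l,\sigma_h]$ returned by Algorithm \ref{algorithm: semi harmonic} is the intersection of these per-node intervals, and the returned value is $l$ evaluated with the approximate labels at $\sigma_0$. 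I would then verify Definition \ref{def:fb} directly: the true piece containing $\sigma_0$ agrees with $[\sigma_l,\sigma_h]$ except within $\epsilon$ of each true crossing, so the exceptional set $\tilde A_t\setminus\hat A_t$ has one-dimensional volume $O(\epsilon)$; and since the approximate soft labels round to the correct binary label except within $O(\epsilon)$ of a crossing, the value error on $\hat A_t$ is at most $\epsilon$ (rescaling $\epsilon$ by a constant absorbs the $O(\cdot)$'s).

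\textbf{Step 2 (inner-loop convergence).} Fix $u$ and set $g_u(\sigma)=(f_u(\sigma)-\tfrac12)^2$, noting that a simple zero of $f_u-\tfrac12$ is a double root of $g_u$, which is exactly the setting of Theorem \ref{thm:ns}, and that convexity and smoothness of $f_u$ are what is needed to invoke Theorems \ref{thm:nst} and \ref{thm:ns}. I would split into two regimes. If $|\partial f_u/\partial\sigma|<1/(\epsilon\,\lambda_{\min}(\cL_{UU}))$ on $[\sigma_{\min},\sigma_{\max}]$, then Theorem \ref{thm:nst} (via \citet{d2008smooth}) gives that the Nesterov update with an $O(\epsilon/\Delta)$-accurate estimate of $f_u\,\partial f_u/\partial\sigma$ converges to within $\epsilon$ of a root in $O(\log\log\tfrac1\epsilon)$ iterations. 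Otherwise $|\partial f_u/\partial\sigma|$ is large; since $|f_u-\tfrac12|\le\tfrac12$, the (approximate) Newton step $\xi_{\mathrm{Newton}}$ on line \ref{algSemiHarmonic-linemin} has magnitude $O(\epsilon)$, so the stopping test $|\sigma_{n+1}-\sigma_n|<\epsilon$ is met immediately. Because Algorithm \ref{algorithm: semi harmonic} uses $\min\{\xi_{\mathrm{GD}},\xi_{\mathrm{Newton}}\}$, the loop terminates in $O(\log\log\tfrac1\epsilon)$ iterations in either regime, and Theorem \ref{thm:ns} guarantees that the approximate Newton branch retains quadratic convergence near a double root, so the iteration count is not lost there.

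\textbf{Step 3 (cost accounting).} Each inner iteration makes one call to Algorithm \ref{algorithm: harmonic approx} at the current $\sigma_n$; by Theorem \ref{thm:harmapprox}, obtaining the required $O(\epsilon/\Delta)$-accurate value of $f_u\,\partial f_u/\partial\sigma$ takes $O(\sqrt{\kappa(\cL_{UU})}\log(n\Delta/(\epsilon\,\lambda_{\min}(\cL_{UU}))))$ conjugate gradient iterations, and each CG iteration is dominated by a constant number of sparse matrix--vector products against matrices with $O(|E_G|)$ nonzeros, i.e.\ $O(|E_G|)$ time. Multiplying over the $O(n)$ unlabeled nodes, the $O(\log\log\tfrac1\epsilon)$ inner iterations per node, the $O(\sqrt{\kappa(\cL_{UU})}\log(\cdot))$ CG iterations per inner iteration, and the $O(|E_G|)$ cost per CG iteration yields the stated runtime.

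\textbf{Main obstacle.} The delicate part is the large-gradient regime of Step 2: one must argue not only that the approximate Newton step is small enough to trigger the stopping rule, but also that the endpoint at which the loop halts is genuinely $\epsilon$-close to a true $\tfrac12$-crossing (equivalently, that no crossing lies strictly between $\sigma_0$ and that endpoint), so the interval reported in Step 1 actually satisfies Definition \ref{def:fb}. One must also reconcile the two convergence guarantees (Theorems \ref{thm:nst}, \ref{thm:ns}) with the momentum bookkeeping on line \ref{algSemiHarmonic-linenestupdate} and the $\min$ rule on line \ref{algSemiHarmonic-linemin}, and carry the various constant-factor rescalings of $\epsilon$ through so that the conclusion is cleanly $(\epsilon,\epsilon)$-approximate feedback. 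The CG accounting in Step 3 is routine once Theorem \ref{thm:harmapprox} is in hand.
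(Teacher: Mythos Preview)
Your proposal is correct and follows essentially the same route as the paper's own proof: reduce to locating the $\tfrac12$-crossings of $f_u$, use the convergence guarantees of Theorems \ref{thm:nst} and \ref{thm:ns} to get $O(\log\log\tfrac1\epsilon)$ inner iterations, handle the large-gradient regime by observing the Newton step is already $O(\epsilon)$, and then multiply out the CG cost from Theorem \ref{thm:harmapprox} by $O(|E_G|)$ per iteration and $O(n)$ nodes. You are in fact slightly more careful than the paper in explicitly tying the output back to Definition \ref{def:fb}, and the obstacle you flag (that termination in the large-gradient regime does not by itself certify proximity to a true crossing) is real and is glossed over in the paper as well; the paper simply asserts termination and moves on.
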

\begin{proof}

As in \citet{balcan2021data}, note that any boundary $\sigma_{\min}$ or $\sigma_{\max}$ must have some $f_{u}(\sigma) = \frac{1}{2}$. Algorithm \ref{algorithm: semi harmonic} finds these boundary pieces by finding roots/zeros of $\left(f_u(\sigma) - \frac{1}{2}\right)^2$. As noted in Theorems C.1 and C.2, both Nesterov's and Newton's descent methods have quadratic convergence, so at every update step in algorithm \ref{algorithm: semi harmonic} (lines 12 and 15), we converge quadratically, leading to $\log \log (\frac{1}{\epsilon})$ update steps needed to satisfy $|\sigma_\epsilon^* - \sigma^*| < \epsilon$, where $\sigma^*$ is the root with $g_u(\sigma^*) = 0$.\\\\
In Theorems \ref{thm:harmapprox} and \ref{thm:delapprox}, we assumed that $|\frac{\partial f}{\partial \sigma}| < \frac{1}{\epsilon \lambda_{\min}(G)}$ for some graph $G$. Consider this is not the case. We examine the Newton update, which is an upper bound on the size of the update step used as our update uses the minimum of Newton and Nesterov steps: 
\begin{align*}
     2 \cdot &\frac{g_u(\sigma)}{g'_u(\sigma)} 
     = 2 \cdot \frac{(f_u(\sigma) - 1/2)^2}{2 \cdot (\partial f/\partial \sigma)(f_u(\sigma) - 1/2)} \\
      &= \frac{(f_u(\sigma) - 1/2)}{(\partial f / \partial \sigma)} \\
     &< \epsilon \lambda_{\min}(G)(f_u(\sigma) - 1/2) &&(\because |{\partial f}/{\partial \sigma}| > {1}/{\epsilon \lambda_{\min}(G)})\\
     &< \epsilon &&(\because\text{$f_u(\sigma)\le1/\lambda_{\min}$, cf. Thms \ref{thm:harmapprox} and \ref{thm:delapprox}}). \\
\end{align*}
Thus in this case the update step is less than $\epsilon$, and we will terminate after one subsequent step.

 As noted in Theorem \ref{thm:harmapprox}, we need $O\left(\sqrt{\kappa(\cL_{UU})}\log\left(\frac{n}{\epsilon' \lambda_{\min}(\cL_{UU})}\right)\right)$ CG steps to reach an $\epsilon'$ approximation of $f \frac{\partial f}{\partial \sigma}$. Theorem \ref{thm:nst} states that we need $\epsilon' = O\left(\frac{\epsilon}{\Delta}\right)$ to find an $\epsilon$ approximation of the root $\sigma^*$, so this takes complexity 

$$O\left(\sqrt{\kappa(\cL_{UU})}\log\left(\frac{n\Delta}{\epsilon \lambda_{\min}(\cL_{UU})}\right)\right).$$

Running a single iteration of the conjugate gradient method requires a constant number of matrix-vector products of form $Ax$, where $A$ is the weighted adjacency matrix for graph $G$. This computation takes $O(|E_G|)$ time. %
Finally, we run this algorithm for each of the $n$ points, leading to an overall time complexity of 

$$O\left(|E_G|n\sqrt{\kappa(\cL_{UU})} \log\left(\frac{n\Delta}{\epsilon \lambda_{\min}(\cL_{UU})}\right)\log\log\frac{1}{\epsilon}\right).$$

If $G$ is the complete graph, $|E_G| \in O(n^2)$. If $G$ is a kNN graph for some fixed $k$, then $|E_G| = kn \in O(n)$.  
\end{proof}

\section{Experiment Details and Insights}

We include further experimental details below, including implementation and insights into further challenges as well as potential future work.

\begin{figure}
\centering
\subfloat[Algorithm \ref{algorithm: harmonic approx} \\ $|U| = 100$]{\includegraphics[width = 2in]{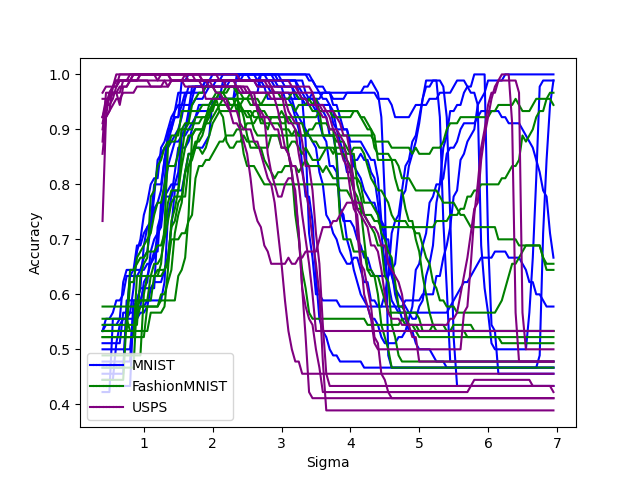}}
\subfloat[Algorithm \ref{algorithm: harmonic approx} (kNN) \\ $|U| = 100$]{\includegraphics[width = 2in]{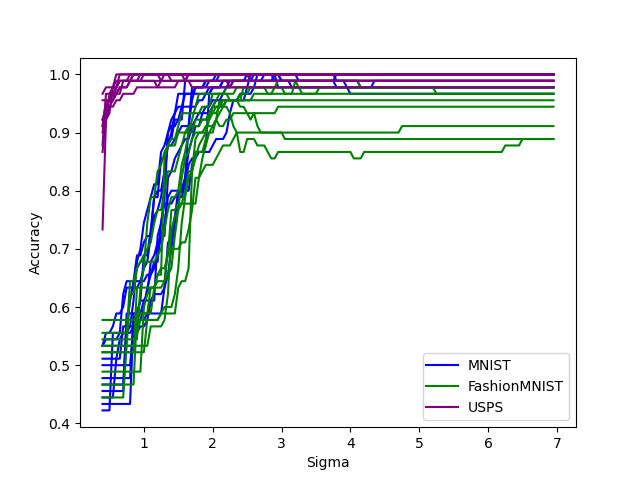}}
\subfloat[Algorithm \ref{algorithm: delalleau approx} (kNN) \\ $|\Tilde{U}| = 100, |U| = 1000$]{\includegraphics[width = 2in]{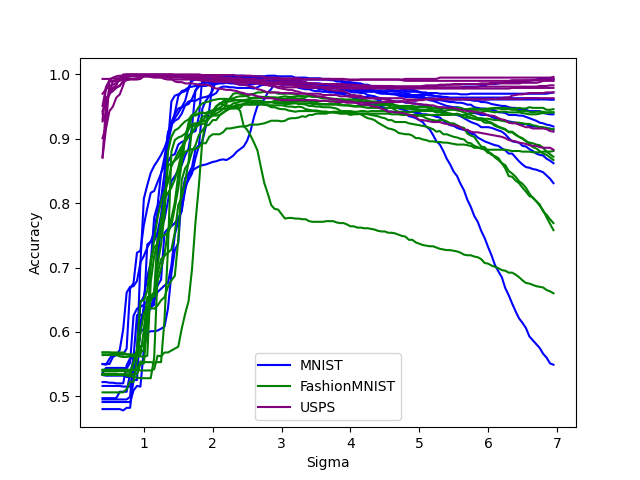}}
\caption{Accuracy values across $\sigma$ for different approaches using the CG Method with 20 iterations.}
\label{fig:intervals}
\end{figure}

\begin{figure}
\centering
\subfloat[MNIST]{\includegraphics[width = 2in]{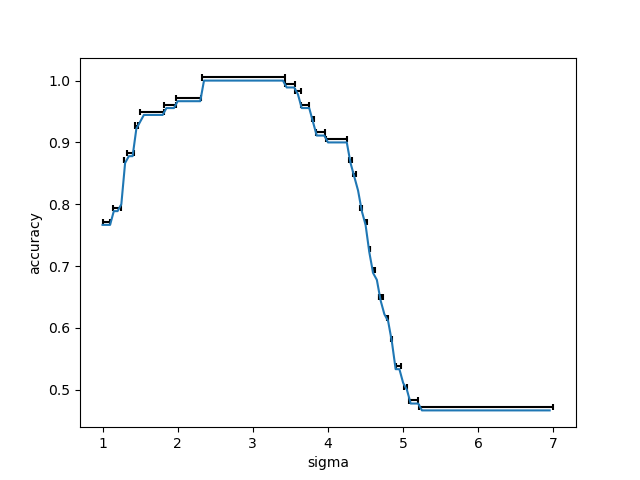}}
\subfloat[Fashion-MNIST]{\includegraphics[width = 2in]{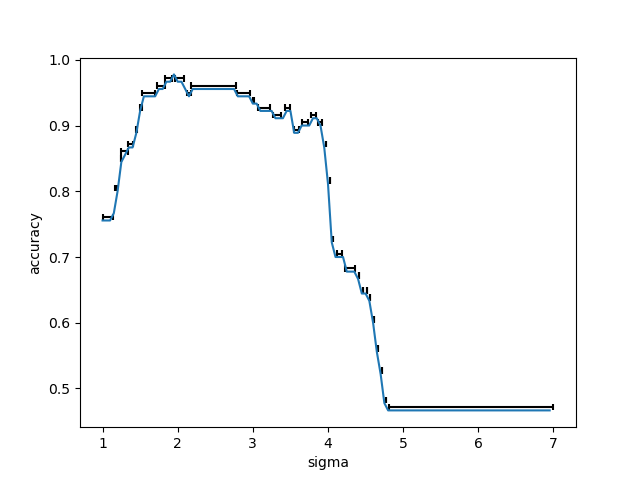}}
\subfloat[USPS]{\includegraphics[width = 2in]{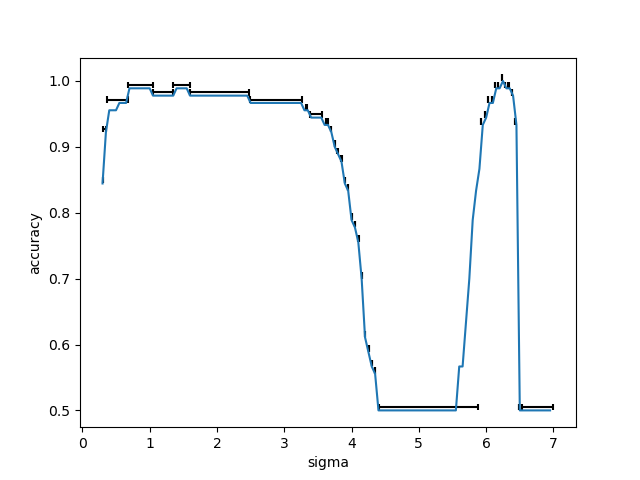}}
\caption{Interval calculation with labeling via Algorithm \ref{algorithm: harmonic approx} (complete graph) $|U| = 100$.}
\label{fig:naive}
\end{figure}

\begin{figure}
\centering
\subfloat[MNIST]{\includegraphics[width = 2in]{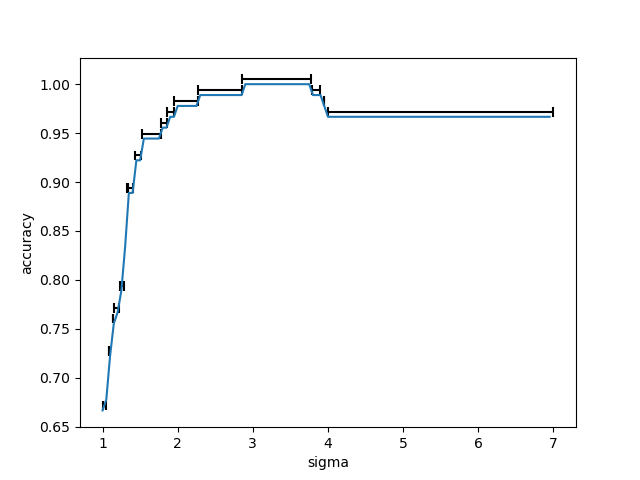}}
\subfloat[Fashion-MNIST]{\includegraphics[width = 2in]{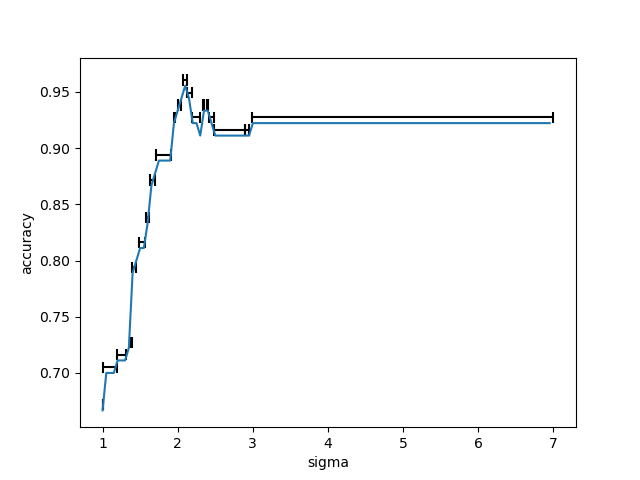}}
\subfloat[USPS]{\includegraphics[width = 2in]{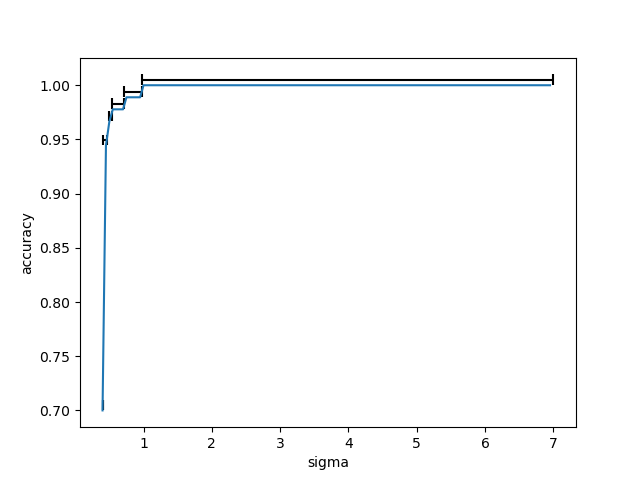}}
\caption{Interval calculation with labeling via Algorithm \ref{algorithm: harmonic approx}, kNN with $k=6$, $|U| = 100$.}
\label{fig:kNN}
\end{figure}

\begin{figure}[b]
\subfloat[$\sqrt{\kappa(I - P_{UU})}$ (\ref{thm:harmboundmain})]{\includegraphics[width = 1.6in, height=1.32in]{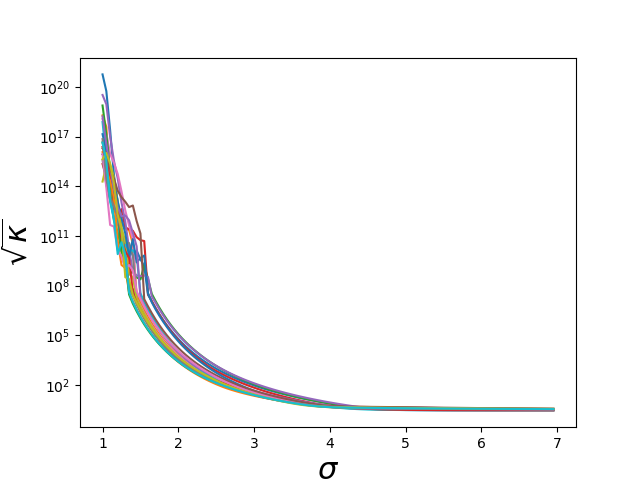}} 
\subfloat[$\sqrt{\kappa(I - P_{UU})}$ (\ref{thm:harmboundmain}, kNN)]{\includegraphics[width = 1.6in, height=1.32in]{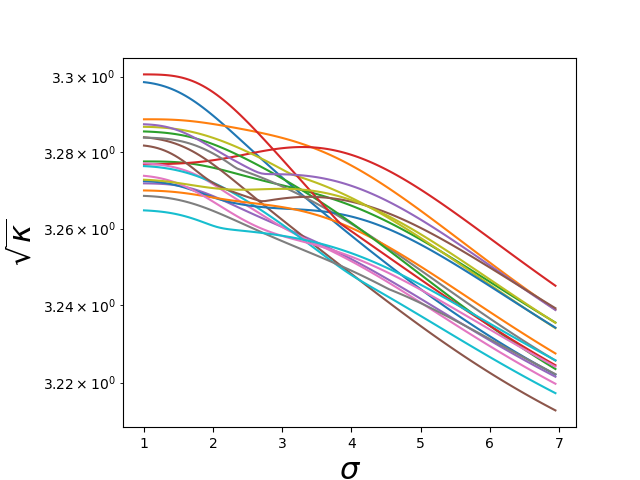}} 
\subfloat[$\sqrt{\kappa(A)}$ (\ref{thm:delbound})]{\includegraphics[width = 1.6in]{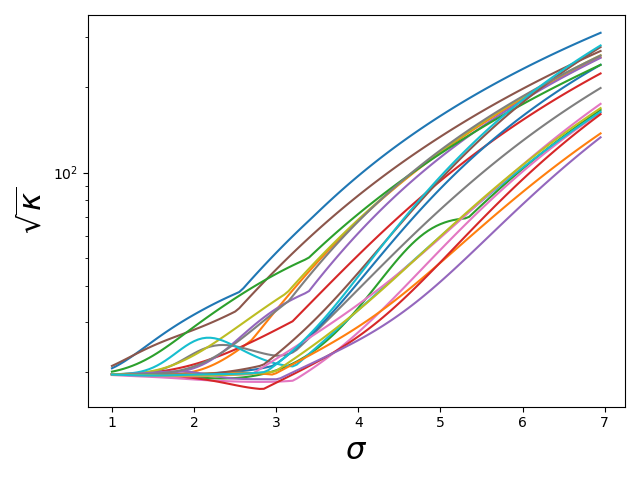}} 
\subfloat[$\sqrt{\kappa(A)}$ (\ref{thm:delbound}, kNN)]{\includegraphics[width = 1.6in]{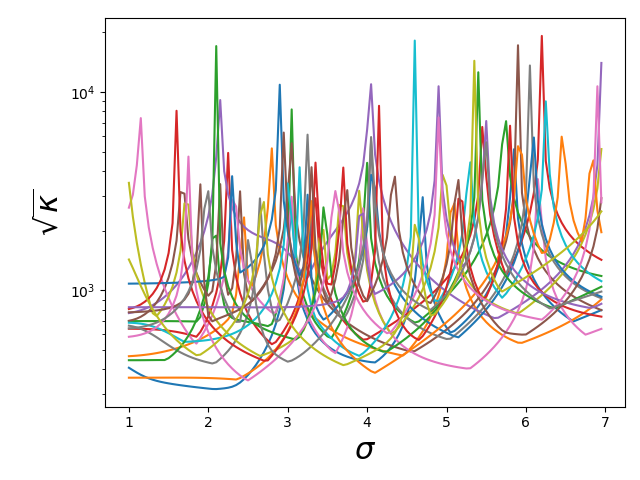}} 
\caption{Condition numbers for matrices from Theorems \ref{thm:harmboundmain} and \ref{thm:delbound} for MNIST subsets size 100.}
\label{fig:kappa}
\end{figure}

\subsection{Implementation Details} 
For all experiments, we consider 10 random subsets of datasets MNIST, FashionMNIST, and USPS. We will consider a bounded parameter domain to avoid highly ill-conditioned graph matrices (Figure \ref{fig:kappa}). We pick $\sigma_{\text{min}}$ based on behavior of graph condition numbers for low $\sigma$ values, where $\sigma_{\text{min}}$ is 1 for MNIST and FashionMNIST using the CG method, and .4 for USPS using the CG method. We keep $\sigma_{\text{max}} = 7$ for all experiments. We find that Algorithm \ref{algorithm: semi harmonic} does not produce valid intervals when condition number is high, and note that ill-conditioned graphs lead to low accuracy. In Figure \ref{fig:intervals}, we see that USPS has higher accuracy values in range $[.4,1]$ while MNIST and FashionMNIST do not display optima until later $\sigma$ values. We find that computing a full matrix inverse is less stable than the CG method for low $\sigma$ values, and use $\sigma_{\text{min}} = 2$ for full inverse interval calculation. We keep $\sigma_{\text{min}} = 1$ always when calculating average number of intervals overall in Table \ref{table:regularIntervals} %
in order to compare number of intervals on the same range ([1,7]) for all problem instances. 

   \noindent In order to find intervals, we begin with $\sigma_0 = \sigma_{\text{min}}$. Once interval $[\sigma_{l}, \sigma_{u}]$ is calculated for $\sigma_0 = \sigma_{\text{min}}$, we let $\sigma_0^{(1)} = \sigma_{u} + \text{step}$ as the next initial $\sigma$. Here we use $\text{step} = .05, \epsilon = 1e-4, \eta = 1$, where $\epsilon$ and $\eta$ are used as in Algorithm \ref{algorithm: semi harmonic}. We also consider algorithmic optimizations to speedup runtime and improve performance of Algorithm \ref{algorithm: semi harmonic}, which can be found in Appendix \ref{sec:algopt}.  

\begin{figure}[b]
\centering
\subfloat[MNIST]{\includegraphics[width = 2in]{./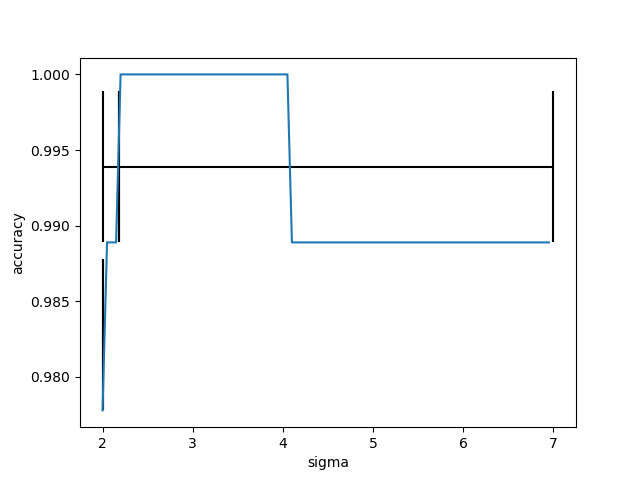}}
\subfloat[Fashion-MNIST]{\includegraphics[width = 2in]{./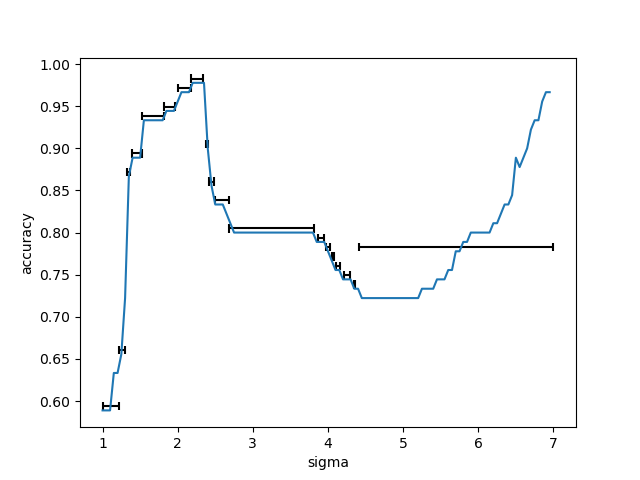}}
\subfloat[USPS]{\includegraphics[width = 2in]{./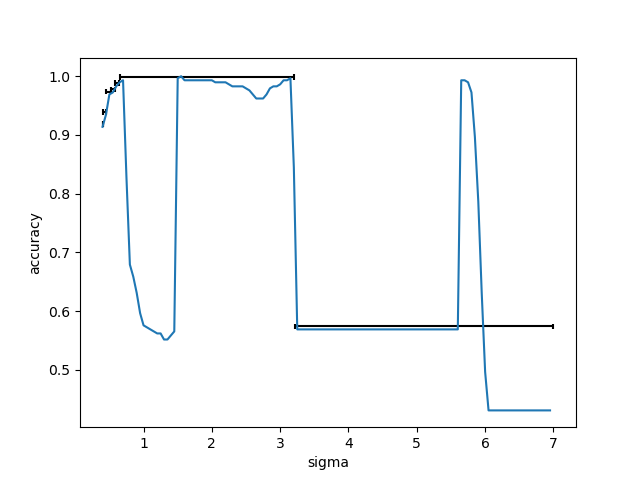}}
\caption{Challenging cases for algorithm}
\label{fig:failure}
\end{figure}

\subsection{Algorithm Optimization}\label{sec:algopt}
\noindent A few optimizations of Algorithm \ref{algorithm: semi harmonic} were used in practice. First, note that if we update the left endpoint $\sigma_l$ of the piecewise constant interval containing $\sigma_0$ on line \ref{algSemiHarmonic-lineupdateint}, then we need not consider any root $\sigma_{l'}$ with $\sigma_{l'} < \sigma_l$, as it will not change our current left endpoint of the interval. As a result, we can stop the while loop on line \ref{algSemiHarmonic-linewhile} of Algorithm \ref{algorithm: semi harmonic} if we leave current interval range $[\sigma_l, \sigma_h]$ at any point in the algorithm. Second, we change the while loop on line \ref{algSemiHarmonic-linewhile} to:
$$|\sigma_{n + 1} - \sigma_n| \ge \epsilon \text{ OR } |f_u(\sigma_u)_{n + 1} - f_u(\sigma_u)_n| \ge \epsilon $$
Noting that $f$ values can go through very short periods of high change as evidenced by Figure \ref{fig: gd+ns}, if $f$ values have some large change in a given step, then we may be making progress towards a critical point even if $\sigma$ has not changed drastically. Further, if both quantities are under $\epsilon$ yet we have a label that is not close to .5, we stop the algorithm prematurely without having found a critical point.

\noindent Finally, for $k$-nearest neighbor graphs, we note that the Gaussian kernel preserves order across $\sigma$, i.e. 
$$d(a,b) < d(c,d) \implies e^{-\frac{d(a,b)^2}{\sigma^2}} > e^{-\frac{d(c,d)^2}{\sigma^2}} \quad\forall\; \sigma \in (0, \infty).$$

\begin{table}[t]
\centering
\begin{tabular}{ |c|c|c|c| } 
\hline
\multirow{2}{4em}{Dataset} & \multirow{2}{2em}{Size} & 
\multirow{2}{9em}{Time per Inverse (s), Full Inverse} & 
\multirow{2}{8em}{Optimal Accuracy, Full  Inverse} \\
&&& \\\hline
\multirow{3}{4em}{MNIST}
& 500 & 0.1285 & 0.9988 \\
& 1000 & 0.2248 & 0.9991 \\
& 2000 & 0.5528 & 0.9986  \\ 
\hline
\multirow{3}{4em}{Fashion-MNIST}
& 500 & 0.1275 & 0.9502  \\
& 1000 & 0.2312 & 0.9775 \\
& 2000 & 0.5454 & 0.9570   \\
\hline
\multirow{3}{4em}{USPS}
& 500 & 0.1445 & 0.9998   \\
& 1000 & 0.2230 & 0.9997  \\
& 2000 & 0.5437 & 1.0   \\
\hline
\end{tabular}   
\caption{Optimal Accuracy/Average Time computing matrix inverse via Algorithm \ref{algorithm: harmonic approx} (Averaged over 10 samples).}
\label{table:invtimefull}
\end{table}

\begin{table}[t]
\centering
\begin{tabular}{ |c|c|c|c| } 
\hline
\multirow{2}{4em}{Dataset} & \multirow{2}{2em}{Size} & 
\multirow{2}{9em}{Time per Inverse (s), Full Inverse} & 
\multirow{2}{8em}{Optimal Accuracy, Full  Inverse} \\
&&& \\
\hline
\multirow{3}{4em}{MNIST}
& 500 & 0.1235 & 0.999\\
& 1000 & 0.2181 & 0.9993\\
& 2000 & 0.5354 & 0.9992\\
\hline
\multirow{3}{4em}{Fashion-MNIST}
& 500 & 0.1299 & 0.9637\\
& 1000 & 0.2244 & 0.9638\\
& 2000 & 0.5337 & 0.9683\\
\hline
\multirow{3}{4em}{USPS}
& 500 & 0.1254 & 0.9998\\
& 1000 & 0.2189 & 0.9998\\
& 2000 & 0.5411 & 1.0\\
\hline
\end{tabular}   
\caption{Optimal Accuracy/Average Time computing matrix inverse \textbf{with kNN=6} via Algorithm \ref{algorithm: harmonic approx} (Averaged over 10 samples).}
\label{table:invtimefull}
\end{table}

As a result, we only need to calculate $k$-nearest neighbors once when finding an interval centered around $\sigma_0$. After a kNN mask is computed for $W_{\sigma}$, it can then be used for any subsequent $W_{\sigma'}$. When analyzing the time to compute all intervals in a given range, this could be computed once for all starting points, but since we were interested in time for each interval, we calculated the mask for every interval.

\subsection{Challenging Cases}

There were certain challenging problem instances associated with Algorithm \ref{algorithm: semi harmonic}. First, we considered using gradient descent methods with adaptive step size \citep{VRAHATIS2000367} to combat the issue of very different gradient values at different $\sigma$ values, but we found this method to be ineffective for our specific task. Some of the datasets would return singular matrices or non-convergent matrices for very low $\sigma$ values, leading to interval calculation needing to be started at some later point, as mentioned in Section \ref{sec:FeedbackSet}. Similarly, we find that the algorithm was more likely to miss intervals for very high values of $\sigma$ ($\geq 6$) as seen in Figure \ref{fig:failure}. This could be due to very different or lower condition numbers as compared to earlier $\sigma$ values as evidenced by Figure \ref{fig:kappa}. One solution could be updating the learning rate $\eta$ as a function of condition number or $\sigma$ value. We also find an outlier graph in the kNN Delalleau graph family displayed in Figure \ref{fig:kappa} that behaved similarly to the harmonic minimizer graphs for low $\sigma$. In addition, we find that the algorithm may not be able to correctly find the rightmost point of very long piecewise intervals (size 3 or more), as the descent algorithms has trouble finding critical points that are very far from the initial $\sigma$.

\begin{table}[t]
\centering
\begin{tabular}{ |c|c|c|c|c|c|c|c| } 
\hline
\multirow{2}{4em}{Dataset} & \multirow{2}{2em}{Size} & 
\multirow{2}{5em}{Time,\\ CG, $t=5$ } & 
\multirow{2}{5em}{Time,\\ CG, $t=10$} & 
\multirow{2}{5em}{Time,\\ CG, $t=20$} & 
\multirow{2}{5em}{Accuracy, CG, $t=5$} &
\multirow{2}{5em}{Accuracy, CG, $t=10$} &
\multirow{2}{5em}{Accuracy, CG, $t=20$} \\
&&&&&&& \\
\hline
\multirow{3}{4em}{MNIST}
& 500 & 0.004 & 0.0041 & 0.004 & 0.9971 & 0.9971 & 0.9971\\
& 1000 & 0.0058 & 0.0058 & 0.006 & 0.9958 & 0.9958 & 0.9958\\
& 2000 & 0.0238 & 0.0234 & 0.0234 & 0.9956 & 0.9956 & 0.9956\\
\hline
\multirow{3}{4em}{Fashion-MNIST}
& 500 & 0.0041 & 0.004 & 0.004 & 0.9561 & 0.9561 & 0.9561\\
& 1000 & 0.0058 & 0.0059 & 0.0059 & 0.9544 & 0.9544 & 0.9544\\
& 2000 & 0.0235 & 0.0256 & 0.0241 & 0.9579 & 0.9579 & 0.9579\\
\hline
\multirow{3}{4em}{USPS}
& 500 & 0.0041 & 0.0041 & 0.004 & 0.9945 & 0.9945 & 0.9945\\
& 1000 & 0.0058 & 0.0056 & 0.006 & 0.9989 & 0.9989 & 0.9989\\
& 2000 & 0.0234 & 0.0234 & 0.0235 & 0.9792 & 0.9792 & 0.9792\\
\hline
\end{tabular}   
\caption{Optimal Accuracy/Average Time computing approximate matrix inverse via Algorithm \ref{algorithm: harmonic approx} (Averaged over 10 samples).}
\label{table:CGtimefull}
\end{table}

\begin{table}[h]
\centering
\begin{tabular}{ |c|c|c|c|c|c|c|c| } 
\hline
\multirow{2}{4em}{Dataset} & \multirow{2}{2em}{Size} & 
\multirow{2}{5em}{Time,\\ CG, $t=5$ } & 
\multirow{2}{5em}{Time,\\ CG, $t=10$} & 
\multirow{2}{5em}{Time,\\ CG, $t=20$} & 
\multirow{2}{5em}{Accuracy, CG, $t=5$} &
\multirow{2}{5em}{Accuracy, CG, $t=10$} &
\multirow{2}{5em}{Accuracy, CG, $t=20$} \\
&&&&&&& \\
\hline
\multirow{3}{4em}{MNIST}
& 500 & 0.0036 & 0.0034 & 0.0034 & 0.9988 & 0.9988 & 0.9988\\
& 1000 & 0.0052 & 0.005 & 0.0049 & 0.9865 & 0.9865 & 0.9865\\
& 2000 & 0.0224 & 0.0225 & 0.0225 & 0.9754 & 0.9754 & 0.9754\\
\hline
\multirow{3}{4em}{Fashion-MNIST}
& 500 & 0.0033 & 0.0034 & 0.0034 & 0.9692 & 0.9692 & 0.9692\\
& 1000 & 0.0053 & 0.0053 & 0.0052 & 0.9714 & 0.9714 & 0.9714\\
& 2000 & 0.0224 & 0.0224 & 0.0225 & 0.9723 & 0.9723 & 0.9723\\
\hline
\multirow{3}{4em}{USPS}
& 500 & 0.0033 & 0.0034 & 0.0035 & 1.0 & 1.0 & 1.0\\
& 1000 & 0.0048 & 0.0049 & 0.0048 & 1.0 & 1.0 & 1.0\\
& 2000 & 0.0225 & 0.0225 & 0.0226 & 0.9943 & 0.9943 & 0.9943\\
\hline
\end{tabular}   
\caption{Optimal Accuracy/Average Time computing approximate matrix inverse via Algorithm \ref{algorithm: harmonic approx} \textbf{with kNN=6} (Averaged over 10 samples)}
\label{table:CGtimekNN}
\end{table}

\subsection{Further Directions}
While we used our method on two algorithms, namely the SSL approaches proposed by \cite{delalleau2005efficient} and \cite{zhu2003semi}, our method could be extended to other SSL labeling schemes that allow for a derivative $\frac{\partial f}{\partial \sigma}$ to be taken, where $f$ is the labeling function and $\sigma$ is a hyperparameter. \\
One technique not explored in this work is anchor graph regularization \citep{liu2010large}. In this method, a set of points is chosen to be "anchor points". These points are then labeled, and all other points are labeled via some weighted combination of the labels of the anchor points. Since these anchor points are chosen via a k-means clustering idea, in order to calculate $\frac{\partial f}{\partial \sigma}$, it is necessary to determine how the cluster centers, or anchor points, of a graph change as the parameter $\sigma$ changes. We leave this as a further direction. Another SSL technique that can be explored as a further direction is the use of leading eigenvectors as "anchor points" \citep{sinha2009semi}. In this method, a lasso least squares approximation is used with respect to certain eigenvectors of the kernel matrix to create a point classifier. While the lasso least squares method has a closed form solution, it is necessary to determine how these eigenvectors change as a function of $\sigma$. It is possible that this change could be approximated fast using eigenvector approximation techniques like Lanczos algorithm, implementation and verification is an interesting candidate for further research. 

While we use the time saved to run Algorithm \ref{algorithm: harmonic approx} faster by computing inverses from \cite{delalleau2005efficient} and \cite{zhu2003semi} quickly, it could also be used directly with results from any graph based SSL technique that employs radial basis kernels and takes inverses. Further, since it is standard to use kNN graphs when doing matrix inverses as in \cite{delalleau2005efficient} and \cite{zhu2003semi}, this would speedup an inverse on an $m \times m$ matrix from $O(m^3)$ to $O(m)$ for an $\epsilon$ approximation. As a result, larger subsets of data could be inverted. It has been shown that accuracy increases as more data is used for inversion \citep{delalleau2005efficient, zhu2003semi}, and we have shown that we can match optimal accuracy of full matrix inversion across hyperparameters with the CG-method. This signifies that using a larger dataset and finding approximate inverses with the CG method could lead to higher accuracy than using a smaller dataset and taking exact inverses.

We notice that larger datasets lead to much smaller interval sizes when using Algorithm \ref{algorithm: delalleau approx}. One way to combat this could be to find $(\epsilon, \epsilon)$-approximate intervals whose accuracy values all fall within some $\delta$ of each other, as opposed to being piecewise constant. In this way, we could find larger accuracy-approximate intervals, which then allow for a faster search of range $[\sigma_{\min}, \sigma_{\max}]$. 

\end{document}